\newtheorem{theorem}{Theorem}
\newtheorem{lemma}{Lemma}
\newtheorem{remark}{Remark}
\newtheorem{corollary}{Corollary}
\newlist{enumerate*}{enumerate*}{1}
\setlist[enumerate*]{label=(\arabic*)}
\newcommand{\transpose}{^{\intercal}}
\newcommand{\vect}{\boldsymbol} 
\newcommand{\tr}{\mbox{tr}}
\newcommand{\ben}{\begin{eqnarray}}
\newcommand{\een}{\end{eqnarray}}
\newcommand{\pen}{\textsf{pen}}
\newcommand{\myvec}{\mbox{vec}}
\newcommand{\diag}{\mbox{diag}}
\newcommand{\cA}{\mathcal{A}}
\title{Poisson Matrix Recovery and Completion}
\author{Yang Cao,\thanks{Yang Cao
    (Email: caoyang@gatech.edu) and Yao Xie (Email: yao.xie@isye.gatech.edu)
   are with the H. Milton Stewart School of
    Industrial and Systems Engineering, Georgia Institute of
    Technology, Atlanta, GA.
   This work is partially supported by NSF grant CCF-1442635 and CMMI-1538746. Parts of the paper have appeared in GlobalSIP 2013, GlobalSIP 2014, and ISIT 2015.}
    \quad \and Yao Xie 
} \date{\today}
\newcommand{\yx}[1]{{\color{black}{#1}}}
\newcommand{\yc}[1]{{\color{black}{#1}}}
\begin{document}
\maketitle

\begin{abstract}

We extend the theory of low-rank matrix recovery and completion to the case when Poisson observations for a linear combination or a subset of the entries of a matrix are available, which arises in various applications with count data. We consider the usual matrix recovery formulation through maximum likelihood with proper constraints on the matrix $M$ of size $d_1$-by-$d_2$, and establish theoretical upper and lower bounds on the recovery error. Our bounds for matrix completion are nearly optimal up to a factor on the order of $\mathcal{O}(\log(d_1 d_2))$. These bounds are obtained by combing techniques for compressed sensing for sparse vectors with Poisson noise and for analyzing low-rank matrices, as well as adapting the arguments used for one-bit matrix completion \cite{davenport20121} (although these two problems are different in nature) and the adaptation requires new techniques exploiting properties of the Poisson likelihood function and tackling the difficulties posed by the locally sub-Gaussian characteristic of the Poisson distribution. Our results highlight a few important distinctions of the Poisson case compared to the prior work including having to impose a minimum signal-to-noise requirement on each observed entry and a gap in the upper and lower bounds. We also develop a set of efficient iterative algorithms and demonstrate their good performance on synthetic examples and real data.

\end{abstract}

\begin{IEEEkeywords}
low-rank matrix recovery, matrix completion, Poisson noise, estimation, information theoretic bounds
\end{IEEEkeywords}

\section{Introduction}

Recovering a low-rank matrix $M$ with Poisson observations is a key problem that arises from various real-world applications with count data, such as nuclear medicine, low-dose x-ray imaging \cite{brady2009optical}, network traffic analysis \cite{poissonGBG2013}, and call center data \cite{ShenHuang2005}.  There the observations are Poisson counts whose intensities are determined by the matrix, either through a subset of its entries or linear combinations of its entries.

Thus far much success has been achieved in solving the matrix completion and recovery problems using nuclear norm minimization, partly inspired by the theory of compressed sensing \cite{CandesTao2006,donoho2006compressed}. It has been shown that when $M$ is low rank, it can be recovered from observations of a subset or a linear combination of its entries (see, e.g.\cite{candes2009exact, keshavan2010matrix, candes2010power, dai2010set, recht2010guaranteed, recht2011simpler, cai2010singular, lin2009fast, mazumder2010spectral}).
Earlier work on matrix completion typically assume that the observations are noiseless, i.e., we may directly observe a subset of entries of $M$. In the real world, however, the observations are noisy, which is the focus of  the subsequent work \cite{keshavan2009matrix, candes2010matrix, negahban2011estimation, negahban2012restricted,rohde2011estimation,sonierror}, most of which consider a scenario when the observations are contaminated by Gaussian noise. The theory for low-rank matrix recovery under Poisson noise has been less developed. Moreover, the Poisson problems are quite different from their Gaussian counterpart, since under Poisson noise the variance of the noisy observations is proportional to the signal intensity. Moreover, instead of using $\ell_2$ error for data fit, we need to use a highly non-linear likelihood function.

Recently there has also been work that consider the more general noise models, including noisy 1-bit observations \cite{davenport20121}, which may be viewed as a case where the observations are Bernoulli random variables whose parameters depend on a underlying low-rank matrix; \cite{soni2014noisy, soniestimation}  consider the case where {\it all} entries of the low-rank matrix are observed and the observations are Poisson counts of the entries of the underlying matrix, and an upper bound is established (without a lower bound). In the compressed sensing literature, there is a line of research for sparse signal recovery in the presence of Poisson noise \cite{raginsky2010compressed, raginsky2011performance,jiang2014minimax} and the corresponding performance bounds. The recently developed SCOPT \cite{SCOPT13, SCOPT_journal} algorithm can also be used to solve the Poisson compressed sensing of sparse signals but may not be directly applied for Poisson matrix recovery.

In this paper, we extend the theory of low-rank matrix recovery to two related problems with Poisson observations: matrix recovery from compressive measurements, and matrix completion from observations of a subset of its entries. The matrix recovery problem from compressive measurements is formulated as a regularized maximum likelihood estimator with Poisson likelihood. We establish performance bounds by combining techniques for recovering sparse signals under Poisson noise \cite{raginsky2010compressed} and for establishing bounds in the case of low-rank matrices \cite{PlanThesis2011, candes2011tight}. Our results demonstrate that as the intensity of the signal increases, the upper bound on the normalized error decays at certain rate depending how well the matrix can be approximated by a low-rank matrix.

The matrix completion problem from partial observations is formulated as a maximum likelihood problem with proper constraints on the matrix $M$ (nuclear norm bound $\|M\|_* \leq \alpha\sqrt{r d_1 d_2}$ for some constant $\alpha$ and bounded entries $\beta \leq M_{ij}  \leq\alpha$)\footnote{Note that the formulation differs from the one-bit matrix completion case in that we also require a lower bound on each entry of the matrix. This is consistent with an intuition that the value of each entry can be viewed as the signal-to-noise ratio (SNR) for a Poisson observation, and hence this essentially poses a requirement for the minimum SNR.}.
We also establish upper and lower bounds on the recovery error, by adapting the arguments used for one-bit matrix completion \cite{davenport20121}. The upper and lower bounds nearly match up to a factor on the order of $\mathcal{O}(\log(d_1 d_2))$, which shows that the convex relaxation formulation for Poisson matrix completion is nearly optimal. We conjecture that such a gap is inherent to the Poisson problem in the sense that \yx{it may not be an artifact due to our proof techniques.}
Moreover, we also highlight a few important distinctions of Poisson matrix completion compared to the prior work on matrix completion in the absence of noise and with Gaussian noise:  (1) Although our arguments are adapted from one-bit matrix completion (where the upper and lower bounds nearly match), in the Poisson case there will be a gap between the upper and lower bounds, possibly due to the fact that Poisson distribution is only locally sub-Gaussian. In our proof, we notice that the arguments based on bounding all moments of the observations, which usually generate tight bounds for prior results with sub-Gaussian observations, do not generate tight bounds here; (2) We will need a lower bound on each matrix entry in the maximum likelihood formulation, which can be viewed as a requirement for the lowest signal-to-noise ratio (since the signal-to-noise ratio (SNR) of a Poisson observation with intensity $I$ is $\sqrt{I}$).

We also present a set of efficient algorithms, which can be used for both matrix recovery based on compressive measurements or based on partial observations. These algorithms include two generic (gradient decent based) algorithms: the proximal and accelerated proximal gradient descent methods, and an algorithm tailored to Poisson problems called the Penalized Maximum Likelihood Singular Value Threshold (PMLSVT) method. PMLSVT is derived by expanding the likelihood function locally in each iteration, and finding an exact solution to the local approximation problem which results in a simple  singular value thresholding procedure \cite{cai2010singular}. The performance of the two generic algorithms are analyzed theoretically. PMLSVT  is related to \cite{ji2009accelerated, wainwright2014structured, agarwal2010fast} and can be viewed as a special case where a simple closed form solution for the algorithm exists. Good performance of the PMLSVT is demonstrated with synthetic and real data including solar flare images and bike sharing count data. We show that the PMLSVT method has much lower complexity than solving the problem directly via semidefinite program and it has fairly good accuracy.

While working on this paper we realize a parallel work \cite{lafond2015low} which also studies performance bounds for low rank matrix completion with exponential family noise and using a different approach for proof (Poisson noise is a special case of theirs). Their upper bound for the mean square error (MSE) is on the order of $\mathcal{O}\left(\log(d_1 + d_2) r\max\{d_1, d_2\}/m\right)$ (our upper bound is $\mathcal{O}\left(\log(d_1 d_2)[r(d_1+d_2)/m]^{1/2}\right)$), and their lower bound is on the order of $\mathcal{O}\left(r\max\{d_1, d_2\}/m\right)$ (versus our lower bound is $\mathcal{O}\left([r(d_1+d_2)/m]^{1/2}\right)$. There might be two reasons for the difference. First, our sampling model (consistent with one bit matrix completion in \cite{davenport20121}) assumes {\it sampling without replacement}; 
therefore are at most $d_1 d_2$ observations, and each entry may be observed at most once. In contrast, \cite{lafond2015low} assumes {\it sampling with replacement}; therefore there can be multiple observations for the same entry. 
Since our result heavily depends on the sampling model, we suspect this may be a main reason for the difference. Another possible reason could be due to different formulations. The formulation for matrix completion in our paper is a constrained optimization with an exact {\it upper bound on the matrix nuclear norm}, whereas \cite{lafond2015low} uses a regularized optimization with a regularization parameter $\lambda$ (which is indirectly related to the nuclear norm of the solution), but there is no direct control of the matrix nuclear norm. Also, note that their upper and lower bounds also has a gap on the order of $\log (d_1 + d_2)$, which is consistent with our result.
%
%
On the other hand, compared with the more general framework for $M$-estimator \cite{WainwrightReview2014}, our results are specific to the Poisson case, which may possibly be stronger but do not apply generally.

The rest of the paper is organized as follows. Section \ref{sec:model} sets up the formalism for Poisson matrix completion.  Section \ref{sec:method_bound} presents matrix recovery based on constrained maximum likelihood and establishes the upper and lower bounds for the recovery accuracy. Section \ref{sec:algorithm} presents the PMLSVT algorithm that solves the maximum likelihood approximately and demonstrates its performance on recovering solar flare images and bike sharing count data. 
All proofs are delegated to the appendix.

The notation in this paper is standard. In particular, $\mathbb{R}_+$ denotes the set of positive real numbers and $\mathbb{Z}_+^m$ denotes a $m$-dimensional vector with positive integer entries; $\llbracket d \rrbracket =\{1,2,\ldots,d\}$; $(x)^+ = \max\{x,0\}$ for any scalar $x$; Let $[x]_j$ denote the $j$th element of a vector $x$; $\mathbb{I}\{[\varepsilon]\}$  is the indicator function for an event $\varepsilon$; $|A|$ denotes the number of elements in a set $A$; $\mbox{diag}\{x\}$ denotes a diagonal matrix with entries of a vector $x$ being its diagonal entries; $\textbf{1}_{d_1 \times d_2}$ denotes an $d_1$-by-$d_2$ matrix of all ones. Let $\|x\|_1$, $\|x\|_2$ denote the $\ell_1$ and $\ell_2$ norms of a vector $x$.
Let entries of a matrix $X$ be denoted by $X_{ij}$ or $[X]_{ij}$. For a matrix $X = [x_1, \ldots, x_n]$ with $x_j$ being the $j$th column, $\myvec(X) = [x_1\transpose, \ldots, x_n\transpose]\transpose$ denote vectorized matrix. 
Let $\|X\|$ be the spectral norm which is the largest absolute singular value, $\|X\|_{F} = (\sum_{i,j} X_{ij}^2)^{1/2}$ be the Frobenius norm, $\|X\|_*$ be the nuclear norm which is the sum of the singular values, $\|X\|_{1, 1} = \sum_{i}\sum_j |X_{ij}|$ be the $\ell_{1}$ norm, and finally $\|X\|_{\infty}$ = $\max_{ij}|X_{ij}|$ be the infinity norm of the matrix. Let $\mbox{rank}(X)$ denote the rank of a matrix $X$.
We say that a random variable $Z$ follows the Poisson distribution with a parameter $\lambda$ (or $Z \sim \mbox{Poisson}(\lambda))$ if its probability mass function $\mathbb{P}(Z=k) = e^{-\lambda}\lambda^k/(k!)$). Finally, let $\mathbb{E}[Z]$ denote the expectation of a random variable $Z$.
%

The only set of non-conventional notation that we use is the following. By a slight abuse of notation, we denote the Kullback-Leibler (KL) divergence between two Poisson distributions with parameters $p$ and $q$, $p,q \in \mathbb{R}_+$ as
\[
D(p\|q) \triangleq p\log(p/q) - (p-q),
\]
and denote the Hellinger distance between two Poisson distributions with parameters $p$ and $q$  as
\[
d_H^2(p, q) \triangleq 2-2\exp\left\{-\frac{1}{2}\left(\sqrt{p}-\sqrt{q}\right)^2\right\}.
\]
It should be understood that the KL distance and the Hellinger distance are defined between two distributions and here the arguments $p$ and $q$ are merely parameters of the Poisson distributions since we restrict our attention to Poisson.
Based on this, we also denote, by a slight abuse of notation, the average KL and Hellinger distances for two sets of Poisson distributions whose parameters are determined by entries of two matrices $P$, $Q \in \mathbb{R}_+^{d_1 \times d_2}$:
$$
D(P\|Q) \triangleq \frac{1}{d_1 d_2}\sum_{i,j}D(P_{ij}\|Q_{ij}),
$$
$$
d_H^2(P,Q) \triangleq \frac{1}{d_1 d_2}\sum_{i,j}d_H^2(P_{ij},Q_{ij}).
$$

\section{Formulation}
\label{sec:model}


\subsection{Matrix recovery}\label{sec:matrix_recovery}

Given a matrix $M \in \mathbb{R}_+^{d_1\times d_2}$ consisting of positive entries, we obtain $m$ Poisson measurements $y = [y_1, \ldots, y_m]^\intercal \in \mathbb{Z}_+^m$ that take the forms of
\begin{equation}
y_i \sim \mbox{Poisson} (\mbox{tr}(A_i\transpose M)), \quad i = 1, \ldots, m.
\label{obs}
\end{equation}
where $A_i \in \mathbb{R}^{d_1\times d_2}$, and it models the measuring process of physical devices, e.g., the compressive imaging system given in \cite{brady2009optical}. \yx{For example, $A_i$ can be interpreted as a mask imposed on the scene, and one measurement is made by integrating the total photon intensity passing through the mask. }
Our goal is to estimate the signal $M\in \mathbb{R}_+^{d_1\times d_2}$ from measurements $y\in \mathbb{Z}_+^m$. Define a linear operator $\mathcal{A}: \mathbb{R}_+^{d_1\times d_2} \rightarrow \mathbb{R}^{m}$ such that
\ben
[\cA M]_i \triangleq \tr(A_i\transpose M).
\een
So by defining
\[
A \triangleq \begin{bmatrix}
\mbox{vec}(A_1)\transpose \\
\vdots\\
\mbox{vec}(A_m)\transpose
\end{bmatrix}, \quad f \triangleq \myvec(M),
\]
we can write
\begin{equation}
\cA M = A f.
\label{meaprocess}
\end{equation}

The following assumptions are made for matrix recovery. First, the total intensity of $M$ given by
\[
I \triangleq \|M\|_{1, 1}
\]
is known a priori. Second, assume an entry-wise lower bound $[M]_{jk} \geq c$ for some constant $c > 0$. \yx{This prevents the degeneracy later on since the rate of a Poisson random variable has to be positive. }
\yx{Third, motivated by the assumptions made in Poisson compressed sensing \cite{raginsky2010compressed} and to have physically realizable optical systems}, we assume that $\mathcal{A}$ satisfies the following constraints:
(1) positivity-preserving: \yx{for any nonnegative input matrix $M$, the measurements must also be nonnegative; equivalently,}
\[M_{ij} \geq 0 ~ \mbox{for all} ~ i, j \Rightarrow [\cA M]_i \geq 0 ~\mbox{for all} ~i;\] (2) flux-preserving: \yx{the mean total intensity of the observed signal must not exceed to total intensity incident upon the system:}
\[\sum_{i=1}^m [\cA M]_i \leq \|M\|_{1, 1}.\]
\yx{Physically, this means the photon counting measurements can only be positive and we cannot measure more light than what is available.}

We consider a regularized maximum-likelihood formulation. In the matrix recovery problem, the log-likelihood function is given by
\begin{equation}
L_{\mathcal{A}, y}(X) \triangleq \sum_{i=1}^m \{y_i \log [\cA X]_i - [\cA X]_i\} -\lambda \pen(X), \label{L}
\end{equation}
where the subscript $\cA$ and $y$ indicate the given data in defining the log likelihood function.
Based on the previous assumptions, we define a countable candidate set
\ben
\Gamma \triangleq  \{X_i \in \mathbb{R}_+^{d_1\times d_2}: \|X_i\|_{1, 1} = I, [X_i]_{jk} \geq c,  i = 1, 2, \ldots\},
\label{gamma_set_def}
\een
for some constant $c > 0$. This $\Gamma$ can be interpreted as a discretized feasible domain of the general problem. Note that the true matrix $M \in \Gamma$. Also introduce a regularization function that satisfies the Kraft inequality \cite{raginsky2010compressed},\cite{raginsky2011performance},
\begin{equation}
\sum_{X \in \Gamma} e^{-\pen(X)} \leq 1. \label{Kraft_inequatliy}
\end{equation}
A Kraft-compliant regularization function typically assigns a small value for a lower rank matrix $X$ and assigns a large value for a higher rank matrix $X$. Using Kraft-compliant regularization to prefix codes for estimators is a commonly used technique in constructing estimators \cite{raginsky2010compressed}.
An estimator $\widehat{M}$ is obtained by solving the following convex optimization problem:
\begin{equation}
\widehat{M} = \underset{X \in \Gamma}{\arg\max}~ L_{\cA, y}(X). \quad \{\mbox{matrix recovery}\}
\label{estimator}
\end{equation}

\begin{remark}[Relation to Poisson compressed sensing] In Poisson compressed sensing \cite{raginsky2011performance}, the measurement vector is given by $y \sim \mbox{Poisson}(A\transpose x)$, where $A$ is a sensing matrix and $x$ is a sparse vector of interest. From (\ref{meaprocess}), we see that the measurement model of Poisson matrix recovery can be written in this form too. However, if we vectorize $M$ and solve it as a Poisson compressed sensing problem (\ref{meaprocess}), the low-rank structure of $M$ will be lost. Hence, the extension to Poisson matrix recovery is important, since in many applications such as compressive imaging the signal is well modeled as a nearly low-rank matrix.

\end{remark}

\subsection{Matrix completion}
A related problem is matrix completion. Given a matrix $M \in \mathbb{R}_+^{d_1 \times d_2}$ consisting of positive entries, we obtain noisy observations for a subset of its entries on an index set $\Omega \subset \llbracket d_1\rrbracket \times \llbracket d_2\rrbracket$. The indices are randomly selected with $\mathbb{E}[|\Omega|]=m$. In other words, $\mathbb{I}\{(i,j) \in \Omega\}$ are i.i.d. Bernoulli random variables with parameter $m/(d_1 d_2)$.
The observations are Poisson counts of the observed matrix entries and they are mutually independent
\begin{equation}
Y_{ij} \sim  \mbox{Poisson}(M_{ij}), \quad \forall (i,j) \in \Omega.
\label{poissonmodel}
\end{equation}
Our goal is to recover $M$ from the Poisson observations $\{Y_{ij}\}_{(i,j) \in \Omega}$.

The following assumptions are made for the matrix completion problem. First, we set an upper bound $\alpha>0$ for each entry $M_{ij} \leq \alpha$ to entail that the recovery problem is  well-posed \cite{negahban2012restricted}. This assumption is also reasonable in practice; for instance, $M$ may represent an image which is usually not too spiky. The second assumption is characteristic to Poisson matrix completion: we set a lower bound $\beta>0$ for each entry $M_{ij} \geq \beta$. This entry-wise lower bound is required by our later analysis (so that the cost function is Lipschitz), and it also has an interpretation of a minimum required signal-to-noise ratio (SNR), since SNR of a Poisson observation with intensity $I$ is given by $\sqrt{I}$. \yx{Third, we make a similar assumption to  one-bit matrix completion \cite{davenport20121}; the nuclear norm of $M$ is upper bounded $\|M\|_* \leq \alpha \sqrt{rd_1 d_2}$. This is a relaxation of the assumption that $M$ has a rank exactly $r$ (some small integer).} This particular choice arises from the following consideration. If $M_{ij} \leq \alpha$ and $\mbox{rank}(M) \leq r$, then
\[\|M\|_* \leq \sqrt{r}\|M\|_F \leq \sqrt{rd_1 d_2}\|M\|_{\infty} \leq \alpha \sqrt{r d_1 d_2}.\]

We consider a formulation by maximizing the log-likelihood function of the optimization variable $X$ given our observations subject to a set of convex constraints. In the matrix completion problem, the log-likelihood function is given by
\begin{equation}
    F_{\Omega, Y}(X) = \sum_{(i,j)\in \Omega} Y_{ij}\log X_{ij} - X_{ij},
\label{likelihood}
\end{equation}
where the subscript $\Omega$ and $Y$ indicate the data involved in the maximum likelihood function $F$. Based on previous assumptions, we define a candidate set
\begin{equation}
\begin{split}
&\mathcal{S} \triangleq \left\{ X \in \mathbb{R}_+^{d_1 \times d_2} : \|X\|_* \leq \alpha \sqrt{r d_1 d_2}, \right. \\
&\qquad  \qquad \left. \beta \leq X_{ij} \leq \alpha, \forall (i,j) \in \llbracket d_1\rrbracket \times \llbracket d_2\rrbracket \right\}.
\end{split}
\label{searchspace}
\end{equation}
An estimator $\widehat{M}$ can be obtained by solving the following convex optimization problem:
\begin{equation}
\widehat{M} = \underset{X\in \mathcal{S}}{\arg\max} ~F_{\Omega,Y}(X). \quad \{\mbox{matrix completion}\}
\label{optimization_problem}
\end{equation}

\subsection{Relation of two formulations}

\yx{Note that the matrix completion problem can also be formulated as a regularized maximum likelihood function problem similar to (\ref{estimator}). However, we  consider the current formulation for the convenience of drawing connections, respectively, between Poisson matrix recovery  and Poisson compressed sensing studied in\cite{raginsky2010compressed}, as well as Poisson matrix completion and one-bit matrix completion studied in \cite{davenport20121}.

Indeed, these two formulations in the forms of (\ref{estimator}) and (\ref{optimization_problem}) are related by the well-known duality theory in optimization (see, e.g. \cite{ConvexOpt}). Consider a penalized convex optimization problem:
\begin{equation}
 \quad \min_{x} f(x)+\lambda g(x), \quad \lambda \geq 0, \label{P1}
\end{equation}
and the constrained convex optimization problem:
\begin{equation}
 \quad \min_{x} f(x) \quad \mbox{subject to} \quad g(x)\leq c. \label{P2}
\end{equation}
Denote $x^*$ as the solution to (\ref{P1}). Then $x^*$ is also the solution to (\ref{P2}), if we set $c=g(x^*)$. Conversely, denote $x^*$ as the solution to problem (\ref{P2}). We can interpret $\lambda \geq 0$ as the the Lagrange multiplier and consider the Lagrange dual problem. Under Slater's condition (i.e. there exists at least one $x$ such that $g(x)<c$),  there is at least one $\lambda$ such that $x^*$ is also the solution to problem (\ref{P1}). Therefore, (\ref{P1}) and (\ref{P2}) are equivalent in the sense that the two problems have the same minimizer for properly chosen parameters. More details can be found in \cite{loft2009efficient}. Using the suggestion by Theorem 1 in \cite{loft2009efficient}, we choose $\lambda$ in (\ref{estimator}) to be around $1/\alpha \sqrt{r d_1 d_2}$.}

\section{Performance Bounds}
\label{sec:method_bound}

In the following, we use the squared error
\ben
R(M, \widehat{M}) \triangleq \|M-\widehat{M}\|_F^2,
\een
as a performance metric for both matrix recovery and matrix completion problems.

\subsection{Matrix recovery}

\yx{To extend the upper bounds in Poisson compressed sensing \cite{raginsky2010compressed} to the matrix recovery setting, we introduce a class of nearly low-rank matrices whose singular values decay geometrically. Using a particular choice for the sensing matrices $A_i$ that satisfy certain property (Lemma \ref{thm_RIP}), we present a performance guarantee for the estimator in the general setting (Lemma \ref{thm_general}) and then for nearly low-rank matrices (Theorem \ref{regret_for_nearly_low_rank}). }

\subsubsection{Sensing operator}
Let  $Z_i$, $i = 1, \ldots, m$ denote a $d_1$-by-$d_2$ matrix with entries i.i.d. follow the distribution
\ben
[Z_i]_{jk} = \left\{\begin{array}{ll}
-\left(\frac{1-p}{p}\right)^{1/2}, & \hbox{with probability } p;\\
\left(\frac{p}{1-p}\right)^{1/2}, & \hbox{with probability } 1-p.
\end{array}\right.
\een
Define
\[
\tilde{A}_i \triangleq Z_i/\sqrt{m},
\]
which consists of a random part and a deterministic part:
\ben
A_i \triangleq \left[\frac{p(1-p)}{m}\right]^{1/2} \tilde{A}_i + \frac{1-p}{m} \vect{1}_{d_1 \times d_2}.
\een
\yx{This construction is inspired by \cite{raginsky2010compressed} for the purpose of corresponding to a feasible physical system. In particular, every element of $A_i$ is nonnegative and scaled properly, so that the measured photon intensity is no greater than the photon intensity of the original signal. }
It can be verified that $A_i$ and the associated $\cA$ satisfy the requirements in the previous section. In particular, (1) all entries of $A_i$ take values of 0 or $1/m$;
(2) $\cA$ satisfies flux preserving: for any matrix $X$ with positive entries $[X]_{ij} > 0$, since all entries of $A_i$ are less than $1/m$,
\[
\|\cA X\|_{1} = \sum_{i=1}^m \sum_{j=1}^{d_1} \sum_{k=1}^{d_2} [A_i]_{jk} [X]_{jk} \leq  \sum_{j=1}^{d_1} \sum_{k=1}^{d_2} [X]_{jk} = I;
\]
(3) with probability at least $1-mp^{d_1 d_2}$, every matrix $A_i$ has at least one non-zero entry. It follows that for a matrix $X$ such that $[X]_{ij} \geq c$, \yx{under the event described above}, we have
\begin{equation}
[\cA X]_i = \sum_{j=1}^{d_1} \sum_{k=1}^{d_2} [A_i]_{jk} [X]_{jk} \geq c\sum_{j=1}^{d_1} \sum_{k=1}^{d_2} [A_i]_{jk}  \geq \frac{c}{m}. \label{lower_measurement_bound}
\end{equation}
This prevents degeneracy since $[\cA X]_i$ is a rate of Poisson and has to be positive.  

\yx{The random part of the operator $\cA$ has certain restrictive isometry property (RIP) similar to \cite{CandesTao2006}, as formalized in the following lemma and proved by extending Theorem 1 of\cite{raginsky2010compressed}. Similar to \cite{raginsky2010compressed}, Lemma \ref{thm_RIP} is used in proving the performance upper bound in a general setting (Lemma \ref{thm_general}).}
\begin{lemma}[RIP of operator $\mathcal{A}$] \label{thm_RIP}
Consider the  operator $\tilde{\mathcal{A}}$ defined by $\tilde{\mathcal{A}} X \triangleq [\mbox{tr}(\tilde{A}_1 X), \ldots, \mbox{tr}(\tilde{A}_m X)]\transpose \in \mathbb{R}^m$.
For all $X_1, X_2 \in \mathcal{B}^{d_1\times d_2}$ where $\mathcal{B}^{d_1\times d_2} \triangleq \{X \in \mathbb{R}^{d_1\times d_2}: \|X\|_{1, 1} = 1\}$,
there exist absolute constants $c_1, c_2 > 0$ such that the bound
\[
\|X_1 - X_2\|_F^2 \leq 4\|\tilde{\mathcal{A}}X_1 - \tilde{\mathcal{A}}X_2\|_2^2 + \frac{2c_2^2 \xi_p^4 \log(c_2 \xi_p^4 d_1 d_2/m)}{m}
\]
holds with probability at least $1-e^{-c_1 m/\xi_p^4}$, where
\begin{equation}
\xi_p\triangleq \left\{
\begin{array}{ll}
\left[\frac{3}{2p(1-p)}\right]^{1/2}, & \hbox{if } p\neq 1/2; \\
1, & \hbox{otherwise.}
\end{array}
\right.\label{def_gamma_p}
\end{equation}
Moreover, there exist absolute constants $c_3, c_4 > 0$ such that for any finite set $\mathcal{T}\subset \mathcal{S}^{d_1\times d_2 - 1}$, where $\mathcal{S}^{d_1\times d_2 - 1} \triangleq \{X \in \mathbb{R}^{d_1\times d_2}: \|X\|_F = 1\}$ is the unit sphere,
if $m \geq c_4 \xi_p^4 \log_2|\mathcal{T}|$, then
\[
\frac{1}{2} \leq \|\tilde{\mathcal{A}} X\|_2^2 \leq \frac{3}{2}, \quad \mbox{for all } X \in \mathcal{T}
\]
holds with probability at least $1- e^{-c_3 m/\xi_p^4}$.
\end{lemma}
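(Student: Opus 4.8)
The plan is to obtain both statements from a single concentration estimate for the random linear operator $\tilde{\mathcal{A}}$, which is exactly the kind of bound established in Theorem 1 of \cite{raginsky2010compressed} for random matrices with entries scaled so that $\Expect[(\tilde{A}_i)_{jk}^2] = 1/m$. The first step is to verify that the $Z_i$ entries are zero-mean with unit second moment: a direct computation gives $\Expect[Z_i]_{jk} = -p\sqrt{(1-p)/p} + (1-p)\sqrt{p/(1-p)} = 0$ and $\Expect[Z_i]_{jk}^2 = p\cdot\frac{1-p}{p} + (1-p)\cdot\frac{p}{1-p} = 1$, so $\tilde{A}_i = Z_i/\sqrt{m}$ has the correct normalization. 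The parameter $\xi_p$ captures the fourth-moment (sub-Gaussian norm) behavior of these entries, $\Expect[Z_i]_{jk}^4 \asymp 1/(p(1-p))$, which is precisely where the $[3/(2p(1-p))]^{1/2}$ factor comes from; this is why $\xi_p$ appears to the fourth power everywhere.

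For the first (global) bound, I would quote the argument of \cite[Thm.~1]{raginsky2010compressed}, which shows that over the scaled $\ell_1$-ball $\mathcal{B}^{d_1\times d_2}$ one has, with probability at least $1-e^{-c_1 m/\xi_p^4}$, a restricted-isometry-type inequality $\|\tilde{\mathcal{A}}W\|_2^2 \geq \tfrac14\|W\|_F^2 - \tfrac{c_2^2\xi_p^4}{2m}\log(c_2\xi_p^4 d_1 d_2/m)$ uniformly for all $W$ in the difference set $\{X_1-X_2 : X_1,X_2\in\mathcal{B}^{d_1\times d_2}\}$ (this difference set is contained in the scaled ball of radius $2$). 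The only adaptation relative to the vector case in \cite{raginsky2010compressed} is that the ``sparse vector'' there is replaced by $\myvec(W)$ and the covering-number / chaining estimate is run over the set of unit-Frobenius-norm matrices rather than unit sparse vectors; since the $\ell_1$-ball-to-$\ell_2$ comparison and the associated entropy bounds are identical after vectorization, the proof goes through verbatim with $d_1 d_2$ playing the role of the ambient dimension. Rearranging and substituting $W = X_1-X_2$ yields the displayed inequality.

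For the second statement, I would apply a standard Bernstein/Hanson–Wright bound: for any fixed $X$ with $\|X\|_F = 1$, $\|\tilde{\mathcal{A}}X\|_2^2 = \frac1m\sum_{i=1}^m \la Z_i, X\ra^2$ is an average of $m$ i.i.d. nonnegative random variables each with mean $1$ and sub-exponential norm $O(\xi_p^2)$, so $\mathbb{P}(|\|\tilde{\mathcal{A}}X\|_2^2 - 1| > 1/2) \leq 2\exp(-c\, m/\xi_p^4)$. A union bound over the finite set $\mathcal{T}$ costs a factor $|\mathcal{T}|$, i.e.\ an additive $\log|\mathcal{T}|$ in the exponent, so choosing $m \geq c_4\xi_p^4\log_2|\mathcal{T}|$ with $c_4$ large enough leaves a net exponent $-c_3 m/\xi_p^4$, giving the claimed $1/2 \leq \|\tilde{\mathcal{A}}X\|_2^2 \leq 3/2$ for all $X\in\mathcal{T}$ simultaneously.

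The main obstacle is the first part: one must be careful that the fourth-moment control of the Bernoulli-type entries $Z_i$ enters the chaining bound in exactly the same way as in \cite{raginsky2010compressed}, and that the logarithmic ``approximation error'' term $\frac{c_2^2\xi_p^4}{m}\log(c_2\xi_p^4 d_1 d_2/m)$ survives the passage from vectors to matrices without picking up extra rank-dependent factors. The key check is that the metric entropy of the relevant set of matrices (bounded in scaled $\ell_1$ norm, measured in Frobenius norm) scales like $d_1 d_2$ up to logarithms — which it does, since after vectorization this is the classical $\ell_1$-ball entropy bound — so that the RIP constant $c_4$ in the second part and the error term in the first part involve only $\log(d_1 d_2)$ and never $\log\binom{d_1 d_2}{\cdot}$ or a rank.
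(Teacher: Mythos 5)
Your proposal matches the paper's proof in its essential approach: the paper likewise establishes the lemma by vectorizing (setting $u_i = \myvec(X_i)$, noting $\|X\|_F = \|\myvec(X)\|_2$ and $\|X\|_{1,1} = \|\myvec(X)\|_1$, so the matrix ball $\mathcal{B}^{d_1\times d_2}$ and sphere $\mathcal{S}^{d_1\times d_2-1}$ become their vector counterparts in dimension $d_1 d_2$) and then invoking Theorem~1 of \cite{raginsky2010compressed} directly. The extra material you supply --- checking the moment normalization of the $Z_i$ entries and re-deriving the second claim via Bernstein plus a union bound --- is consistent with, and subsumed by, that citation, so there is no gap.
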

This lemma follows directly from applying Theorem 1 in \cite{raginsky2010compressed} to the vectorized version of the matrix recovery problem (\ref{meaprocess}) as well as using the fact $\|X\|_F = \|\mbox{vec}(X)\|_2$ and $\|X\|_{1,1} = \|\mbox{vec}(X)\|_1$ for a matrix $X$.

\subsubsection{General matrices}
For an arbitrary matrix $M$, given a suitable candidate set $\Gamma$, and if the operator $\cA$ satisfies the RIP in Lemma \ref{thm_RIP}, we may obtain a regret bound for the estimator $\widehat{M}$ obtained from (\ref{estimator}). Note that the result does not require $M$ to be nearly low-rank. \yx{The result takes a similar form as Theorem 2 of\cite{raginsky2010compressed} except that the vector signal dimension is replaced by $d_1 d_2$. Lemma \ref{thm_general} is used for establishing the regret bound  for nearly low-rank matrices in Theorem \ref{regret_for_nearly_low_rank}. }

\begin{lemma}[Regret bound] \label{thm_general}
Assume the candidate set $\Gamma$ defined in (\ref{gamma_set_def}). 
Let $\mathcal{G}$ be the collection of all subsets $\Gamma_0 \subseteq \Gamma$, such that $|\Gamma_0| \leq 2^{m/(c_4\xi_p^4)}$ for $\xi_p$ defined in (\ref{def_gamma_p}). Then with probability at least $1-d_1 d_2 e^{-km}$ for some positive constant $k$ depending on $c_1$, $c_3$ and $p$:
\begin{equation}
\begin{split}
& \frac{1}{I^2}\mathbb{E}[R(M, \widehat{M})] \leq C_{m, p} \min_{\Gamma_0 \in \mathcal{G}} \min_{\widetilde{M} \in \Gamma_0} \left[\frac{R(M, \widetilde{M})}{I^2} + \lambda\frac{\pen(\widetilde{M})}{I}\right] \\
& \quad  + \frac{2c_2^2 \xi_p^4 \log (c_2 \xi_p^4 d_1 d_2/m)}{m},
\end{split}\label{thm2_result}
\end{equation}
where $C_{m,p} \triangleq \max\left\{\frac{24}{c}, \frac{16}{p(1-p)} \right\}m,$ and the expectation is taken with respect to the random Poisson measurements $y_i \sim \mbox{Poisson}(\mbox{tr}(A_i M))$ for a fixed $\tilde{\mathcal{A}}$.
\end{lemma}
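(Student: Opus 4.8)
\emph{Proof sketch.} The plan is to mirror the proof of Theorem~2 in \cite{raginsky2010compressed}, working in the measurement domain and invoking the RIP of Lemma~\ref{thm_RIP} twice. First I would use its first bound to pass from the Frobenius error to the measurement error. Since $M,\widehat M\in\Gamma$ both have $\|\cdot\|_{1,1}=I$, the matrices $M/I,\widehat M/I$ lie in $\mathcal B^{d_1\times d_2}$, and the deterministic part of $\cA$ cancels in differences of elements of $\Gamma$, so that $\|\tilde{\mathcal{A}}M-\tilde{\mathcal{A}}\widehat M\|_2^2=\tfrac{m}{p(1-p)}\|\cA M-\cA\widehat M\|_2^2$. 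Combining these, on the good realization of $\tilde{\mathcal{A}}$,
\[
\frac{R(M,\widehat M)}{I^2}\ \le\ \frac{4m}{p(1-p)I^2}\,\|\cA M-\cA\widehat M\|_2^2 \;+\; \frac{2c_2^2\xi_p^4\log(c_2\xi_p^4 d_1 d_2/m)}{m},
\]
which already supplies the remainder term in (\ref{thm2_result}); it remains to bound $\mathbb{E}_y\|\cA M-\cA\widehat M\|_2^2$.

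Second, I would relate the measurement error to the Poisson likelihood through the Bhattacharyya affinity. For $X\in\Gamma$ the intensities obey the uniform two-sided bound $c/m\le[\cA X]_i\le I/m$: the lower bound is (\ref{lower_measurement_bound}), and the upper bound holds because every entry of $A_i$ is $0$ or $1/m$ while $\|X\|_{1,1}=I$. Writing $([\cA M]_i-[\cA\widehat M]_i)^2=(\sqrt{[\cA M]_i}+\sqrt{[\cA\widehat M]_i})^2(\sqrt{[\cA M]_i}-\sqrt{[\cA\widehat M]_i})^2$, using the upper bound on the first factor, and invoking the closed form $\mbox{aff}(\mbox{Poisson}(a),\mbox{Poisson}(b))=\exp\{-\tfrac12(\sqrt a-\sqrt b)^2\}$ for the product measurement laws $p_{\cA M},p_{\cA\widehat M}$, one gets $\|\cA M-\cA\widehat M\|_2^2\le\tfrac{4I}{m}\bigl(-2\log\mbox{aff}(p_{\cA M},p_{\cA\widehat M})\bigr)$. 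The crux is then an oracle inequality for the penalized MLE: by optimality of $\widehat M$ in (\ref{estimator}), $p_{\cA\widehat M}(y)e^{-\lambda\pen(\widehat M)}\ge p_{\cA X}(y)e^{-\lambda\pen(X)}$ for every $X\in\Gamma$; combining this with the identity $\mathbb{E}_{y\sim p_{\cA M}}[\sqrt{p_{\cA X}(y)/p_{\cA M}(y)}]=\mbox{aff}(p_{\cA M},p_{\cA X})$, the Kraft inequality (\ref{Kraft_inequatliy}) (so $\sum_X e^{-\lambda\pen(X)}\le1$), and a Markov-type / resolvability argument in the Li--Barron style used in \cite{raginsky2010compressed,raginsky2011performance}, I would obtain, for any $\Gamma_0\subseteq\Gamma$,
\[
\mathbb{E}_y\bigl[-2\log\mbox{aff}(p_{\cA M},p_{\cA\widehat M})\bigr]\ \le\ \min_{X\in\Gamma_0}\bigl\{\,2\lambda\pen(X)+2D(p_{\cA M}\|p_{\cA X})\,\bigr\},
\]
where $D(p_{\cA M}\|p_{\cA X})=\sum_{i=1}^m D([\cA M]_i\|[\cA X]_i)$ and restricting the minimum only weakens the bound.

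Third, I would turn the KL term into a Frobenius term using the \emph{second} bound of Lemma~\ref{thm_RIP}, which is precisely why $\Gamma_0$ must lie in $\mathcal G$. Fix $\Gamma_0\in\mathcal G$; the scalar bound $D(a\|b)\le(a-b)^2/(2\min\{a,b\})$ and $\min\{[\cA M]_i,[\cA X]_i\}\ge c/m$ give $D(p_{\cA M}\|p_{\cA X})\le\tfrac{m}{2c}\|\cA M-\cA X\|_2^2$ for $X\in\Gamma_0$, and applying the second part of Lemma~\ref{thm_RIP} to the finite set $\{(X-M)/\|X-M\|_F:X\in\Gamma_0,\ X\ne M\}$ (admissible since $|\Gamma_0|\le 2^{m/(c_4\xi_p^4)}$) yields $\|\cA M-\cA X\|_2^2\le\tfrac{3p(1-p)}{2m}R(M,X)$. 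Chaining the three displays, taking $\mathbb{E}_y$ for the fixed good $\tilde{\mathcal{A}}$, intersecting the two high-probability events of Lemma~\ref{thm_RIP} with the event that each $A_i$ has a nonzero entry, and collecting constants produces (\ref{thm2_result}) with $C_{m,p}=\max\{24/c,16/(p(1-p))\}m$ and confidence $1-d_1d_2e^{-km}$.

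I expect the oracle inequality of the second step to be the main obstacle: one has to run the resolvability argument for a \emph{data-dependent} estimator over a countable candidate set while coupling $\lambda$ to $\pen(\cdot)$ so that the Kraft sum stays $\le1$, and all of the tightness must come from the exact Poisson expressions for $\mbox{aff}$ and $D$ rather than from generic sub-Gaussian moment bounds. A secondary subtlety is that the second RIP bound is only available on the difference set of one fixed $\Gamma_0\in\mathcal G$ at a time, so strictly speaking (\ref{thm2_result}) holds for each such $\Gamma_0$ on an event of the stated probability tailored to it; the remaining arithmetic to land exactly on $C_{m,p}$ (using, as in \cite{raginsky2010compressed}, the flux-preserving bound $\sum_i[\cA X]_i\le I$) and on the displayed remainder term is routine.
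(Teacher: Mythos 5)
Your sketch is a correct unpacking of the argument, but the paper's own proof is far shorter: it simply observes that vectorizing $M$ turns the matrix problem into the vector Poisson compressed sensing problem (using $\|X\|_F = \|\mbox{vec}(X)\|_2$, $\|X\|_{1,1} = \|\mbox{vec}(X)\|_1$, and Lemma~\ref{thm_RIP} supplying the required RIP for the $m$-by-$(d_1 d_2)$ sensing matrix), and then invokes Theorem~2 of \cite{raginsky2010compressed} as a black box. Since the chain you describe (RIP to pass to measurement space, Bhattacharyya affinity, the Li--Barron resolvability bound for the penalized MLE over the Kraft-compliant candidate set, and the second RIP statement on the finite set $\Gamma_0$) is precisely the internal structure of that cited theorem, your route is substantively the same --- re-derived rather than cited.
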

\begin{remark}
\yx{This bound can be viewed as an ``oracle inequality'': if given the knowledge of the true matrix $M$, we may first obtain an ``oracle error'', the term inside the square bracket in (\ref{thm2_result}), which is the error associated with the best approximation of $M$ in the set $\Gamma_0$, for all such possible sets $\Gamma_0\subseteq\Gamma$  that the size of $\Gamma_0$ is at most $\mathcal{O}(2^m)$. Then the normalized expected squared error of the maximum likelihood estimator is within a constant factor from this oracle error.}
\end{remark}

\subsubsection{Nearly low-rank matrices}

\yx{In the following, we establish the regret bound for nearly low-rank matrices, whose singular values decay geometrically. Theorem \ref{regret_for_nearly_low_rank} is obtained by extending Theorem 3 in \cite{raginsky2010compressed}, with key steps including realizing that the vector formed by the singular values of the nearly low-rank matrix is ``compressible'', and invoking a covering number for certain subset of low-rank matrices in \cite{PlanThesis2011}.}

\yx{To extend the definition of compressible signals \cite{CandesTao2006, raginsky2010compressed} in the matrix setting, we consider a family of nearly low-rank matrices in the following sense. }
Assume the singular value decomposition of a matrix $X \in \mathbb{R}^{d_1\times d_2}$ is given by $X = U \mbox{diag}\{\theta\} V$, where $\theta = [\theta_1, \ldots, \theta_d]\transpose$ is a vector consists of singular values, and $|\theta_1| \geq |\theta_2| \geq \cdots \geq |\theta_d|$ with \[d \triangleq \min\{d_1, d_2\}.\]
Suppose there exists $0 < q < \infty$, \yx{$0 < \varrho < \infty$}, such that
\begin{equation}
|\theta_j|\leq \varrho I j^{-1/q}, \quad j = 1, \ldots, d. \label{geo_decay}
\end{equation}
Any $\theta$ satisfying (\ref{geo_decay}) is said to belong to the weak-$\ell_q$ ball of radius $\varrho I$.
\yx{It can be verified that the condition (\ref{geo_decay}) is equivalent to the following
\[
|\{j \in \llbracket d \rrbracket: |\theta_j|\geq cI\}| \leq \left(\varrho/c\right)^q
\]
holding for all $c > 0$. Hence, if we truncate the singular values of a nearly low-rank matrix less than $cI$, then the rank of the resulted matrix is $(\varrho/c)^q$. In other words, $q$ controls the speed of the decay: the smaller the $q$, the faster the decay, and the smaller the rank of the approximating matrix by thresholding the small singular values. We shall focus primarily on the case $0 < q < 1$.  Also, for $\|X\|_{1, 1} = I$,
\[
|\theta_j| \leq \left(\sum_{j=1}^d \theta_j^2\right)^{1/2} = \|X\|_F \leq \|X\|_{1, 1} = I,
\]
so we can take $\varrho$ to be a constant independent of $I$ or $d$. That is also the reason in (\ref{geo_decay}) for us to choose the constant to be $\varrho I$ since $\varrho$ has the meaning of a factor relatively to the total intensity $I$.
}

For these nearly low-rank matrices, the best rank-$\ell$ approximation to $X$ can be constructed as
\[X^{(\ell)} \triangleq U \diag\{\theta^{(\ell)}\} {V}\transpose,\]
where $\theta^{(\ell)} = [\theta_1, \ldots, \theta_\ell, 0, \ldots, 0]\transpose$.
Note that 
\begin{equation}
\begin{split}
\|X - X^{(\ell)}\|_F^2 &= \|U \diag\{\theta - \theta^{(\ell)}\} V\transpose \|_F^2 \\
&=  \|\theta - \theta^{(\ell)}\|_2^2 \leq I^2 c_0 \varrho^2 \ell^{-2\alpha'},
\end{split}
\label{approx_inq}
\end{equation}
for some constant $c_0 > 0$ that depends only on $q$, and  $\alpha'=(1/q-1/2)$ \cite{CandesTao2006}. \yx{For matrix to be nearly low-rank, we want $q$ to be close to 0, and hence usually $\alpha' = 1/q- 1/2 >0$.} The following regret bound for a nearly low-rank matrix is a consequence of Lemma \ref{thm_general}.

\begin{theorem}[Matrix recovery; regret bound for nearly low-rank matrices]\label{regret_for_nearly_low_rank}
Assume $M \in \mathbb{R}^{d_1\times d_2}_+$ is nearly low-rank, $M_{ij} \geq c$ for some positive constant $c\in(0, 1)$. Then there exists a finite set of candidates $\Gamma$, and a regularization function satisfying the Kraft inequality (\ref{Kraft_inequatliy}) such that the bound
\begin{equation}
\begin{split}
&\frac{1}{I^2}\mathbb{E}[R(M, \widehat{M})] \leq \\
&\mathcal{O}(m)\left\{2+ \frac{164}{d} + \min_{1\leq \ell \leq \ell_*} \left[c_0 \varrho^2 \ell^{-2\alpha'} + \frac{\lambda \ell (d_1+d_2+4)  \log_2 d}{2I}\right]\right\} \\
&+ \mathcal{O}\left(\frac{\log(d_1d_2/m)}{m}\right),
\end{split}
\label{bound_MSE}
\end{equation}
holds with the same probability as in Lemma \ref{thm_general} for some constant $c_0 > 0$ that depends only on $q$, 
\begin{equation}
\ell_* \triangleq 2m/[c_4 \xi_p^4(d_1 + d_2 + 4) \log_2 d],
\end{equation}
 and $\xi_p$ is defined in (\ref{def_gamma_p}). Here the expectation is with respect to the random Poisson measurements for a fixed realization of $\cA$.
\label{theoremforcompressive}
\end{theorem}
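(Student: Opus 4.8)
The plan is to derive Theorem~\ref{regret_for_nearly_low_rank} from the oracle inequality of Lemma~\ref{thm_general} by an explicit construction of the finite candidate set $\Gamma$ and the Kraft-compliant penalty $\pen$, following the template of Theorem~3 in \cite{raginsky2010compressed} but replacing the $\ell_q$-ball covering argument for sparse vectors by a covering argument for low-rank matrices. Concretely, I would (i) build $\Gamma$ as a union, over admissible ranks $1\le\ell\le d$, of finite $\epsilon$-nets of the feasible rank-$\le\ell$ matrices; (ii) assign code lengths to these nets so that the Kraft inequality (\ref{Kraft_inequatliy}) holds and $\pen(X)$ restricted to the rank-$\ell$ net matches $\tfrac12\ell(d_1+d_2+4)\log_2 d$ up to lower-order terms; (iii) instantiate the inner double minimization in (\ref{thm2_result}) by taking $\Gamma_0$ to be the rank-$\ell$ net and $\widetilde M$ a net point near the best rank-$\ell$ approximation $M^{(\ell)}$; and (iv) balance the resulting bias, quantization, and penalty terms over $1\le\ell\le\ell_*$.

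For steps (i)--(ii): for each $\ell$ I would take an $\epsilon$-net in Frobenius norm, with resolution $\epsilon\asymp I/d$, of the set of matrices $X$ with $\mathrm{rank}(X)\le\ell$, $\|X\|_{1,1}=I$ and $X_{ij}\ge c$; the covering-number bound for bounded low-rank matrices from \cite{PlanThesis2011} gives $\log|\Gamma_\ell| = \mathcal{O}\big(\ell(d_1+d_2+1)\log(d/\epsilon)\big)=\mathcal{O}\big(\ell(d_1+d_2+4)\log d\big)$, the $+4$ absorbing the extra degrees of freedom needed to encode the normalization and entrywise floor. Setting $\pen(X)=\log|\Gamma_\ell|+2\log\ell+\mathcal{O}(1)$ for $X\in\Gamma_\ell$ makes $\sum_{\ell}\sum_{X\in\Gamma_\ell}e^{-\pen(X)}\le\sum_{\ell}\ell^{-2}\cdot\mathcal{O}(1)\le 1$ once the additive constant is fixed, so (\ref{Kraft_inequatliy}) holds, and $\pen(X)\le\tfrac12\ell(d_1+d_2+4)\log_2 d\,(1+o(1))$ as needed in (\ref{bound_MSE}).

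For steps (iii)--(iv): fix $\ell$ with $|\Gamma_\ell|\le 2^{m/(c_4\xi_p^4)}$, which is exactly the constraint $\ell\le\ell_*$ and puts $\Gamma_\ell\in\mathcal{G}$. Let $M^{(\ell)}$ be the best rank-$\ell$ approximation of $M$; by (\ref{approx_inq}), $\|M-M^{(\ell)}\|_F^2\le I^2 c_0\varrho^2\ell^{-2\alpha'}$. Since $M^{(\ell)}$ need not be feasible, I would first perturb it into the feasible set (e.g. a small convex combination with $\tfrac{I}{d_1d_2}\mathbf{1}_{d_1\times d_2}$ to enforce the entrywise floor, rescaled to keep $\|\cdot\|_{1,1}=I$) and then round to the nearest net point $\widetilde M\in\Gamma_\ell$; both operations cost $\mathcal{O}(I^2/d)$ in squared Frobenius error, giving $R(M,\widetilde M)/I^2\le c_0\varrho^2\ell^{-2\alpha'}+\mathcal{O}(1/d)$, with the explicit absolute constants producing the $2+164/d$ inside the brace. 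Substituting $R(M,\widetilde M)$ and $\pen(\widetilde M)$ into (\ref{thm2_result}), using $C_{m,p}=\Theta(m)$ and $\tfrac{2c_2^2\xi_p^4\log(c_2\xi_p^4 d_1d_2/m)}{m}=\mathcal{O}\big(\log(d_1d_2/m)/m\big)$, and finally minimizing over $1\le\ell\le\ell_*$ yields (\ref{bound_MSE}); the probability is inherited verbatim from Lemma~\ref{thm_general} (equivalently, the RIP event of Lemma~\ref{thm_RIP}), so no new probabilistic step is required.

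The main obstacle is the discretization in step (i): one must exhibit nets that simultaneously respect the rank bound, the equality constraint $\|X\|_{1,1}=I$, and the entrywise lower bound $X_{ij}\ge c$, while keeping $\log|\Gamma_\ell|=\mathcal{O}(\ell(d_1+d_2)\log d)$ and, crucially, guaranteeing that replacing the infeasible, off-net matrix $M^{(\ell)}$ by a genuine element of $\Gamma$ inflates the error by only $\mathcal{O}(I^2/d)$ --- i.e. that this correction collapses into the harmless additive constants rather than degrading the $\ell^{-2\alpha'}$ rate. Once the net and its code lengths are in place, verifying Kraft for the union over all ranks and reconciling the absolute constants (the $164$, the factor $\tfrac12$, and the $d_1+d_2+4$) is routine but bookkeeping-intensive.
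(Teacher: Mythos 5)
Your overall route is the same as the paper's: cover the rank-$\ell$ approximations using the low-rank covering lemma of \cite{PlanThesis2011}, assign Kraft-compliant code lengths of about $\tfrac12(d_1+d_2+4)\ell\log_2 d$ bits (the paper gets the $+4$ by separately encoding the magnitude $\|\theta^{(\ell)}\|_2$ with $\tfrac12\log_2 d$ bits and the rank $\ell$ with $\log_2 d$ bits, then absorbing $\tfrac32\log_2 d$ into the $\ell$-linear term), identify $\ell\le\ell_*$ with the constraint $|\Gamma_\ell|\le 2^{m/(c_4\xi_p^4)}$ so that $\Gamma_\ell\in\mathcal{G}$, and invoke Lemma \ref{thm_general}. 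That part of your plan matches the paper step for step.

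The one step that does not hold up as written is your feasibility correction. You claim that perturbing $M^{(\ell)}$ into the set $\{X: X_{ij}\ge c,\ \|X\|_{1,1}=I\}$ via a small convex combination with $\tfrac{I}{d_1 d_2}\mathbf{1}_{d_1\times d_2}$ costs only $\mathcal{O}(I^2/d)$ in squared Frobenius error. The best rank-$\ell$ approximation of a feasible $M$ can have entries far below $c$ (indeed strongly negative, on the order of $-I$ in the worst case), so the mixing weight needed to restore the entrywise floor is not small and the resulting error is not $\mathcal{O}(I^2/d)$; your own accounting is also internally inconsistent, since an $\mathcal{O}(I^2/d)$ correction cannot produce the non-vanishing additive constant $2$ that appears inside the braces of (\ref{bound_MSE}). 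The paper does not attempt a fine correction at all: it applies the Frobenius projection $\mathcal{P}_{\mathcal{C}}$ onto the feasible set and bounds the displacement crudely by $\|X^{(\ell)}_q-\mathcal{P}_{\mathcal{C}}(X^{(\ell)}_q)\|_F^2\le 2\|X^{(\ell)}_q\|_F^2+2\|\mathcal{P}_{\mathcal{C}}(X^{(\ell)}_q)\|_F^2\le 4I^2$, and this $\mathcal{O}(I^2)$ term is exactly the source of the constant $2$. So your proof goes through once you replace the unsupported $\mathcal{O}(I^2/d)$ claim with the crude projection bound --- the theorem's statement already tolerates the resulting $\mathcal{O}(1)$ additive term --- but as proposed, that step is the genuine gap.
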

\begin{remark}
\yx{In fact we may compute the minimum in the optimization problem inside Theorem \ref{regret_for_nearly_low_rank} in terms of $\ell$. Note that the first term $\ell^{-2\alpha'}$ is decreasing in $\ell$, and the second term $[\lambda \ell (d_1 + d_2 + 4)  \log_2 d ]/(2I)$ is increasing in $\ell$, so we may readily solve that the minimum in (\ref{bound_MSE}) is obtained by
\[
\ell_{\rm min} = \min\left\{\left[
\frac{4c_0 \varrho^2 \alpha' I}{\lambda (d_1 + d_2 + 4)\log_2 d}
\right]^{\frac{1}{2\alpha' + 1}}, ~\ell_*\right\}.
\]
If further $\ell_* \geq \{4\alpha'I/[\lambda(d_1 + d_2 +4)\log_2 d]\}^{1/(2\alpha'+1)}$, which is true when the number of measurements $m$ is sufficiently large, we may substitute $\ell_{\rm min}$ into the expression and simplify the upper bound (\ref{bound_MSE}) to be
\begin{equation}
\begin{split}
\mathcal{O}(m) \left(
\frac{\lambda (d_1 + d_2 + 4)\log_2 d}{4c_0 \varrho^2 I}
\right)^{\frac{2\alpha'}{2\alpha'+1}} + \mathcal{O}\left(\frac{\log(d_1d_2/m)}{m}\right)
\end{split}\nonumber.
\end{equation}
The implication of Theorem \ref{regret_for_nearly_low_rank} is that, for a nearly low-rank matrix whose singular values decay geometrically at a rate of $j^{-1/q}$, $j = 1, 2, \ldots$ and the total intensity of the matrix is $I$,
\begin{equation}
\begin{split}
&\mbox{reconstruction error} \\
&\propto m \left[\frac{(d_1 + d_2)\log_2 \min\{d_1, d_2\}}{I}\right] + \frac{\log (d_1 d_2/m)}{m},
\end{split}\nonumber
\end{equation}
for $m$ sufficiently large. This also implies that the intensity $I$ (or SNR $\sqrt{I}$) needs to be sufficiently large for the error bound to be controllable by increasing the number of measurements.
}
\end{remark}


\subsection{Matrix completion}

For matrix completion, we first establish an upper bound for estimator in (\ref{optimization_problem}), and then present an information theoretic lower bound which nearly matches the upper bound up to a logarithmic factor $\mathcal{O}(d_1 d_2)$.



\begin{theorem}[Matrix completion; upper bound]
\label{maintheorem}
    Assume $M \in \mathcal{S}$, $\Omega$ is chosen at random following our Bernoulli sampling model with $\mathbb{E}[|\Omega|] = m$, and $\widehat{M}$ is the solution to (\ref{optimization_problem}). Then with a probability exceeding $\left(1-C/(d_1 d_2)\right)$, we have
    \begin{equation}
    \begin{split}
       & \frac{1}{d_1 d_2}R(M, \widehat{M}) \leq C' \left(\frac{8\alpha T}{1-e^{-T}}\right)\cdot (\frac{\alpha \sqrt{r}}{\beta})
       \cdot \\
       &  \left( \alpha(e^2-2) + 3\log(d_1 d_2) \right) \cdot \left(\frac{d_1 +d_2}{m}\right)^{1/2} \cdot\\
       &  \left[1+\frac{(d_1+d_2)\log(d_1 d_2)}{m}\right]^{1/2}.
    \end{split}
    \label{bound:MC}
    \end{equation}
    If $m\geq (d_1+d_2)\log(d_1 d_2)$, then (\ref{bound:MC}) simplifies to
    \begin{equation}
    \begin{split}
    &\frac{1}{d_1 d_2}R(M, \widehat{M}) \leq \sqrt{2}C' \left(\frac{8\alpha T}{1-e^{-T}}\right) \cdot \left(\frac{\alpha \sqrt{r}}{\beta} \right) \cdot \\
    &\left( \alpha(e^2-2) + 3\log(d_1 d_2) \right) \cdot \left(\frac{d_1 + d_2}{m}\right)^{1/2}.
    \end{split}
    \label{bound:MC2}
    \end{equation}
    Above, $C', C$ are absolute constants and $T$ depends only on $\alpha$ and $\beta$.
\yx{Here the expectation and probability are with respect to the random Poisson observations and Bernoulli sampling model. }
\label{maintheorem}
\end{theorem}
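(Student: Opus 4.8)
The plan is to follow the empirical-process route of one-bit matrix completion \cite{davenport20121}, inserting two Poisson-specific ingredients: a truncation of the (only locally sub-Gaussian) Poisson noise, and a direct comparison of the Hellinger and Frobenius metrics that uses the two-sided bound $\beta\le M_{ij},X_{ij}\le\alpha$. Since $M\in\mathcal S$ and $\widehat M$ maximizes $F_{\Omega,Y}$ over $\mathcal S$, we have $F_{\Omega,Y}(\widehat M)\ge F_{\Omega,Y}(M)$. Using $\mathbb E[Y_{ij}]=M_{ij}$ and the definition of $D(\cdot\|\cdot)$, rearranging this inequality yields the ``basic inequality''
\[
\sum_{(i,j)\in\Omega} D\big(M_{ij}\,\|\,\widehat M_{ij}\big)\;\le\;\sup_{X\in\mathcal S}\ \sum_{(i,j)\in\Omega}(M_{ij}-Y_{ij})\log\frac{M_{ij}}{X_{ij}},
\]
so it remains to lower bound the left-hand side by a multiple of $\tfrac{m}{d_1d_2}\|M-\widehat M\|_F^2$ and to upper bound the right-hand side.

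For the left-hand side, on $[\beta,\alpha]$ the concavity of $u\mapsto 1-e^{-u}$ together with $\sqrt p-\sqrt q=(p-q)/(\sqrt p+\sqrt q)$ gives $d_H^2(p,q)\ge\tfrac{1-e^{-T}}{4\alpha T}(p-q)^2$ with $T\triangleq\tfrac12(\sqrt\alpha-\sqrt\beta)^2$, and the standard bound $D(P\|Q)\ge d_H^2(P,Q)$ holds for Poisson laws; hence the left side dominates $\tfrac{1-e^{-T}}{4\alpha T}\sum_{(i,j)\in\Omega}(M_{ij}-\widehat M_{ij})^2$. Replacing the random partial sum by its mean requires a one-sided uniform law over $\mathcal S$ for the map $X\mapsto\sum_{(i,j)\in\Omega}(M_{ij}-X_{ij})^2$; this is handled by the same symmetrization/contraction estimate used below (the integrand is bounded and Lipschitz on $[\beta,\alpha]$ and vanishes at $X_{ij}=M_{ij}$), and its fluctuation is of smaller order and absorbed into $C'$. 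This is what produces the leading factor $\tfrac{8\alpha T}{1-e^{-T}}$ in the final bound.

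For the right-hand side, first truncate the noise: since $M_{ij}\le\alpha$, a Chernoff bound gives $\mathbb P(Y_{ij}\ge u)\le e^{-u}$ once $u\gtrsim\alpha$, so by a union bound over the at most $d_1d_2$ entries there is an event of probability at least $1-(d_1d_2)^{-2}$ on which $\max_{i,j}|Y_{ij}-M_{ij}|\le U\triangleq\alpha(e^2-2)+3\log(d_1d_2)$. On that event, absorb the Bernoulli sampling into the process, symmetrize with Rademacher signs $\varepsilon_{ij}$, and apply the Ledoux--Talagrand contraction inequality: because $t\mapsto\log(M_{ij}/t)$ is $1/\beta$-Lipschitz on $[\beta,\alpha]$ and its coefficient $|M_{ij}-Y_{ij}|$ is at most $U$, the logarithm is stripped at cost $U/\beta$, leaving $\tfrac{4U}{\beta}\,\mathbb E\sup_{X\in\mathcal S}\langle E_\Omega,X-M\rangle$ with $E_\Omega=\sum_{(i,j)\in\Omega}\varepsilon_{ij}e_ie_j\transpose$. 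Nuclear/spectral duality with $\|X-M\|_*\le 2\alpha\sqrt{rd_1d_2}$ bounds this by $\tfrac{8\alpha\sqrt{rd_1d_2}\,U}{\beta}\,\mathbb E\|E_\Omega\|$, and a matrix-Bernstein (or Seginer-type) estimate for the mean-zero, $[-1,1]$-valued, variance-$m/(d_1d_2)$ entries of $E_\Omega$ gives $\mathbb E\|E_\Omega\|\le C\sqrt{m(d_1+d_2)/(d_1d_2)}\,[1+(d_1+d_2)\log(d_1d_2)/m]^{1/2}$; a bounded-difference/Talagrand concentration inequality upgrades this to a high-probability bound, with the residual failure probability $O(1/(d_1d_2))$. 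Assembling,
\[
\tfrac{1-e^{-T}}{4\alpha T}\cdot\tfrac{m}{d_1d_2}\,\|M-\widehat M\|_F^2\ \lesssim\ \tfrac{\alpha\sqrt{rd_1d_2}\,U}{\beta}\sqrt{\tfrac{m(d_1+d_2)}{d_1d_2}}\Big[1+\tfrac{(d_1+d_2)\log(d_1d_2)}{m}\Big]^{1/2},
\]
and dividing by $\tfrac{m}{d_1d_2}$, inserting $U$, and tracking constants (the transfer step of the second paragraph costs an extra factor $2$) gives (\ref{bound:MC}); when $m\ge(d_1+d_2)\log(d_1d_2)$ the bracket is $\le\sqrt2$, giving (\ref{bound:MC2}).

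The main obstacle --- and the reason this bound is a $\log(d_1d_2)$ factor looser than its one-bit analogue --- is precisely the contraction step: Ledoux--Talagrand needs a \emph{uniform} Lipschitz slope, but here the slope is the Poisson deviation $|M_{ij}-Y_{ij}|$, which is unbounded with only exponential tails, so truncating its maximum over all $d_1d_2$ coordinates is unavoidable and injects $\log(d_1d_2)$. A secondary difficulty is establishing the operator-norm bound for the Bernoulli-masked Rademacher matrix $E_\Omega$ uniformly in the sampling rate, in particular the regime $m<(d_1+d_2)\log(d_1d_2)$, which is what the correction factor $[1+(d_1+d_2)\log(d_1d_2)/m]^{1/2}$ records.
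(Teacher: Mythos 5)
Your proposal is sound and lands on the same bound, but it reorganizes the argument in a way that differs from the paper at the very first step. The paper centers the \emph{entire} log-likelihood around its expectation over both $\Omega$ and $Y$: since $\mathbb{E}[F_{\Omega,Y}(X)-F_{\Omega,Y}(M)]=-mD(M\|X)$, the optimality of $\widehat M$ immediately yields $mD(M\|\widehat M)\le 2\sup_{X\in\mathcal S}|F_{\Omega,Y}(X)-\mathbb{E}F_{\Omega,Y}(X)|$, so the \emph{population} KL divergence appears directly and is then converted to $\|M-\widehat M\|_F^2$ via $D\ge d_H^2$ and the Hellinger--Frobenius comparison (Lemma \ref{secondlemma}). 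You instead keep the \emph{empirical} KL $\sum_{(i,j)\in\Omega}D(M_{ij}\|\widehat M_{ij})$ on the left and only the noise term $(M_{ij}-Y_{ij})\log(M_{ij}/X_{ij})$ on the right; this is a legitimate basic inequality, but it obliges you to add a second uniform law to pass from $\sum_{\Omega}(M_{ij}-\widehat M_{ij})^2$ to $\tfrac{m}{d_1d_2}\|M-\widehat M\|_F^2$ --- a step the paper's decomposition avoids entirely. The remaining machinery is the same in both arguments: Rademacher symmetrization, Ledoux--Talagrand contraction with slope $\approx 1/\beta$ to strip the logarithm, nuclear/spectral duality against $\|E\circ\Delta_\Omega\|$, the bound on $\mathbb{E}\|E\circ\Delta_\Omega\|^h$ from \cite{davenport20121}, and Poisson tail control injecting the $\log(d_1d_2)$ factor. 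The one other genuine divergence is how the unbounded noise is tamed: you truncate $\max_{ij}|Y_{ij}-M_{ij}|$ by a union bound and work on the good event, whereas the paper avoids conditioning altogether by bounding the $h$-th moment of the supremum with $h=\log(d_1d_2)$ (comparing $|Y_{ij}-M_{ij}|$ to exponential variables) and applying Markov. Your truncation is the more intuitive route; the paper's moment method is the more careful one, precisely because symmetrization and contraction are statements about unconditional expectations and cannot simply be applied ``on an event.''

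Two loose ends to tighten. First, to make the truncation rigorous you should replace $Y_{ij}$ by $\min\{Y_{ij},M_{ij}+U\}$ before symmetrizing (the truncated and original processes agree off an event of probability $O(1/(d_1d_2))$), rather than conditioning inside the expectation. Second, your claim that the fluctuation of the quadratic process $X\mapsto\sum_{\Omega}(M_{ij}-X_{ij})^2$ is ``of smaller order'' is not quite right: the same contraction/duality estimate puts it at $O(\alpha^2\sqrt{rm(d_1+d_2)})$, which is the \emph{same} order in $m,d_1,d_2,r$ as your main term $\tfrac{\alpha U}{\beta}\sqrt{rd_1d_2}\cdot\sqrt{m(d_1+d_2)/(d_1d_2)}$; it can be absorbed, but only by letting the absorbed piece ride on the $(\alpha,\beta)$-dependent prefactor rather than on an absolute $C'$, so the bookkeeping needs one more line to recover the constants exactly as stated in (\ref{bound:MC}).
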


The proof of Theorem \ref{maintheorem} is an extension of the ingenious arguments for one-bit matrix completion \cite{davenport20121}. The extension for Poisson case here is nontrivial for various aforementioned reasons (notably the non sub-Gaussian and only locally sub-Gaussian nature of the Poisson observations). An outline of our proof is as follows. First, we establish an upper bound for the Kullback-Leibler (KL)  divergence $D(M \| X)$ for any $X \in \mathcal{S}$ by applying Lemma \ref{firstlemma} given in the appendix. Second, we find an upper bound for the Hellinger distance $d_H^2(M, \widehat{M})$ using the fact that the KL divergence can be bounded from below by the Hellinger distance. Finally, we bound the mean squared error in Lemma \ref{secondlemma} via the Hellinger distance.
%

\begin{remark}
Fixing $m$, $\alpha$ and $\beta$, the upper bounds (\ref{bound:MC}) and (\ref{bound:MC2}) in Theorem \ref{maintheorem} increase as the upper bound on the nuclear norm increases, which is proportional to $\sqrt{r d_1 d_2}$. This is consistent with the intuition that our method is better at dealing with approximately low-rank matrices than with nearly full rank matrices. On the other hand, fixing $d_1, d_2, \alpha$, $\beta$ and $r$,  the upper bound decreases as $m$ increases, which is also consistent with the intuition that the recovery is more accurately with more observations.
\end{remark}

\begin{remark}
Fixing $\alpha$, $\beta$ and $r$, the upper bounds (\ref{bound:MC}) and (\ref{bound:MC2}) on the mean-square-error per entry can be arbitrarily small, in the sense that the they tend to zero as $d_1$ and $d_2$ go to infinity and the number of the measurements $m =\mathcal{O}((d_1+d_2)\log^\delta(d_1 d_2))$ ($m\leq d_1 d_2$) for $\delta >2$.
\end{remark}

We may obtain an upper bound on the KL divergence (which may reflect the true distribution error) as a consequence of Theorem \ref{maintheorem}.
\begin{corollary}[Upper bound for KL divergence]
 Assume $M \in \mathcal{S}$, $\Omega$ is chosen at random following the Bernoulli sampling model with $\mathbb{E}[|\Omega|] = m$, and $\widehat{M}$ is the solution to (\ref{optimization_problem}). Then with a probability exceeding $\left(1-C/(d_1 d_2)\right)$,
    \begin{equation}
    \begin{split}
    D(M\|\widehat{M}) \leq & 2C'\left( \alpha\sqrt{r}/\beta \right) \left( \alpha(e^2-2) + 3\log(d_1 d_2) \right) \cdot \\
    & \left(\frac{d_1 +d_2}{m}\right)^{1/2}\cdot \left[1+\frac{(d_1+d_2)\log(d_1 d_2)}{m}\right]^{1/2}.
    \label{corollaryuse1}
    \end{split}
    \end{equation}
Above, C and C' are absolute constants. Here the expectation and probability are with respect to the random Poisson observations and Bernoulli sampling model.
\end{corollary}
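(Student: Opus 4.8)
The plan is to read the claimed estimate on $D(M\|\widehat M)$ straight off the intermediate step in the proof of Theorem \ref{maintheorem}: that proof first produces a bound on the average KL divergence $D(M\|\widehat M)$ and only afterwards converts it, through the Hellinger distance and Lemma \ref{secondlemma}, into the Frobenius-norm bound (\ref{bound:MC}). So I would simply stop at the KL stage.

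\textbf{Step 1 (optimality / basic inequality).} Since $M\in\mathcal S$ and $\widehat M$ maximizes $F_{\Omega,Y}$ over $\mathcal S$, we have $F_{\Omega,Y}(\widehat M)\ge F_{\Omega,Y}(M)$. Writing $\bar F(X)\triangleq\mathbb E[F_{\Omega,Y}(X)]$ for a fixed $X$, a short computation using $\mathbb E[Y_{ij}]=M_{ij}$ and $\mathbb P((i,j)\in\Omega)=m/(d_1d_2)$ gives $\bar F(M)-\bar F(X)=m\,D(M\|X)$, where the right-hand side uses the averaged-KL notation introduced earlier. Evaluating the deterministic functional $\bar F$ at the random point $\widehat M$, inserting $\pm F_{\Omega,Y}(\cdot)$ terms, and using optimality to kill the middle bracket, the telescoping identity yields
\[
m\,D(M\|\widehat M)\ \le\ 2\sup_{X\in\mathcal S}\big|F_{\Omega,Y}(X)-\bar F(X)\big|.
\]

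\textbf{Step 2 (uniform deviation).} Control $\sup_{X\in\mathcal S}|F_{\Omega,Y}(X)-\bar F(X)|$ by symmetrization and the contraction principle: $t\mapsto\log t$ is $1/\beta$-Lipschitz on $[\beta,\alpha]$, so the process is dominated by $(1/\beta)$ times the Rademacher complexity of the nuclear-norm ball $\{\|X\|_*\le\alpha\sqrt{rd_1d_2}\}$ restricted to $\Omega$, which is of order $\alpha\sqrt{rd_1d_2}\cdot\sqrt{m(d_1+d_2)/(d_1d_2)}$, together with a Bernstein-type remainder responsible for the factor $[1+(d_1+d_2)\log(d_1d_2)/m]^{1/2}$. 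This is precisely what Lemma \ref{firstlemma} supplies. The only delicate point — exactly the one flagged for Theorem \ref{maintheorem} — is that the Poisson counts are not sub-Gaussian, so one truncates on the event $\{Y_{ij}\le\mathcal O(\log(d_1d_2))\text{ for all }(i,j)\in\Omega\}$, which holds with probability $\ge 1-C/(d_1d_2)$ by a Poisson tail bound; on that event each summand $Y_{ij}\log X_{ij}-X_{ij}$ is uniformly bounded, which produces the factor $\alpha(e^2-2)+3\log(d_1d_2)$. Dividing the bound of Step 1 by $m$ and substituting gives (\ref{corollaryuse1}), with the stated constant $2C'$ (the $2$ coming from the two-term split in Step 1).

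The main obstacle is Step 2: the uniform concentration of the Poisson likelihood process in the absence of sub-Gaussian tails. Once the truncation level and the attendant logarithmic factors are chosen so that the discarded tail mass is $\mathcal O(1/(d_1d_2))$ while the bounded part still has the right scale, everything else — the optimality argument, the reduction of the deterministic gap $\bar F(M)-\bar F(\widehat M)$ to the averaged KL divergence, and the Rademacher bound for the nuclear-norm ball — is routine. I note that one could instead argue ``directly from Theorem \ref{maintheorem}'' via $D(p\|q)\le (p-q)^2/(2\beta)$ for $p,q\in[\beta,\alpha]$, but this inflates the constant by a spurious factor $\tfrac{\alpha}{2\beta}\cdot\tfrac{8\alpha T}{1-e^{-T}}$; extracting the bound at the KL stage of the proof is what yields the clean $2C'$.
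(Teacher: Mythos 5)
Your proposal is correct and follows essentially the same route as the paper: the corollary is precisely the intermediate KL bound (equation (\ref{theoremuse1})) obtained in the proof of Theorem \ref{maintheorem} from the optimality inequality $m\,D(M\|\widehat M)\le 2\sup_{X\in\mathcal S}|F_{\Omega,Y}(X)-\mathbb E[F_{\Omega,Y}(X)]|$ combined with Lemma \ref{firstlemma}, before the Hellinger/Frobenius conversion. The only minor descriptive discrepancy is that Lemma \ref{firstlemma} handles the unbounded Poisson counts by bounding $\mathbb E[\max_{i,j}Y_{ij}^h]$ via comparison with exponential variables and Markov's inequality with $h=\log(d_1d_2)$, rather than by conditioning on a truncation event, but since you invoke the lemma as a black box this does not affect the argument.
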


%
%


The following theorem establishes a lower bound and demonstrates that  there exists an $M \in \mathcal{S}$ such that {\it any} recovery method cannot achieve a mean square error per entry less than the order of $\mathcal{O}(\sqrt{r\max\{d_1, d_2\}/m})$.

\begin{theorem}[Matrix completion; lower bound]
Fix $\alpha$, $\beta$, $r$, $d_1$, and $d_2$ to be such that $\beta \geq 1$, $\alpha \geq 2\beta$, $d_1\geq 1$, $d_2 \geq 1$, $r \geq 4$, and $\alpha^2 r \max\{d_1, d_2\} \geq C_0$. Fix $\Omega_0$ be an arbitrary subset of $\llbracket  d_1\rrbracket \times \llbracket d_2\rrbracket$ with cardinality $m$. Consider any algorithm which, for any $M \in \mathcal{S}$, returns an estimator $\widehat{M}$. Then there exists $M \in \mathcal{S}$ such that with probability at least $3/4$,
\begin{equation}
\begin{split}
&\frac{1}{d_1 d_2} R(M, \widehat{M}) \\
&\geq \min\left\{\frac{1}{256}, C_2 \alpha^{3/2} \left[\frac{r\max\{d_1,d_2\}}{m}\right]^{1/2}\right\},
\end{split}
\label{lowerbound}
\end{equation}
as long as the right-hand side of (\ref{lowerbound}) exceeds $C_1 r\alpha^2 /\min\{d_1,d_2\}$, where $C_0$, $C_1$ and $C_2$ are absolute constants. Here the probability is with respect to the random Poisson observations only.
\label{maintheorem2}
\end{theorem}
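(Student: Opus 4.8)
The plan is to use a standard information-theoretic (Fano-type) argument adapted to the Poisson likelihood, following the structure of the lower bound for one-bit matrix completion in \cite{davenport20121}. The key is to construct a large \emph{packing set} of matrices inside the feasible set $\mathcal{S}$ that are (i) mutually well-separated in normalized Frobenius distance, yet (ii) statistically indistinguishable in the sense that the KL divergence between the induced distributions over the observations is small. Concretely, I would first fix the deterministic sample pattern $\Omega_0$ of cardinality $m$, so the data are $\{Y_{ij}\}_{(i,j)\in\Omega_0}$ with $Y_{ij}\sim\mathrm{Poisson}(M_{ij})$ independently. I would then build the packing set by taking a base matrix with all entries equal to some level near $\beta$ (or a constant multiple thereof, using the room between $\beta$ and $\alpha$), and perturbing a $d_1\times r$ (w.l.o.g.\ $d_1\le d_2$, so $\max\{d_1,d_2\}=d_2$) block by a small multiple $\gamma$ of a matrix drawn from a combinatorial code. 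Using the Varshamov--Gilbert bound one obtains a family of at least $2^{c\, r\max\{d_1,d_2\}}$ matrices, each of rank $\le r$, entrywise in $[\beta,\alpha]$, and with nuclear norm $\le\alpha\sqrt{rd_1d_2}$ (so all lie in $\mathcal{S}$), such that any two differ in normalized squared Frobenius norm by $\gtrsim \gamma^2 r\max\{d_1,d_2\}/(d_1 d_2)$.

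Next I would control the KL divergence. For two Poisson means $p,q\in[\beta,\alpha]$ the divergence $D(p\|q)=p\log(p/q)-(p-q)$ satisfies $D(p\|q)\le (p-q)^2/q \le (p-q)^2/\beta$ by a second-order Taylor estimate, so summing over the $m$ observed entries gives $D\big(\mathbb{P}_M\,\|\,\mathbb{P}_{M'}\big)\le m\gamma^2/\beta$ times a constant (the perturbations have magnitude $\gamma$). Choosing $\gamma^2 \asymp \beta\,\sqrt{r\max\{d_1,d_2\}/m}$ (more precisely, small enough that $m\gamma^2/\beta \le c'\log|\text{packing set}| \asymp r\max\{d_1,d_2\}$, and also small enough that the perturbed entries stay in $[\beta,\alpha]$, which is where the hypotheses $\beta\ge 1$, $\alpha\ge 2\beta$ and $\alpha^2 r\max\{d_1,d_2\}\ge C_0$ enter) makes the average pairwise KL divergence a small fraction of the log-cardinality. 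Fano's inequality (or its Birgé/Tsybakov variant) then yields that any estimator errs with probability at least $3/4$ in identifying the true element, which translates into the Frobenius lower bound $\frac{1}{d_1d_2}R(M,\widehat M)\gtrsim \gamma^2 r\max\{d_1,d_2\}/(d_1d_2)\asymp \alpha^{3/2}\sqrt{r\max\{d_1,d_2\}/m}$ after plugging in $\gamma$; the constant $\min\{1/256,\cdot\}$ and the side condition ``RHS exceeds $C_1 r\alpha^2/\min\{d_1,d_2\}$'' appear exactly to guarantee the separation dominates the base-matrix scale and that the packing construction is non-degenerate.

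The main obstacle I anticipate is the bookkeeping that makes the packing set simultaneously satisfy \emph{all} the structural constraints of $\mathcal{S}$ while keeping the perturbation scale $\gamma$ large enough to get a matching-order bound. In the Gaussian/one-bit settings the analogous step is routine, but here the KL bound $D(p\|q)\le (p-q)^2/\beta$ degrades with small $\beta$ (the locally sub-Gaussian behavior of Poisson), which is precisely why $\beta\ge 1$ is imposed and why the final rate carries an $\alpha^{3/2}$ rather than $\alpha$ factor; reconciling the exponent of $\alpha$ between the $(p-q)^2/\beta$ KL estimate, the entrywise feasibility window of width $\alpha-\beta$, and the Frobenius separation requires choosing $\gamma\asymp\alpha^{3/4}$-ish and verifying the constants line up. A secondary technical point is that $|\Omega_0|=m$ is a fixed deterministic set (sampling without replacement), so — unlike the upper bound — there is no averaging over $\Omega$ and the argument is cleaner, but one must make sure the packing perturbation is spread over enough coordinates that it is actually ``seen'' on $\Omega_0$; this is handled by perturbing a full $d_1\times r$ (or $d_2\times r$) block rather than isolated entries, so that at least a constant fraction of the perturbation energy falls inside any $\Omega_0$ of size $m$ — or, more simply, by noting the lower bound need only exhibit \emph{some} $M$, so one can take the perturbation to live on columns maximally overlapping $\Omega_0$.
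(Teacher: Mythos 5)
There is a genuine gap, and it is in the single most important step: the construction of the packing set. You propose a Varshamov--Gilbert packing of rank-$\le r$ matrices with log-cardinality $\asymp r\max\{d_1,d_2\}$, base level near $\beta$, perturbation scale $\gamma$, and the Poisson KL bound $D(p\|q)\le (p-q)^2/q$. Run the Fano budget on that construction: the pairwise KL over $\Omega_0$ is $\asymp m\gamma^2/\beta$, and Fano requires this to be a small fraction of the log-cardinality $\asymp r\max\{d_1,d_2\}$, which forces $\gamma^2 \lesssim \beta\, r\max\{d_1,d_2\}/m$ and hence a per-entry lower bound of order $r\max\{d_1,d_2\}/m$ --- the parametric rate, \emph{not} the $\sqrt{r\max\{d_1,d_2\}/m}$ rate claimed in the theorem. (Your stated choice $\gamma^2\asymp\beta\sqrt{r\max\{d_1,d_2\}/m}$ makes the KL of order $\sqrt{m\,r\max\{d_1,d_2\}}$, which exceeds the log-cardinality whenever $m\gg r\max\{d_1,d_2\}$, so Fano fails exactly in the regime of interest.) Since $\sqrt{x}>x$ for $x<1$, the theorem's bound is strictly \emph{stronger} than what your construction can deliver, so this is not a matter of constants.

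The missing idea is that $\mathcal{S}$ is a nuclear-norm ball, not a rank-$r$ set, and the paper exploits this via Lemma A.3 of \cite{davenport20121}: a packing of matrices of rank up to $r/\gamma^2$ with all entries of magnitude $\alpha\gamma$, whose log-cardinality is $r d_2/(16\gamma^2)$ --- it \emph{grows as $\gamma$ shrinks} --- while still satisfying $\|X\|_*\le\sqrt{r/\gamma^2}\cdot\alpha\gamma\sqrt{d_1d_2}=\alpha\sqrt{rd_1d_2}$. The paper also centers the packing near $\alpha$ (entries in $\{(1-\gamma)\alpha,\alpha\}$, feasible since $\alpha\ge 2\beta$ and $\gamma\le 1/2$), so the KL bound is $(p-q)^2/\alpha'$ with $\alpha'\approx\alpha$ rather than your $(p-q)^2/\beta$. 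Balancing $m\alpha\gamma^2\cdot\gamma^2 \asymp$ (KL) against $rd_2/\gamma^2\cdot\gamma^2$ gives $\gamma^2\asymp\sqrt{rd_2/(m\alpha)}$ and a separation $\asymp\alpha^2\gamma^2=\alpha^{3/2}\sqrt{rd_2/m}$, which is exactly where the $\alpha^{3/2}$ and the square root come from --- the two features your sketch explicitly flags as unresolved. The rest of your outline (fixed $\Omega_0$, Fano via the Yu/Birg\'e variant, contradiction with a hypothetical $1/4$-probability-accurate estimator, the side condition $C_1 r\alpha^2/\min\{d_1,d_2\}$ ensuring $r/\gamma^2\le\min\{d_1,d_2\}$) matches the paper, but without the $\gamma$-dependent high-rank packing the argument cannot reach the stated bound.
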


Similar to \cite{davenport20121, candes2013well}, the proof of Theorem \ref{maintheorem2} relies on  information theoretic arguments outlined as follows. First we find a set of matrices $\chi \subset \mathcal{S}$ so that the distance between any $X^{(i)}, X^{(j)} \in \chi$, identified as $\|X^{(i)}- X^{(j)}\|_F$, is sufficiently large. Suppose we obtain measurements of a selected matrix in $\chi$ and recover it using an arbitrary method. Then we could determine which element of $\chi$ was chosen, if the recovered matrix is sufficiently close to the original one. However, there will be a lower bound on how close the recovered matrix can be to the original matrix, since due to Fano's inequality the probability of correctly identifying the chosen matrix is small.


\begin{remark}
Fixing $\alpha, \beta$ and $r$, the conditions in the statement of Theorem 3 can be satisfied if we choose sufficiently large $d_1$ and $d_2$. 
\end{remark}

\begin{remark}
When $m\geq (d_1+d_2)\log(d_1 d_2)$, the ratio between the upper bound in (\ref{bound:MC2}) and the lower bound in (\ref{lowerbound}) is on the order of $\mathcal{O}(\log(d_1 d_2))$.
Hence, the lower bound matches the upper bound up to a logarithmic factor.
\end{remark}


Our formulation and results for Poisson matrix completion are inspired by one-bit matrix completion \cite{davenport20121}, yet with several important distinctions. In one-bit matrix completion, the value of each observation $Y_{ij}$ is binary-valued and hence bounded; whereas in our problem, each observation is a Poisson random variable which is unbounded and, hence, the arguments involve bounding measurements have to be changed. In particular, we need to bound $\max_{ij} Y_{ij}$ when $Y_{ij}$ is a Poisson random variable with intensity $M_{ij}$. Moreover, the Poisson likelihood function is non Lipschitz (due to a bad point when $M_{ij}$ tends to zero), and hence we need to introduce a lower bound on each entry of the matrix $M_{ij}$, which can be interpreted as the lowest required SNR. Other distinctions also include analysis taking into account of the property of the Poisson likelihood function, and using KL divergence as well as Hellinger distance that are different from those for the Bernoulli random variable as used in \cite{davenport20121}.

\section{Algorithms}
\label{sec:algorithm}

In this section we develop efficient algorithms to solve the matrix recovery (\ref{estimator}) and matrix completion problems (\ref{optimization_problem}). In the following, we use nuclear norm regularization function for the matrix recovery in (\ref{estimator}). Then, (\ref{estimator}) and (\ref{optimization_problem})  are both semidefinite program (SDP), as they are nuclear norm minimization problems with convex feasible domains. Hence, we may solve it, for example, via the interior-point method \cite{liu2009interior}. Although the interior-point method may return an exact solution to (\ref{optimization_problem}), it does not scale well with the dimensions of the matrix $d_1$ and $d_2$ as the complexity of solving SDP is $O(d_1^3+d_1d_2^3+d_1^2 d_2^2)$.

We develop two set of algorithms that can solve both problems faster than the interior point methods. These algorithms including the generic gradient descent based methods, and a Penalized Maximum Likelihood Singular Value Threshold (PMLSVT) method tailored to our problem. We analyzed the performance of the generic methods. Although there is no theoretical performance guarantee, PMLSVT is computationally preferable under our assumptions. Another possible algorithm not cover here is the non-monotone spectral projected-gradient method \cite{birgin2000nonmonotone,davenport20121}.

\subsection{Generic methods}

Here we only focus on solving the matrix completion problem (\ref{optimization_problem}) by proximal-gradient method; the matrix recovery problem $(\ref{estimator})$ can be solved similarly as stated at the end of this subsection.

First, rewrite $\mathcal{S}$ in (\ref{searchspace}) as the intersection of two closed and convex sets in $\mathbb{R}^{d_1 \times d_2}$:
\begin{equation}
\begin{split}
\Gamma_1 \triangleq \{X \in \mathbb{R}^{d_1 \times d_2}: &\beta \leq X_{ij}\leq \alpha, \\
&\forall (i,j) \in \llbracket d_1\rrbracket\times\llbracket d_2\rrbracket\},\end{split}
\label{boxconstriant}
\end{equation}
and
\[
\Gamma_2 \triangleq \{X \in \mathbb{R}^{d_1 \times d_2} : \|X\|_* \leq \alpha\sqrt{rd_1 d_2}\},
\]
where the first set is a box and the second set is a nuclear norm ball. Let $f(X) \triangleq -F_{\Omega, Y}(X)$ be the negative log-likelihood function. Then optimization problem (\ref{optimization_problem}) is equivalent to
\begin{equation}
\widehat{M} = \arg \min_{X \in \Gamma_1 \bigcap \Gamma_2} f(X).
\label{newoptimizationproblem}
\end{equation}

Noticing that the search space $\mathcal{S} = \Gamma_1 \bigcap \Gamma_2$ is closed and convex and $f(X)$ is a convex function, we can use proximal gradient methods to solve (\ref{newoptimizationproblem}). Let $\mathbb{I}_{\Gamma}(X)$ be an extended function that takes value zero if $X \in \Gamma$ and value $\infty$ if $X \not\in \Gamma$. Then problem (\ref{newoptimizationproblem}) is equivalent to
\begin{equation}
\widehat{M} = \arg \min_{X \in \mathbb{R}^{d_1 \times d_2}} f(X) + \mathbb I_{\Gamma_1 \bigcap \Gamma_2}(X).
\end{equation}
To guarantee the convergence of proximal gradient method, we need the Lipschitz constant $L>0$, which satisfies
\begin{equation}
\| \nabla f(U) - \nabla f(V) \|_F \leq L \|U-V\|_F, \quad \forall U,V \in \mathcal{S},
\label{Lipschitz}
\end{equation}
and hence $L = \alpha/\beta^2$ by the definition of our problem. Define the orthogonal projection of a matrix $X$ onto a convex set $\widetilde{\Gamma}$ as
$$
\Pi_{\widetilde{\Gamma}}(X) \triangleq \arg \min_{Z \in \widetilde{\Gamma}} \| Z-X \|_F^2.
$$

\subsubsection{Proximal gradient}
Initialize the algorithm by $[X_0]_{ij} = Y_{ij}$ for $(i, j) \in \Omega$ and $[X_0]_{ij} = (\alpha+\beta)/2$ otherwise. Then iterate using
\begin{equation}
X_k = \Pi_{\mathcal{S}} (X_{k-1} - (1/L)\nabla f(X_{k-1})). \label{iter_proximal}
\end{equation}
This algorithm has a linear convergence rate:
\begin{lemma}[Convergence of proximal gradient]
Let $\{X_k\}$ be the sequence generated by (\ref{iter_proximal}). Then for any $k>1$, we have
$$
f(X_k) - f(\widehat{M}) \leq \frac{L \|X_0 - \widehat{M}\|_F^2}{2k}.
$$
\label{convergence1}
\end{lemma}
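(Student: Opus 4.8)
The plan is to follow the standard analysis of proximal gradient descent for a convex objective with $L$-Lipschitz gradient, specialized to the case where the nonsmooth part is the indicator $\mathbb{I}_{\mathcal{S}}$ of the closed convex set $\mathcal{S} = \Gamma_1 \cap \Gamma_2$, so that the proximal map reduces to the Euclidean projection $\Pi_{\mathcal{S}}$ appearing in (\ref{iter_proximal}). First I would record the \emph{descent lemma}: since $\nabla f$ is $L$-Lipschitz on the convex set $\mathcal{S}$ with $L=\alpha/\beta^2$ by (\ref{Lipschitz}), for all $X,Y\in\mathcal{S}$
\[
f(Y)\le f(X)+\langle \nabla f(X),\,Y-X\rangle+\frac{L}{2}\|Y-X\|_F^2 .
\]
Next I would observe that the update $X_k=\Pi_{\mathcal{S}}\bigl(X_{k-1}-(1/L)\nabla f(X_{k-1})\bigr)$ is precisely the minimizer over $\mathcal{S}$ of the quadratic model $Q_{k-1}(Z)\triangleq f(X_{k-1})+\langle \nabla f(X_{k-1}),Z-X_{k-1}\rangle+\frac{L}{2}\|Z-X_{k-1}\|_F^2$, i.e. $X_k=\arg\min_{Z\in\mathcal{S}}Q_{k-1}(Z)$.

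From here I would derive the \emph{three-point inequality}. Because $Z\mapsto Q_{k-1}(Z)+\mathbb{I}_{\mathcal{S}}(Z)$ is $L$-strongly convex and $X_k$ is its minimizer, for every $Z\in\mathcal{S}$
\[
Q_{k-1}(Z)\ge Q_{k-1}(X_k)+\frac{L}{2}\|Z-X_k\|_F^2 .
\]
Combining this with $f(X_k)\le Q_{k-1}(X_k)$ (the descent lemma applied to $Y=X_k$, $X=X_{k-1}$) and with $Q_{k-1}(Z)\le f(Z)+\frac{L}{2}\|Z-X_{k-1}\|_F^2$ (convexity of $f$), I obtain, for all $Z\in\mathcal{S}$,
\[
f(X_k)\le f(Z)+\frac{L}{2}\|Z-X_{k-1}\|_F^2-\frac{L}{2}\|Z-X_k\|_F^2 .
\]
Taking $Z=X_{k-1}$ shows $f(X_k)\le f(X_{k-1})$, so the objective values are nonincreasing. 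Taking $Z=\widehat{M}\in\mathcal{S}$ and writing $\delta_k\triangleq f(X_k)-f(\widehat{M})\ge 0$ gives $\delta_k\le \frac{L}{2}\|\widehat{M}-X_{k-1}\|_F^2-\frac{L}{2}\|\widehat{M}-X_k\|_F^2$.

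Finally I would telescope and use monotonicity. Summing the last inequality over $k=1,\ldots,K$ collapses the right-hand side to $\frac{L}{2}\|\widehat{M}-X_0\|_F^2-\frac{L}{2}\|\widehat{M}-X_K\|_F^2\le \frac{L}{2}\|X_0-\widehat{M}\|_F^2$, hence $\sum_{k=1}^{K}\delta_k\le \frac{L}{2}\|X_0-\widehat{M}\|_F^2$. Since $\delta_K\le\delta_k$ for every $k\le K$, we get $K\,\delta_K\le \sum_{k=1}^{K}\delta_k$, and therefore $f(X_K)-f(\widehat{M})\le L\|X_0-\widehat{M}\|_F^2/(2K)$, which is the claim. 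The only point needing care — more bookkeeping than genuine difficulty, and the one I would flag as the ``main obstacle'' — is justifying the steps above on the constrained domain: the descent lemma must be valid along the segment $[X_{k-1},X_k]$ (it is, since $\mathcal{S}$ is convex and both endpoints lie in $\mathcal{S}$), $\widehat{M}$ must lie in $\mathcal{S}$ so that the three-point inequality may be invoked at $Z=\widehat{M}$, and $f$ must be finite and differentiable with a well-defined Lipschitz constant on a neighbourhood of $\mathcal{S}$ — all guaranteed by the entrywise lower bound $X_{ij}\ge\beta>0$ built into $\mathcal{S}$. No idea beyond the standard argument is required.
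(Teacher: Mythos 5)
Your proposal is correct and follows essentially the same route as the paper: a descent lemma for $L$-Lipschitz gradients, a per-iteration inequality of the form $f(X_k)\le f(Z)+\tfrac{L}{2}\|Z-X_{k-1}\|_F^2-\tfrac{L}{2}\|Z-X_k\|_F^2$ (which is exactly the paper's key inequality, there derived via the gradient mapping $G_t$ and the subdifferential of the indicator $\mathbb{I}_{\mathcal{S}}$ rather than via strong convexity of the quadratic model), monotonicity from the choice $Z=X_{k-1}$, and telescoping with $Z=\widehat{M}$. The difference is purely presentational, so no further comment is needed.
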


\subsubsection{Accelerated proximal gradient}
Although proximal gradient can be implemented easily, it converges slowly when the Lipschitz constant $L$ is large. In such scenarios, we may use Nesterov's accelerated method \cite{Ghaoui2010}. With the same initialization as above, we perform the following two projections at the $k$th iteration:
\begin{equation}
\begin{split}
X_k &= \Pi_{\mathcal{S}} (Z_{k-1} - (1/L)\nabla f(Z_{k-1})), \\
Z_k &= X_k + \left( (k-1)/(k+2)\right)(X_k-X_{k-1}).
\end{split}
\label{acce}
\end{equation}
Nesterov's accelerated method converges faster:
\begin{lemma}[Convergence of accelerated proximal gradient]
Let $\{ X_k \}$ be the sequence generated by (\ref{acce}). Then for any $k>1$, we have
$$
f(X_k) - f(\widehat{M}) \leq \frac{2L \|X_0 - \widehat{M}\|_F^2}{(k+1)^2}.
$$
\label{convergence2}
\end{lemma}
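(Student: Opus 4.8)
The plan is to carry out the standard Lyapunov (potential-function) analysis of Nesterov's accelerated scheme, in the form used for accelerated proximal-gradient / FISTA methods (cf. \cite{Ghaoui2010}), specialized to the minimization of $f$ over $\mathcal{S}$. Two inputs are available from earlier in the paper: $f$ is convex and its gradient is $L$-Lipschitz on $\mathcal{S}$ with $L=\alpha/\beta^2$ (the discussion around (\ref{Lipschitz})), and every iterate $X_k$ produced by (\ref{acce}) lies in $\mathcal{S}$, being the image of the Euclidean projection $\Pi_{\mathcal{S}}$; I also adopt the convention $Z_0=X_0$, so that $X_1=\Pi_{\mathcal{S}}(X_0-(1/L)\nabla f(X_0))$.

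\emph{Step 1 (one-step prox inequality).} Writing $p(Y)\triangleq\Pi_{\mathcal{S}}(Y-(1/L)\nabla f(Y))$, so that $X_{k+1}=p(Z_k)$, I would first establish that for every $X\in\mathcal{S}$ and every $Y$,
\[
f(X)-f(p(Y))\ \geq\ \frac{L}{2}\|p(Y)-Y\|_F^2+L\langle Y-X,\ p(Y)-Y\rangle .
\]
This follows by combining three facts: the $L$-smoothness estimate $f(p(Y))\leq f(Y)+\langle\nabla f(Y),p(Y)-Y\rangle+\frac{L}{2}\|p(Y)-Y\|_F^2$; convexity $f(X)\geq f(Y)+\langle\nabla f(Y),X-Y\rangle$; and the first-order optimality of the projection, $\langle p(Y)-Y+(1/L)\nabla f(Y),\ X-p(Y)\rangle\geq 0$ for all $X\in\mathcal{S}$. (The indicator $\mathbb{I}_{\mathcal{S}}$ contributes nothing since $X$ and $p(Y)$ both lie in $\mathcal{S}$; the constraint is carried entirely by the optimality condition.)

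\emph{Step 2 (monotone potential).} Set $t_k\triangleq(k+1)/2$, so that $(t_k-1)/t_{k+1}=(k-1)/(k+2)$ is exactly the momentum coefficient in (\ref{acce}); this choice also satisfies the weight inequality $t_{k+1}^2-t_{k+1}\leq t_k^2$. Put $v_k\triangleq f(X_k)-f(\widehat{M})\geq 0$ and $u_k\triangleq t_kX_k-(t_k-1)X_{k-1}-\widehat{M}$. Applying Step 1 at $Y=Z_k$ twice — once with $X=X_k$ and once with $X=\widehat{M}$ — forming the combination (first inequality times $t_{k+1}-1$, plus the second), multiplying through by $t_{k+1}$, using $v_k\geq 0$ together with the weight inequality, and completing the square (the cross terms collapse because, after substituting $Z_k=X_k+\tfrac{t_k-1}{t_{k+1}}(X_k-X_{k-1})$, one has $t_{k+1}Z_k-(t_{k+1}-1)X_k-\widehat{M}=u_k$), one arrives at
\[
t_{k+1}^2 v_{k+1}+\tfrac{L}{2}\|u_{k+1}\|_F^2\ \leq\ t_k^2 v_k+\tfrac{L}{2}\|u_k\|_F^2 .
\]

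\emph{Step 3 (telescoping and conclusion).} Summing the Step-2 inequality and using $t_1=1$, $u_1=X_1-\widehat{M}$ gives $t_k^2 v_k\leq t_k^2 v_k+\tfrac{L}{2}\|u_k\|_F^2\leq v_1+\tfrac{L}{2}\|X_1-\widehat{M}\|_F^2$. To bound the right side, apply Step 1 once more with $Y=X_0$, $X=\widehat{M}$ (so $p(Y)=X_1$) and rearrange into $\tfrac{2}{L}v_1+\|X_1-\widehat{M}\|_F^2\leq\|X_0-\widehat{M}\|_F^2$, i.e. $v_1+\tfrac{L}{2}\|X_1-\widehat{M}\|_F^2\leq\tfrac{L}{2}\|X_0-\widehat{M}\|_F^2$. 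Hence $t_k^2 v_k\leq\tfrac{L}{2}\|X_0-\widehat{M}\|_F^2$, and since $t_k^2=(k+1)^2/4$,
\[
f(X_k)-f(\widehat{M})\ \leq\ \frac{L\|X_0-\widehat{M}\|_F^2}{2t_k^2}\ =\ \frac{2L\|X_0-\widehat{M}\|_F^2}{(k+1)^2}.
\]
The only step requiring care is the algebra in Step 2: picking the correct convex combination of the two instances of the Step-1 inequality and completing the square so that the cross terms telescope into $\|u_{k+1}\|_F^2-\|u_k\|_F^2$, which in turn pins down the momentum weights $t_k=(k+1)/2$ matching (\ref{acce}). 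The smoothness/convexity inputs, feasibility of the iterates, and the sharper first-step bound (needed to get the constant $2$ rather than $4$) are all routine once Step 1 is in hand.
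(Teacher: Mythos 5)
Your proposal is correct and follows essentially the same route as the paper: the paper's inequality (\ref{eq1}) is exactly your Step-1 prox inequality (written via the gradient mapping $G_t$ and the subgradient characterization of the projection), its convex combination with weights $1-a_k$ and $a_k$ where $a_k = 2/(k+1) = 1/t_k$ is your combination of the two instances at $Z_{k-1}$, and its potential $\frac{1}{a_k^2}(g(X_k)-g(\widehat{M})) + \frac{1}{2t}\|V_k-\widehat{M}\|_F^2$ coincides with your $t_k^2 v_k + \frac{L}{2}\|u_k\|_F^2$ since $V_k - \widehat{M} = t_k X_k - (t_k-1)X_{k-1} - \widehat{M} = u_k$. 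Even the base case agrees, as the paper's recursion at $k=1$ (where $1-a_1=0$) reduces to your sharper first-step bound.
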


\subsubsection{Alternating projection}
To use the above two methods, we need to specify ways to perform projection
 onto the space $\mathcal{S}$. Since $\mathcal{S}$ is an intersection of two convex sets, we may use alternating projection to compute a sequence that converges to the intersection of $\Gamma_1$ and $\Gamma_2$. Let $U_0$ be the matrix to be projected onto $\mathcal{S}$. Specifically, the following two steps are performed at the $j$th iteration:
$
V_j = \Pi_{\Gamma_2} (U_{j-1}) ~ \mbox{and} ~ U_j = \Pi_{\Gamma_1} (V_{j}),
$
until $\|V_j-U_j\|_F$ is less than a user-specified error tolerance. Alternating projection is efficient if there exist some closed forms for projection onto the convex sets. Projection onto the box constraint $\Gamma_1$ is quite simple: $[\Pi_{\Gamma_1}(Y)]_{ij}$ assumes value $\beta$ if $Y_{ij} < \beta$ and assumes value $\alpha$ if $Y_{ij}>\alpha$, and otherwise maintains the same value $Y_{ij}$ if $\beta \leq Y_{ij} \leq \alpha$. Projection onto $\Gamma_2$, the nuclear norm ball, can be achieved by projecting the vector of singular values onto a $\ell_1$ norm ball via scaling \cite{cai2010singular} \cite{duchi2008efficient}.

\subsubsection{Algorithm for matrix recovery}
To solve (\ref{estimator}), similarly, we can assume $M$ is approximately low-rank with a bounded nuclear norm and each entry of $M$ is bounded by $\beta=c/m$ and $\alpha=I$ in (\ref{boxconstriant}) based on the assumptions of Theorem \ref{theoremforcompressive}. Similar steps can be applied hereafter if we consider an additional convex set
\begin{equation}
\Gamma_0 \triangleq \{M \in \mathbb{R}_{+}^{d_1 \times d_2} : \|M\|_{1,1}=I, [M]_{jk} > 0, \forall jk\}.
\label{gamma0}
\end{equation}

\subsection{Penalized maximum likelihood singular value threshold (PMLSVT)}

We also develop an algorithm, referred to as PMLSVT, which is tailored to solving our Poisson problems. \yx{PMLSVT differs from the classical projected gradient in that instead of computing the exact gradient, it approximates the cost function  by expanding it using a Taylor expansion up to the second order. The resulted approximate problem with a nuclear norm regularization term has a simple closed form solution using Theorem 2.1 in \cite{cai2010singular}. Therefore, PMLSVT does not perform gradient descent directly, but it has a simple form and good numerical accuracy as verified by numerical examples. }

The algorithm is similar to the fast iterative shrinkage-thresholding algorithm (FISTA) \cite{beck2009fast} and its
extension to matrix case with Frobenius error \cite{ji2009accelerated}.
Similar to the construction in \cite{rohde2011estimation} and \cite{wainwright2014structured}, using $\lambda_0$ and $\lambda_1$ as regularizing parameters and the convex sets $\Gamma_0$ and $\Gamma_1$ defined earlier in (\ref{gamma0}) and (\ref{boxconstriant}), we may rewrite (\ref{estimator}) and (\ref{optimization_problem}) as
\begin{equation}
\widehat{M} = \underset{X \in \Gamma_i} {\arg \min}~ f_i(X) + \lambda_i \|X\|_{*}, \quad i = 0,1,
\label{originaloptimizationproblem}
\end{equation}
respectively, where $f_0(X) = - \sum_{i=1}^m \left\{y_i \log [\cA X]_i - [\cA X]_i\right\}$ and $f_1(X) = -F_{\Omega,Y}(X)$.

The PMLSVT algorithm can be derived as follows (similar to
\cite{ji2009accelerated}). For simplicity, we denote $f(X)$ for the $f_0(X)$ or $f_1(X)$. In the $k$th iteration, we may form a Taylor expansion of $f(X)$ around $X_{k-1}$ while keeping up to second term and then solve 
\begin{equation}
    X_k = \underset{X}{\arg\min} \left[Q_{t_k}(X,X_{k-1}) + \lambda\|X\|_{*}\right],
\label{ouroptimizationproblem}
\end{equation}
with
\begin{align}
    Q_{t_k}(X,X_{k-1}) &\triangleq f(X_{k-1}) + \langle X-X_{k-1},\nabla f(X_{k-1}) \rangle \nonumber \\
    &~~~+ \frac{t_k}{2}\|X-X_{k-1}\|_F^{2}, \label{new}
\end{align}
where $\nabla f$ is the gradient of $f$, $t_k$  is the reciprocal of the step size at the $k$th iteration, which we will specify later.
By dropping and introducing terms independent of $M$ whenever needed (more details can be found in \cite{cao2014low}), (\ref{ouroptimizationproblem}) is equivalent to
\begin{equation}
\begin{split}
    &X_k = \nonumber\\
    &\underset{X}{\arg\min} \left[\frac{1}{2} \left\| X- \left( X_{k-1} - \frac{1}{t_k}\nabla f(X_{k-1}) \right) \right\|_{F}^{2} + \frac{\lambda}{t_k}\|X\|_{*}\right].
    \end{split}
\label{ourfinalproblem}
\end{equation}
Recall $d = \min\{d_1, d_2\}$. For a matrix $Z \in \mathbb{R}^{d_1\times d_2}$, let its singular value decomposition be $Z=U{\Sigma} V^\intercal$, where $U \in \mathbb{R}^{d_1\times d}$, $V \in \mathbb{R}^{d_2\times d}$,
${\Sigma}=\diag\{[\sigma_1,\ldots, \sigma_d]\transpose\}$, and $\sigma_i$ is a singular value of the matrix $Z$.
For each $\tau \geq0$, define the {singular value thresholding operator} as:
\begin{equation}
    D_{\tau}(Z) \triangleq U \mbox{diag}\{[(\sigma_1-\tau)^+, \ldots, (\sigma_d-\tau)^+]\transpose\}V^\intercal.
\end{equation}
The following lemma is proved in \cite{cai2010singular}:
\begin{lemma}[Theorem 2.1 in \cite{cai2010singular}]
For each $\tau\ \geq0$, and $Z\in\mathbb{R}^{d_1\times d_2}$:
\begin{equation}
    D_{\tau}(Z) = \underset{X \in \mathbb{R}^{d_1\times d_2}}{\arg \min} \left\{ \frac{1}{2}\|X-Z\|_{F}^2+\tau\|Z\|_{*} \right\}.
\end{equation}
\label{theorem_cai}
\end{lemma}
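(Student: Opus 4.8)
The plan is to exhibit $D_\tau(Z)$ as the unique minimizer of the strictly convex functional $h(X) \triangleq \tfrac12\|X-Z\|_F^2 + \tau\|X\|_*$ by verifying the first-order optimality (subgradient) condition. Since $h$ is strongly convex (the quadratic term has modulus $1$) and $\|\cdot\|_*$ is convex, $h$ has a unique minimizer $\widehat X$, and $\widehat X$ minimizes $h$ if and only if $0 \in \widehat X - Z + \tau\,\partial\|\widehat X\|_*$, i.e. $Z - \widehat X \in \tau\,\partial\|\widehat X\|_*$. So the whole proof reduces to checking this inclusion for the candidate $\widehat X = D_\tau(Z)$.

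The key steps, in order: First, I would recall the characterization of the subdifferential of the nuclear norm at a matrix $X$ with (thin) SVD $X = U_X \Sigma_X V_X^\intercal$: namely $\partial\|X\|_* = \{ U_X V_X^\intercal + W : U_X^\intercal W = 0,\ W V_X = 0,\ \|W\| \le 1 \}$. Second, starting from the SVD $Z = U\Sigma V^\intercal$ with $\Sigma = \diag\{\sigma_1,\dots,\sigma_d\}$, split the singular values into those exceeding $\tau$ and those at most $\tau$: write $U = [U_0\ U_1]$, $V = [V_0\ V_1]$ accordingly, so that $D_\tau(Z) = U_0(\Sigma_0 - \tau I)V_0^\intercal$ where $\Sigma_0$ collects the singular values $>\tau$ (the thin SVD of $D_\tau(Z)$, since $\Sigma_0-\tau I \succ 0$). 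Third, compute the residual: $Z - D_\tau(Z) = U_0(\Sigma_0 - (\Sigma_0-\tau I))V_0^\intercal + U_1\Sigma_1 V_1^\intercal = \tau U_0 V_0^\intercal + U_1\Sigma_1 V_1^\intercal$. Fourth, verify that $\tfrac1\tau(Z - D_\tau(Z)) = U_0V_0^\intercal + \tfrac1\tau U_1\Sigma_1 V_1^\intercal$ lies in $\partial\|D_\tau(Z)\|_*$: the term $U_0V_0^\intercal$ is exactly the ``$U_X V_X^\intercal$'' piece; the term $W \triangleq \tfrac1\tau U_1\Sigma_1 V_1^\intercal$ satisfies $U_0^\intercal W = 0$ and $W V_0 = 0$ by orthogonality of the SVD blocks, and $\|W\| = \tfrac1\tau \max_i\{\sigma_i : \sigma_i \le \tau\} \le 1$. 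This establishes the inclusion and hence optimality.

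There is essentially no serious obstacle here — the statement is a verbatim citation of a known result, so the ``proof'' is just the standard subgradient verification. The only mild care needed is the bookkeeping when some $\sigma_i$ equal $\tau$ exactly (they may be placed in either block without affecting $D_\tau(Z)$ or the argument) and noting that $h$'s strong convexity upgrades ``a minimizer'' to ``the unique minimizer,'' which is what licenses writing $D_\tau(Z) = \arg\min$. One could alternatively cite \cite{cai2010singular} directly and omit the verification, but including the three-line subgradient check makes the paper self-contained.
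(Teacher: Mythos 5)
Your proposal is correct and is essentially the standard subgradient-verification argument from the cited reference \cite{cai2010singular}; the paper itself offers no proof of this lemma, stating only that it is proved there, so there is nothing to diverge from. Note also that you have (rightly) read the objective as $\frac{1}{2}\|X-Z\|_F^2+\tau\|X\|_*$ rather than the $\tau\|Z\|_*$ appearing in the paper's display, which is a typo --- with $\tau\|Z\|_*$ the term would be constant in $X$ and the minimizer would be $Z$ itself.
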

Due to Lemma \ref{theorem_cai}, the exact solution to (\ref{ourfinalproblem}) is given by
\begin{equation}
    X_k = D_{\lambda/t_k} \left( X_{k-1} - \frac{1}{t_k}\nabla f(X_{k-1}) \right).
\label{Watkthiteration}
\end{equation}

The PMLSVT algorithm is summarized in Algorithm \ref{alg:main}. The initialization of matrix recovery problem is suggested by \cite{jain2013low} and that of matrix completion problem is to choose an arbitrary element  in the set $\Gamma_1$.
For a matrix $Z$, define a projection of $Z$ onto $\Gamma_0$ as follows:
$$
\mathcal{P}(Z) \triangleq \frac{I}{\|(Z)^+\|_{1,1}}(Z)^+,
$$
where $(i,j)th$ entry of $(Z)^+$ is $(Z_{ij})^+$. 
In the algorithm description, $t$ is the reciprocal of the step size, $\eta > 1$ is a scale parameter to change the step size, and $K$ is the maximum number of iterations, which is  user specified: a larger $K$ leads to more accurate solution, and a small $K$ obtains the coarse solution quickly. 
If the cost function value does not decrease, the step size is shortened to change the singular values more conservatively.  The algorithm terminates when the absolute difference in the cost function values between two consecutive iterations is less than $0.5/K$. Convergence of the PMLSVT algorithm cannot be readily established; however, Lemma \ref{convergence1} and Lemma \ref{convergence2} above may shed some light on this.

\begin{algorithm}
  \caption{PMLSVT for Poisson matrix recovery and completion}
  \begin{algorithmic}[1]
    \STATE Initialize: The maximum number of iterations $K$, parameters $\alpha$, $\beta$, $\eta$, and $t$.

    $X \leftarrow \mathcal{P}(\sum_{i=1}^m y_i A_i)$ \COMMENT{matrix recovery}

    $[X]_{ij} \leftarrow Y_{ij}$ for $(i, j) \in \Omega$ and  $[X]_{ij} \leftarrow (\alpha+\beta)/2$ otherwise \COMMENT{matrix completion}
    \FOR{$k = 1, 2, \ldots K $}
    \STATE $C  \leftarrow X - (1/t)\nabla f(X)$
    \STATE $C = U\Sigma V^\intercal$ \COMMENT{singular value decomposition}
    \STATE $[\Sigma]_{ii}  \leftarrow ([\Sigma]_{ii}-\lambda/t)^{+}$, $i = 1, \ldots, d$
        \STATE $X' \leftarrow X$ \COMMENT{record previous step}
    \STATE $X \leftarrow \mathcal{P} \left(U \Sigma V^\intercal\right)$ \COMMENT{matrix recovery}

    $X\leftarrow \Pi_{\Gamma_1} \left(U \Sigma V^\intercal\right)$ \COMMENT{matrix completion}
    \STATE If $ f(X) > Q_t(X,X')$ then $t \leftarrow \eta t$, go to 4.
    \STATE If $|f(X) - Q_t(X,X')| < 0.5/K$ then exit;
    \ENDFOR
  \end{algorithmic}
  \label{alg:main}
\end{algorithm}



\begin{remark}\label{complexity}
At each iteration, the complexity of PMLSVT (Algorithm \ref{alg:main}) is on the order of $O(d_1^2 d_2 + d_2^3)$ (which comes from performing singular value decomposition). This is much lower than the complexity of solving an SDP,  which is on the order of $O(d_1^3+d_1d_2^3+d_1^2 d_2^2)$. In particular, for a $d$-by-$d$ matrix, PMLSVT algorithm has a complexity $\mathcal{O}(d^3)$, which is lower than the complexity $\mathcal{O}(d^4)$ of solving an SDP. One may also use an approximate SVD method\cite{lerman2012robust} and a better choice for step sizes \cite{beck2009fast} to accelerate PMLSVT.
\end{remark}

\section{Numerical examples}

We use our PMLSVT algorithm in all the examples below. 

\subsection{Synthetic data based on solar flare image}

\subsubsection{Matrix recovery}

In this section, we demonstrate the performance of PMLSVT on synthetic data based on a real solar flare image captured by the NASA SDO satellite (see \cite{XieHuang2013} for detailed explanations). \yx{We use this original solar flare image to form a ground truth matrix $M$, and then generate Poisson observations of $M$ as described in (\ref{obs}).} All the numerical examples are run on a laptop with 2.40Hz dual-core CPU and 8GB RAM.

The solar flare image is of size  $48$-by-$48$ and is shown in Fig. \ref{fig:solar}(a). To generate $M$, we break the image into 8-by-8 patches, vectorize the patches, and collect the resulted vectors into a 64-by-36 matrix $M_0$ ($d_1 = 64$ and $d_2 = 36$). Such a matrix can be well approximated by a low-rank matrix $\bar{M}$, as demonstrated in Fig. \ref{fig:solar}(b). It is a image formed by $\bar{M}$, when $\bar{M}$ is a rank 10 approximation of $M_0$. Note that visually the rank-10 approximation is very close to the original image. Below we use this rank-10 approximation $\bar{M}$ as the ground truth matrix. The intensity of the image in  is $I=\|\bar{M}\|=3.27\times 10^6$.

To vary SNR, we scale the image intensity by $\rho \geq 1$ (\yx{i.e., the matrix to be recovered $M = \rho \bar{M}$ and $\bar{M}$ is the original matrix}). \yx{Hence, SNR of Poisson observations for the scaled image is proportional to $\sqrt{\rho I}$. }

Below, we use the PMLSVT algorithm for recovery and the parameters are $t = 10^{-5}$, $\eta=1.1$, and $K=2500$. Fig. \ref{fig:solar}(c) and Fig. \ref{fig:solar}(d) contain the recovered image for a fixed number of measurements $m = 1000$ and $\lambda = 0.002$, when $\rho = 2$ and $\rho = 7$ respectively. Note that the higher the $\rho$ (and hence the higher the SNR) the less error in the recovered image. In Fig. \ref{fig:snr2}, $\rho$ increases from 1 to 9 and the normalized squared error decreases as $\rho$ increases (although the improvement is incremental after $\rho$ is greater than 3).

\begin{figure}[h]
\begin{center}
\begin{tabular} {cc}
\includegraphics[width = 0.3\linewidth]{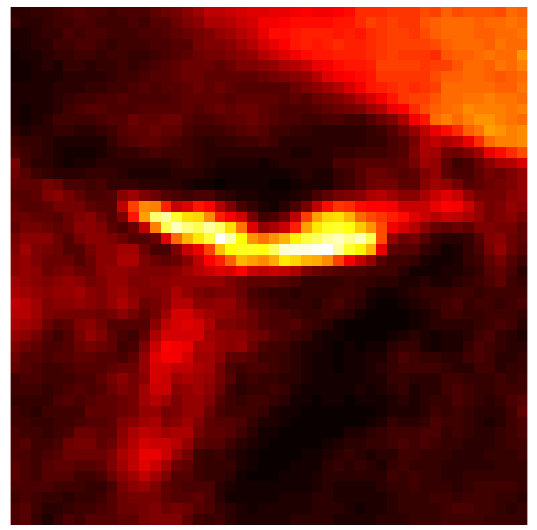}
& \includegraphics[width = 0.3\linewidth]{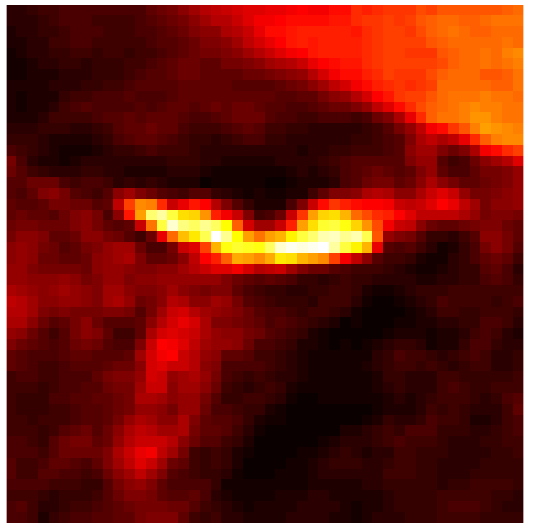} \\
(a) original.  & (b) rank 10 approximation. \\
\includegraphics[width = 0.3\linewidth]{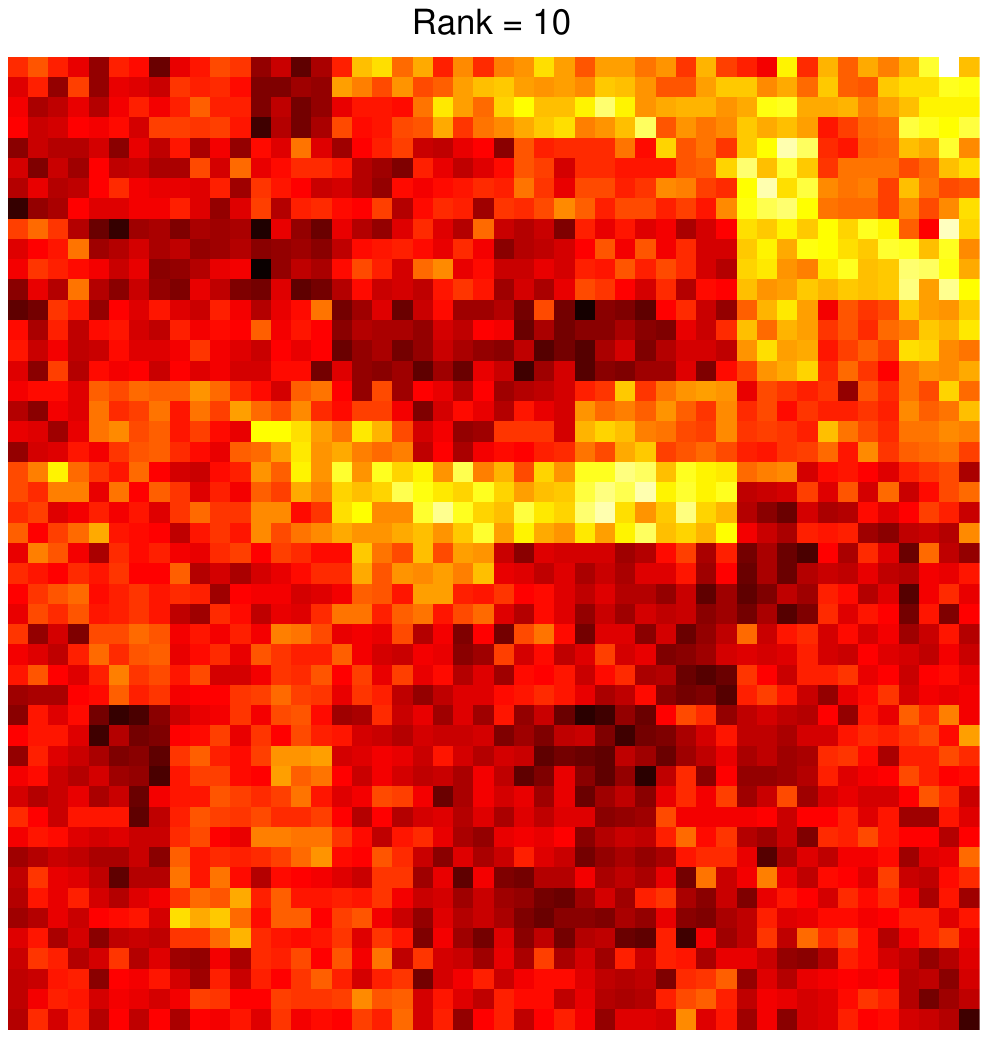}
& \includegraphics[width = 0.3\linewidth]{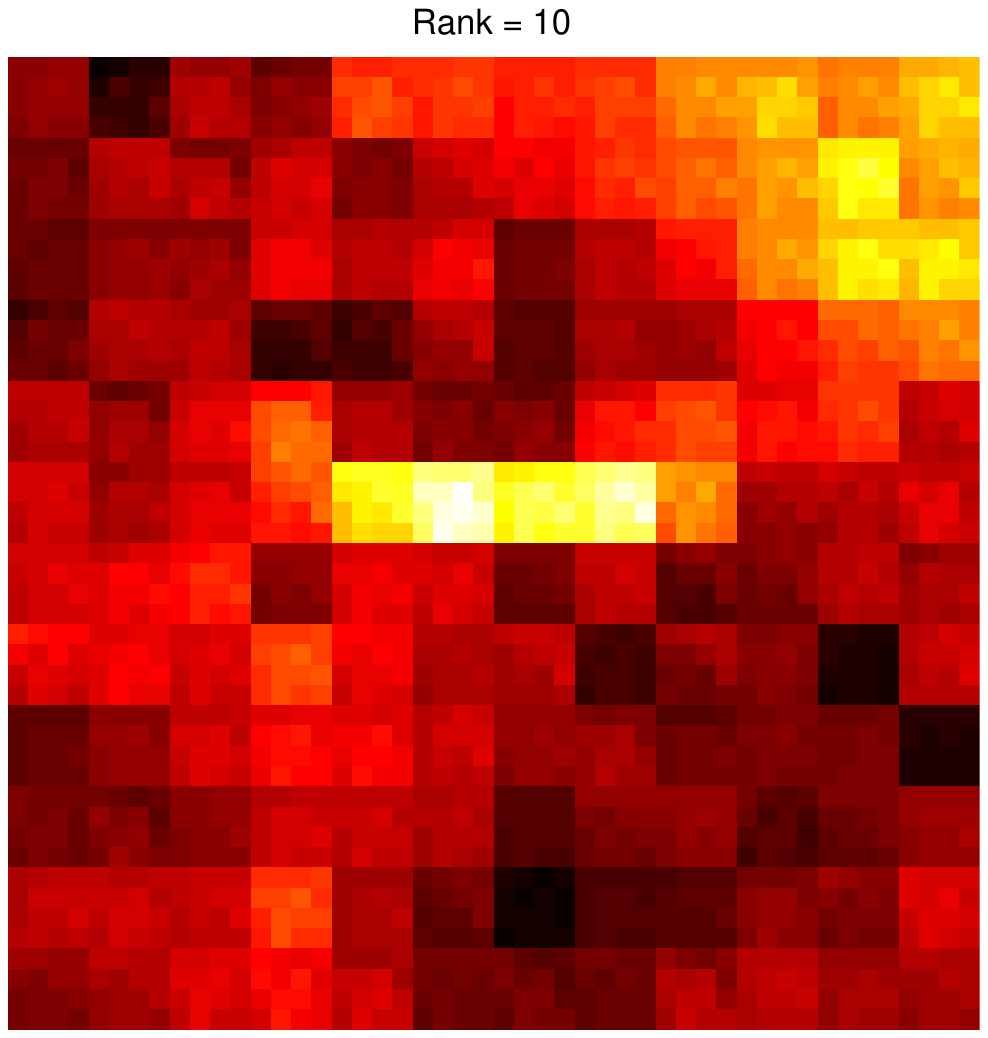} \\
(c) recovered, $\rho=2$. & (d) recovered, $\rho=7$.
\end{tabular}
\end{center}
\caption{Original, low-rank approximation to solar flare image, and recovered images from compressive measurements when the intensity of the underlying signal is scaled by $\rho$ (SNR is on the order or $1/\sqrt{\rho I}$). \yx{The parameters for (c) and (d) are $m = 1000$ and $\lambda = 0.002$. }}
\label{fig:solar}
\end{figure}

We further compare the quality of the recovered matrix using the PMLSVT algorithm (which approximately solves the maximum likelihood problem), with the recovered matrix obtained by solving the maximum likelihood problem {\it exactly} via semidefinite program (SDP) using CVX\footnote{http://cvxr.com/cvx/}. Solving via SDP requires a much higher complexity, as explained in Remark \ref{complexity}. Below, we fix $\rho=4$, $m = 1000$, $\lambda=0.002$, and increase the number of measurements $m$ while comparing the normalized square errors of these two approaches.  Fig. \ref{fig:riskvsmea} demonstrates that PMLSVT, though less accurate, has a performance very close to the exact solution via SDP. The increase in the normalized error of PMLSVT algorithm relative to that of SDP is at most 4.89\%. Also, the normalized errors of both approaches decrease as $m$ increases. PMLSVT is a lot faster than solving SDP by CVX, especially when $m$ is large, as shown in Table \ref{table1}. 

\begin{figure}[h]
\begin{center}
\includegraphics[width = 0.8\linewidth]{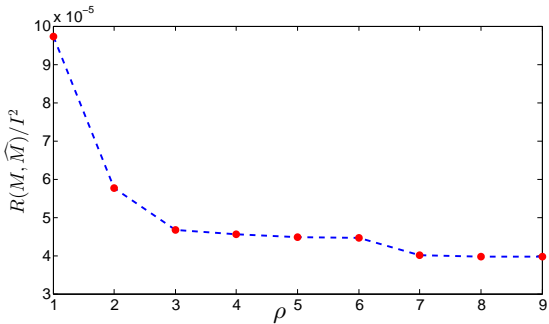}
\caption{Matrix recovery from compressive measurements: normalized error ${R}(M,\widehat{M})/I^2$ versus $\rho$, when $m=1000$ and $\lambda=0.002$, using PMLSVT. 
}
\label{fig:snr2}
\end{center}
\end{figure}

\begin{figure}[h]
\begin{center}
\includegraphics[width = 0.8\linewidth]{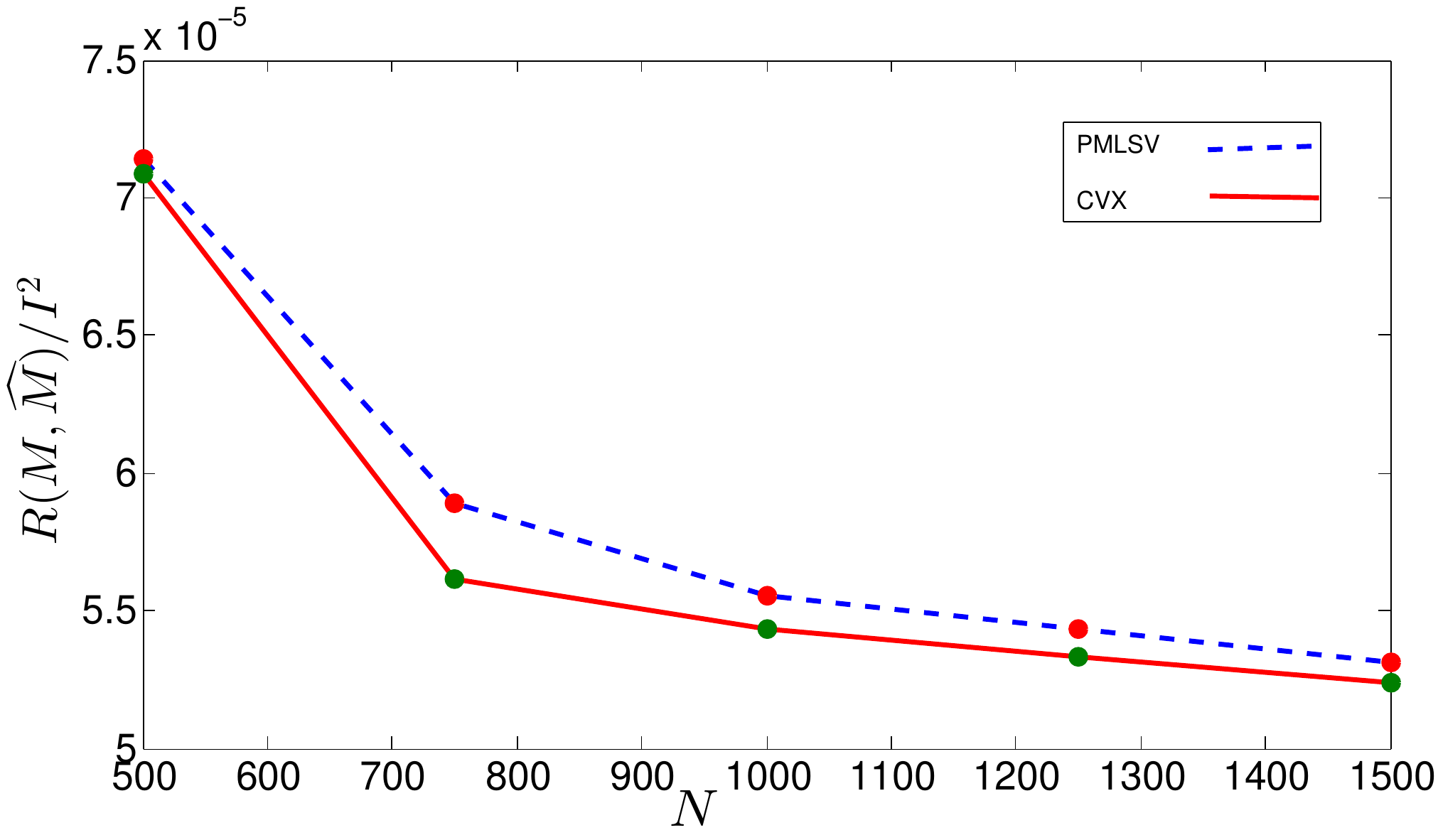}
\caption{Matrix recovery from compressive measurements: normalized error ${R}(M,\widehat{M})/I^2$ versus the number of measurements $m$ when $\rho=4$ and $\lambda=0.002$, for solutions obtained using CVX and PMLSVT, respectively.
}
\label{fig:riskvsmea}
\end{center}
\end{figure}

\begin{table}[h]
\center
\caption{CPU run time (in seconds) of solving SDP by using CVX and of the PMLSVT algorithms, with $\rho=4$ and $\lambda=0.002$ and $m$ measurements.}
\begin{tabular}{|l|c|c|c|c|c|}
  \hline
  $m$ & 500 & 750 & 1000 & 1250 & 1500 \\
  \hline
  SDP & 725 & 1146 & 1510 & 2059 & 2769 \\
  \hline
  PMLSVT & 172 & 232 & 378 & 490 & 642 \\
  \hline
\end{tabular}
\label{table1}
\end{table}

Fig. \ref{fig:lambda} demonstrates the normalized error with different values of $\lambda$, when $m=1000$ and $\rho=4$. Note that there is an optimal value for $\lambda$ with the smallest error (thus our choice for $\lambda = 0.002$ in the above examples).

\begin{figure}[h]
\begin{center}
\includegraphics[width = 0.8\linewidth]{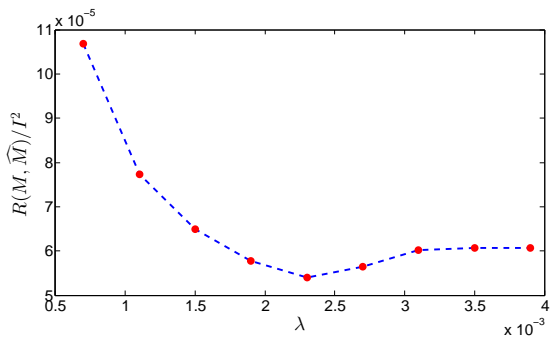}
\caption{Matrix recovery from compressive measurements: normalized error ${R}(M,\widehat{M})/I^2$ versus $\lambda$ when fixing $m=1000$ and $\rho=4$, using PMLSVT.
}
\label{fig:lambda}
\end{center}
\end{figure}

\vspace{.1in}
\subsubsection{Matrix completion}

We demonstrate the good performance of the PMLSVT algorithm for matrix completion on the same solar flare image as in the previous section. Set $\alpha = 200$ and $\beta = 1$ in this case. Suppose the entries are sampled via a Bernoulli model such that $\mathbb{E}[|\Omega|] = m$. Set $p \triangleq m/(d_1 d_2)$ in the sampling model. Set $t = 10^{-4}$ and $\eta=1.1$ for PMLSVT. Fig. \ref{fig:mc1} shows the results when roughly $80\%$, $50\%$ and $30\%$ of the matrix entries are observed. Even when about $50\%$ of the entries are missing, the recovery results is fairly good. When there are only about $30\%$ of the entries are observed, PMLSVT still may recover the main features in the image. PMLSVT is quite fast: the run times for all three examples are less than $1.2$ seconds.


\begin{figure}[h]
\begin{center}
\begin{tabular} {cc}
\includegraphics[width = 0.3\linewidth]{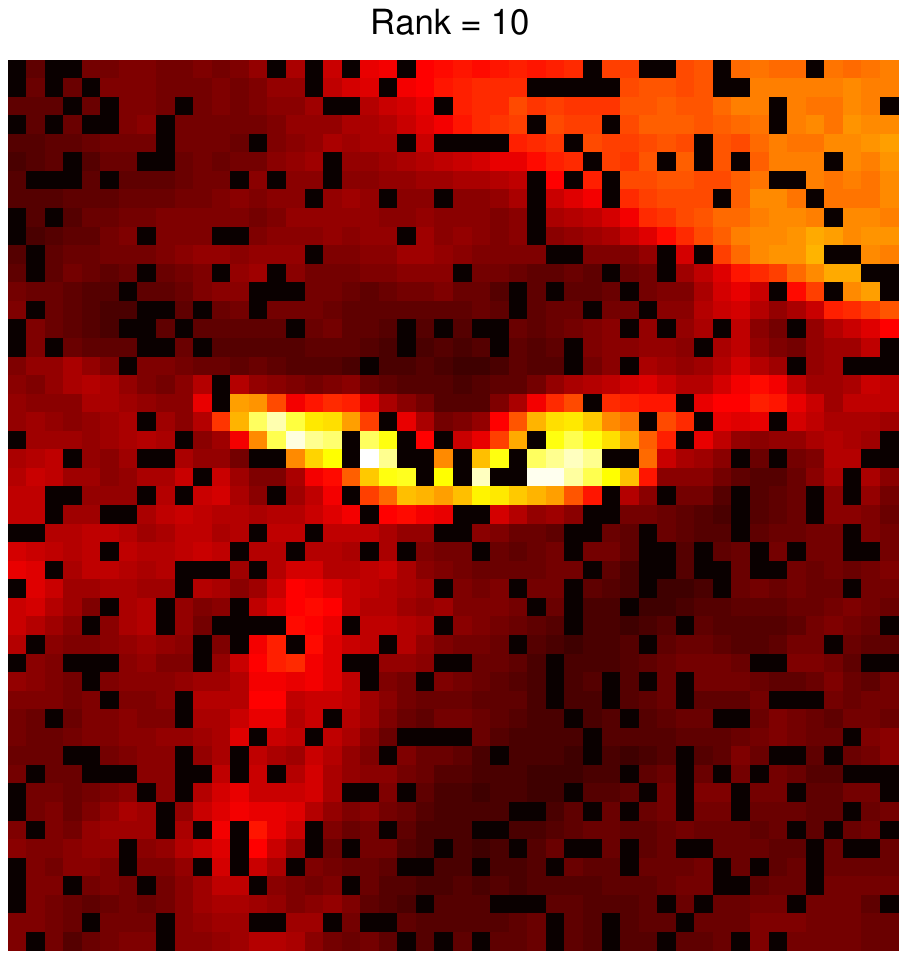} & \includegraphics[width = 0.3\linewidth]{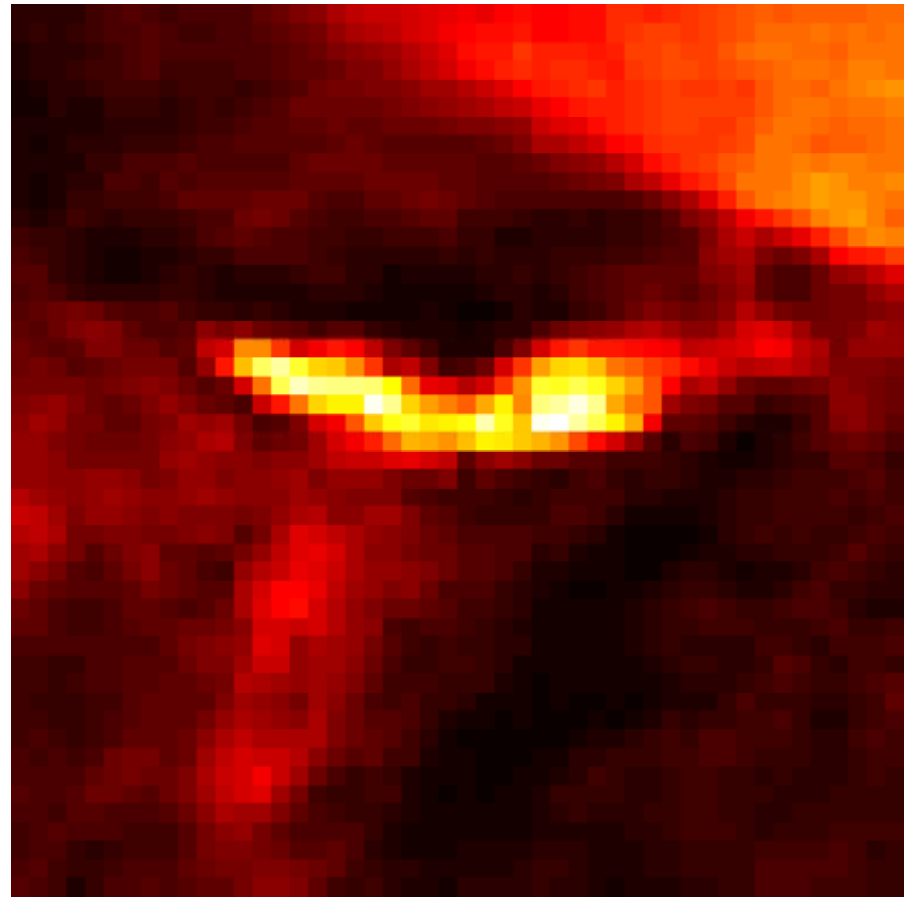} \\
(a) $p=0.8$. & (b) $\lambda=0.1, K=2000$. \\
\includegraphics[width = 0.3\linewidth]{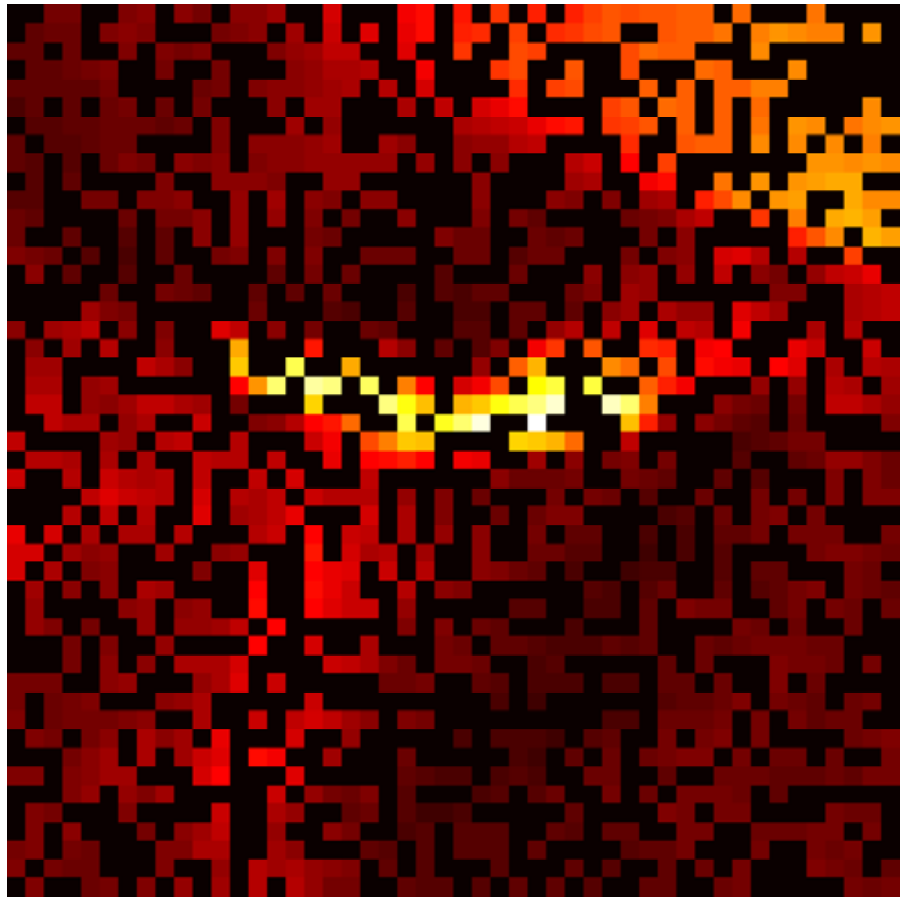} & \includegraphics[width = 0.3\linewidth]{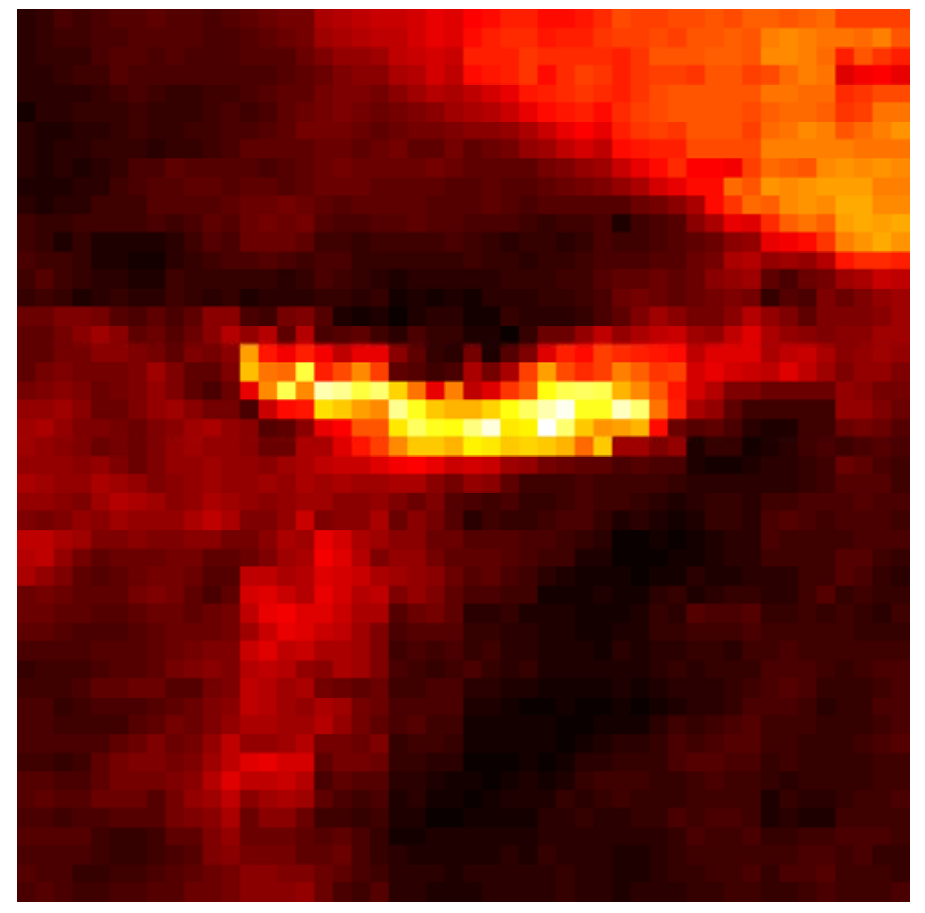} \\
(c) $p=0.5$. & (d) $\lambda=0.1, K=2000$. \\
\includegraphics[width = 0.3\linewidth]{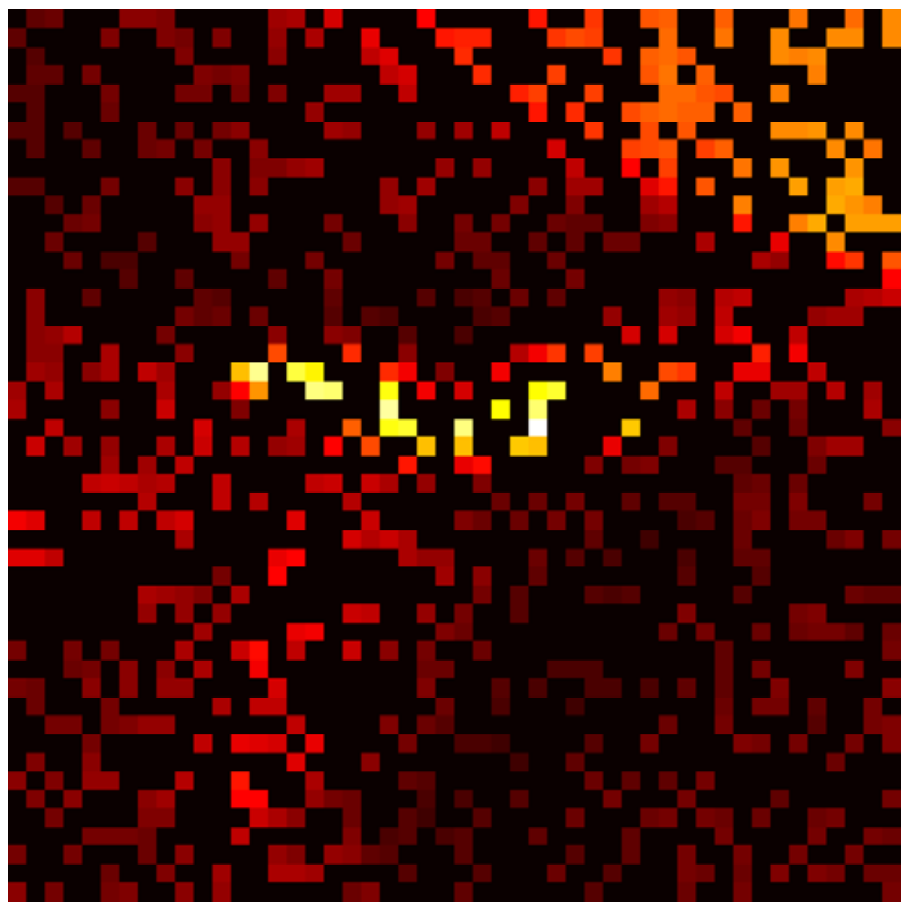} & \includegraphics[width = 0.3\linewidth]{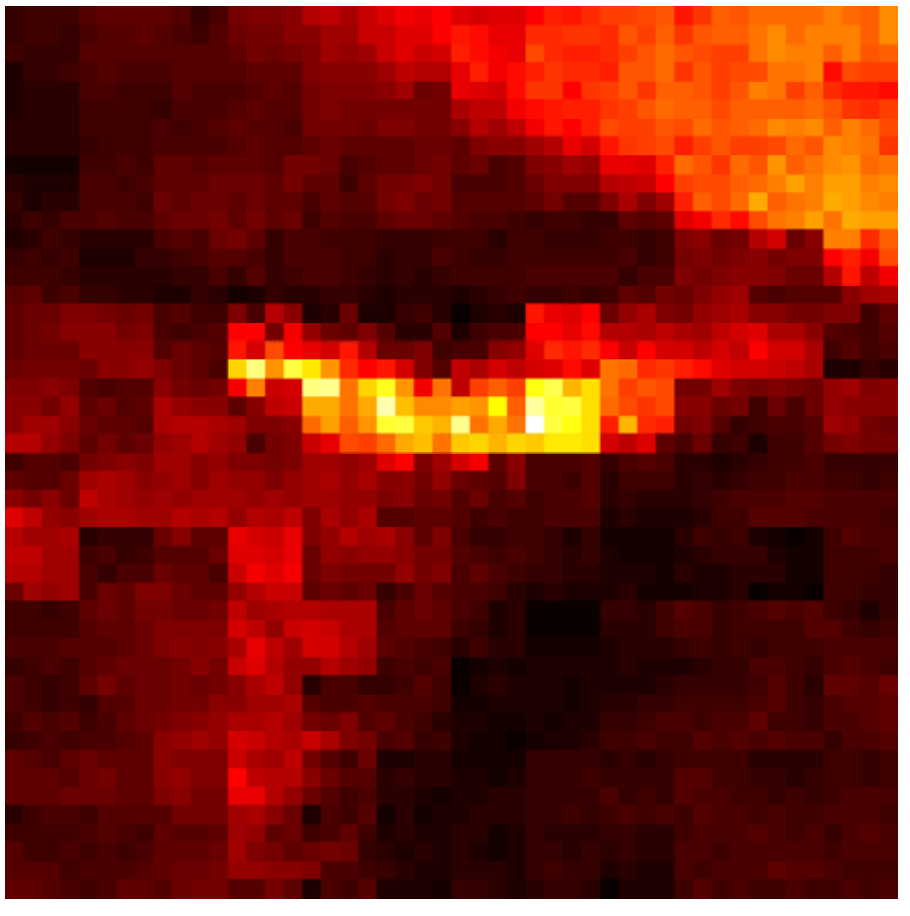} \\
(e) $p=0.3$. & (f) $\lambda=0.1, K=2000$.
\end{tabular}
\caption{Matrix completion from partial observations: (a), (c), and (e): $80\%$, $50\%$ and $30\%$ of entries observed (dark spots represent missing entries); (b), (d), and (f): images formed by complete matrix with $\lambda=0.1$ and no more than $2000$ iterations, and the run times of the PMLSVT algorithm are 1.176595, 1.110226 and 1.097281 seconds, respectively.}
\label{fig:mc1}
\end{center}
\end{figure}

\subsection{Bike sharing count data}
To demonstrate the performance of our algorithm on real data, we consider the bike sharing data set\footnote{The data can be downloaded at\\ http://archive.ics.uci.edu/ml/datasets/Bike+Sharing+Dataset\cite{Fanaee2013}.}, which  consists of $17379$ bike sharing counts aggregated on hourly basis between the years 2011 and 2012 in Capital bike share system with the corresponding weather and seasonal information. We collect countings of $24$ hours over $105$ Saturdays into a $24$-by-$105$ matrix $M$ ($d_1 = 24$ and $d_2 = 105$). The resulted matrix is nearly low-rank.
Assuming that only a fraction of the entries of this matrix are known (each entry is observed with probability 0.5 and, hence, roughly half of the entries are observed), and that the counting numbers follow Poisson distributions with unknown intensities.  We aim at recover the unknown intensities, i.e., filling the missing data and performing denoising.
We use PMLSVT with the following parameters: $\alpha=1000$, $\beta=1$, $t=10^{-4}$, $\eta=1.1$, $K=4000$ and $\lambda=100$.
In this case there is no ``ground truth'' for the intensities, and it is hard to measure the accuracy of recovered matrix. Instead, we are interested in identifying interesting patterns in the recovered results. As shown in Fig. \ref{fig:bike}(b), there are two clear increases in the counting numbers after the $17$th and the $63$th Saturday, which may not be easily identified from the original data in Fig. \ref{fig:bike}(a) with missing data and Poisson randomness.

\begin{figure}[h]
\begin{center}
\begin{tabular} {cc}
\includegraphics[width = 0.45\linewidth]{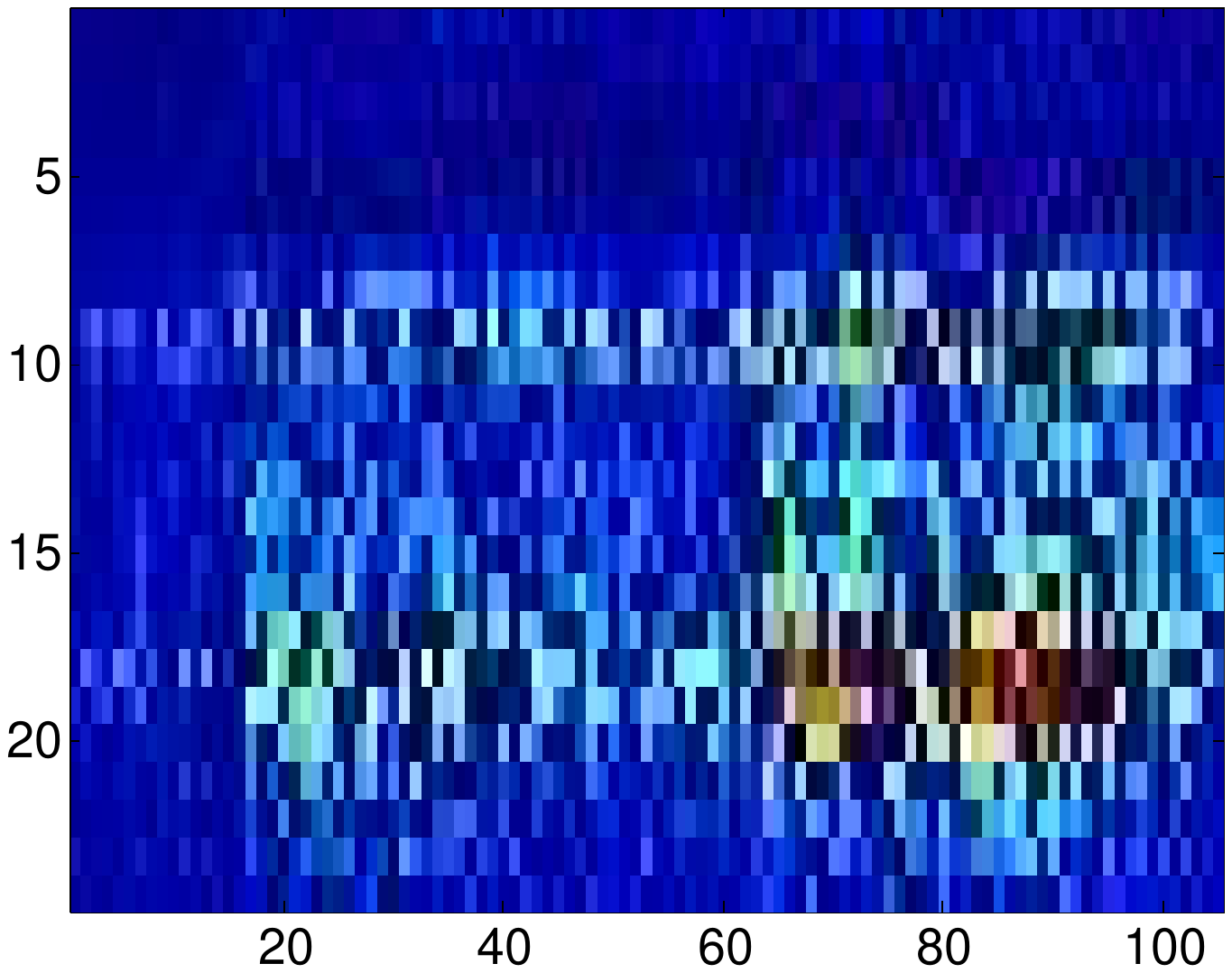} & \includegraphics[width = 0.45\linewidth]{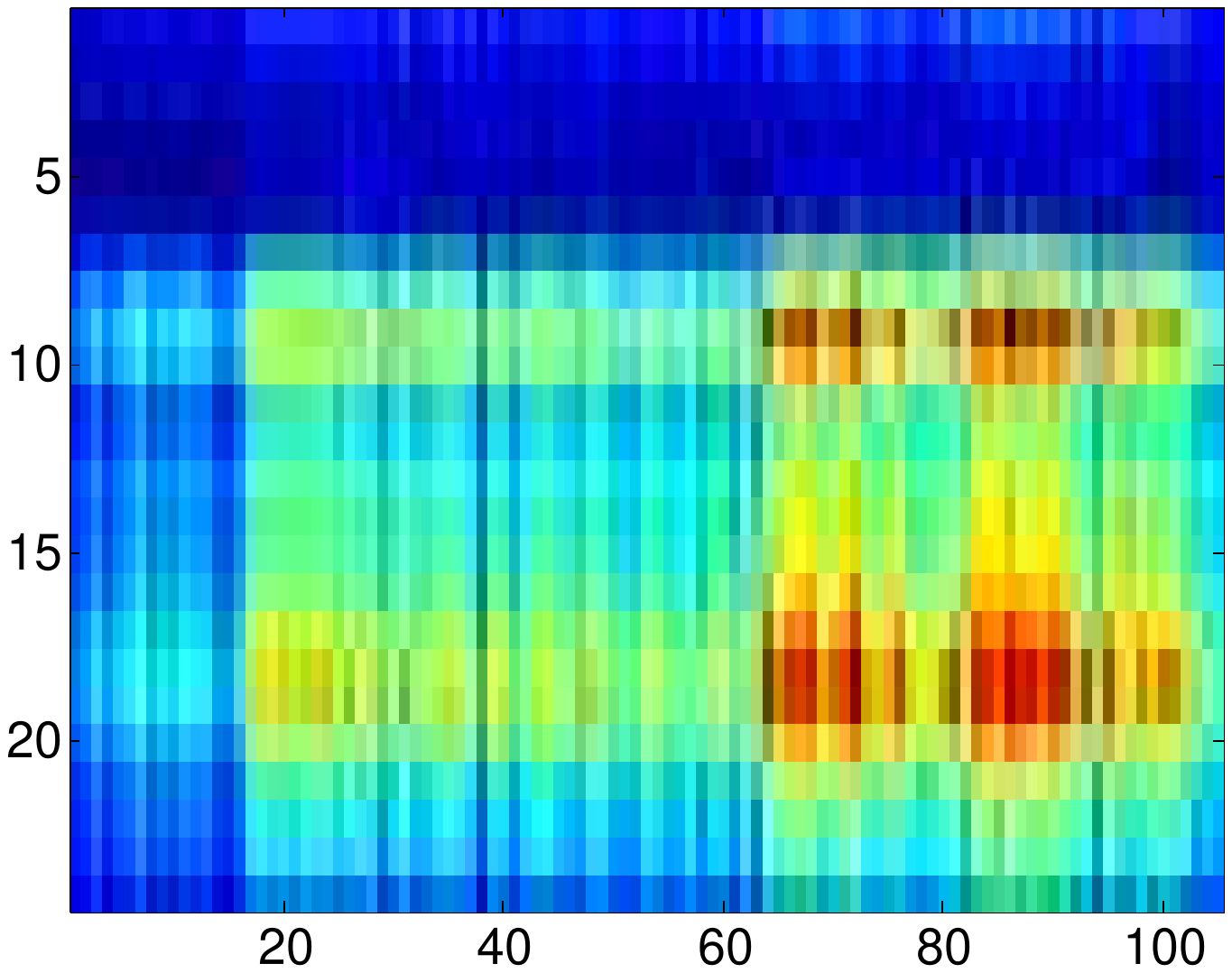} \\
(a) original data, $p=0.5$. & (b) $\lambda=100, K=4000$. \\
\end{tabular}
\caption{Bike sharing count data: (a): observed matrix $M$ with $50\%$ missing entries; (b): recovered matrix with $\lambda=100$ and $4000$ iterations, with an elapsed time of 3.147153 seconds.}
\label{fig:bike}
\end{center}
\end{figure}

\section{Conclusions}\label{sec:conclusion}
In this paper, we have studied matrix recovery and completion problem when the data are Poisson random counts. We considered a maximum likelihood formulation with constrained nuclear norm of the matrix and entries of the matrix, and presented upper and lower bounds for the proposed estimators. We also developed a set of new algorithms, and in particular the efficient the Poisson noise Maximal Likelihood Singular Value Thresholding (PMLSV) algorithm. We have demonstrated its accuracy and efficiency compared with the semi-definite program (SDP) and tested on real data examples of solar flare images and bike sharing data.

\section*{acknowledgement}

The authors would like to thank Prof. Yuejie Chi, Prof. Robert Calderbank, Prof. Mark Davenport, and Prof. Yaniv Plan for stimulating discussions and inspiring comments.

\bibliography{Poisson_MC}

\begin{thebibliography}{10}

\bibitem{davenport20121}
M.~A. Davenport, Y.~Plan, E.~v.~d. Berg, and M.~Wootters, ``1-bit matrix
  completion,'' {\em Information and Inference}, 2014.

\bibitem{brady2009optical}
D.~J. Brady, {\em Optical imaging and spectroscopy}.
\newblock John Wiley \& Sons, 2009.

\bibitem{poissonGBG2013}
J.~A. Bazerque, G.~Mateos, and G.~B. Giannakis, ``Inference of poisson count
  processes using low-rank tensor data,'' in {\em IEEE Int. Conf. Acoustics,
  Speech and Signal Processing (ICASSP)}, pp.~5989 -- 5993, 2013.

\bibitem{ShenHuang2005}
H.~Shen and J.~Z. Huang, ``Analysis of call centre arrival data using singular
  value decomposition,'' {\em Applied Stochastic Models in Business and
  Industry}, vol.~21, pp.~251--263, 2005.

\bibitem{CandesTao2006}
E.~J. Cand{\`e}s and T.~Tao, ``Near-optimal signal recovery from random
  projections: Universal encoding strategies?,'' {\em IEEE Trans. Info.
  Theory}, vol.~52, pp.~5406--5425, Dec. 2006.

\bibitem{donoho2006compressed}
D.~L. Donoho, ``Compressed sensing,'' {\em IEEE Trans. Info. Theory}, vol.~52,
  pp.~1289--1306, April 2006.

\bibitem{candes2009exact}
E.~J. Cand{\`e}s and B.~Recht, ``Exact matrix completion via convex
  optimization,'' {\em Foundations of Computational Mathematics (FOCS)},
  vol.~9, no.~6, pp.~717--772, 2009.

\bibitem{keshavan2010matrix}
R.~H. Keshavan, A.~Montanari, and S.~Oh, ``Matrix completion from a few
  entries,'' {\em IEEE Trans. Info. Theory}, vol.~56, no.~6, pp.~2980--2998,
  2010.

\bibitem{candes2010power}
E.~J. Cand{\`e}s and T.~Tao, ``The power of convex relaxation: Near-optimal
  matrix completion,'' {\em IEEE Trans. Info. Theory}, vol.~56, no.~5,
  pp.~2053--2080, 2010.

\bibitem{dai2010set}
W.~Dai and O.~Milenkovic, ``Set: {A}n algorithm for consistent matrix
  completion,'' in {\em IEEE Int. Conf. Acoustics, Speech and Signal Processing
  (ICASSP)}, pp.~3646--3649, 2010.

\bibitem{recht2010guaranteed}
B.~Recht, M.~Fazel, and P.~A. Parrilo, ``Guaranteed minimum-rank solutions of
  linear matrix equations via nuclear norm minimization,'' {\em SIAM Review},
  vol.~52, no.~3, pp.~471--501, 2010.

\bibitem{recht2011simpler}
B.~Recht, ``A simpler approach to matrix completion,'' {\em J. Machine Learning
  Research}, vol.~12, pp.~3413--3430, 2011.

\bibitem{cai2010singular}
J.-F. Cai, E.~J. Cand{\`e}s, and Z.~Shen, ``A singular value thresholding
  algorithm for matrix completion,'' {\em SIAM J. Optimization}, vol.~20,
  no.~4, pp.~1956--1982, 2010.

\bibitem{lin2009fast}
Z.~Lin, A.~Ganesh, J.~Wright, L.~Wu, M.~Chen, and Y.~Ma, ``Fast convex
  optimization algorithms for exact recovery of a corrupted low-rank matrix,''
  {\em Comp. Adv. Multi-Sensor Adaptive Processing (CAMSAP)}, vol.~61, 2009.

\bibitem{mazumder2010spectral}
R.~Mazumder, T.~Hastie, and R.~Tibshirani, ``Spectral regularization algorithms
  for learning large incomplete matrices,'' {\em J. Machine Learning Research},
  vol.~11, pp.~2287--2322, 2010.

\bibitem{keshavan2009matrix}
R.~Keshavan, A.~Montanari, and S.~Oh, ``Matrix completion from noisy entries,''
  in {\em Adv. Neural Information Processing Systems (NIPS)}, pp.~952--960,
  2009.

\bibitem{candes2010matrix}
E.~J. Candes and Y.~Plan, ``Matrix completion with noise,'' {\em Proc. IEEE},
  vol.~98, no.~6, pp.~925--936, 2010.

\bibitem{negahban2011estimation}
S.~Negahban, M.~J. Wainwright, {\em et~al.}, ``Estimation of (near) low-rank
  matrices with noise and high-dimensional scaling,'' {\em Ann. Stats.},
  vol.~39, no.~2, pp.~1069--1097, 2011.

\bibitem{negahban2012restricted}
S.~Negahban and M.~J. Wainwright, ``Restricted strong convexity and weighted
  matrix completion: Optimal bounds with noise,'' {\em J. Machine Learning
  Research}, vol.~13, no.~1, pp.~1665--1697, 2012.

\bibitem{rohde2011estimation}
A.~Rohde, A.~B. Tsybakov, {\em et~al.}, ``Estimation of high-dimensional
  low-rank matrices,'' {\em Ann. Stats.}, vol.~39, no.~2, pp.~887--930, 2011.

\bibitem{sonierror}
A.~Soni, S.~Jain, J.~Haupt, and S.~Gonella, ``Error bounds for maximum
  likelihood matrix completion under sparse factor models,'' in {\em IEEE
  Global Conf. Sig. and Info. Proc. (GlobalSIP)}, 2014.

\bibitem{soni2014noisy}
A.~Soni, S.~Jain, J.~Haupt, and S.~Gonella, ``Noisy matrix completion under
  sparse factor models,'' {\em arXiv:1411.0282}, 2014.

\bibitem{soniestimation}
A.~Soni and J.~Haupt, ``Estimation error guarantees for poisson denoising with
  sparse and structured dictionary models,'' in {\em IEEE Int. Symp. Info.
  Theory (ISIT)}, pp.~2002--2006, IEEE, 2014.

\bibitem{raginsky2010compressed}
M.~Raginsky, R.~M. Willett, Z.~T. Harmany, and R.~F. Marcia, ``Compressed
  sensing performance bounds under poisson noise,'' {\em IEEE Trans. Signal
  Processing}, vol.~58, no.~8, pp.~3990--4002, 2010.

\bibitem{raginsky2011performance}
M.~Raginsky, S.~Jafarpour, Z.~T. Harmany, R.~F. Marcia, R.~M. Willett, and
  R.~Calderbank, ``Performance bounds for expander-based compressed sensing in
  {P}oisson noise,'' {\em IEEE Trans. Sig. Proc.}, vol.~59, no.~9,
  pp.~4139--4153, 2011.

\bibitem{jiang2014minimax}
X.~Jiang, G.~Raskutti, and R.~Willett, ``Minimax optimal rates for poisson
  inverse problems with physical constraints,'' {\em arXiv:1403.6532}, 2014.

\bibitem{SCOPT13}
Q.~Tran-Dinh, A.~Kyrillidis, and V.~Cevher, ``A proximal newton framework for
  composite minimization: Graph learning without cholesky decomposition and
  matrix inversions,'' {\em Proc. 30th Int. Conf. Machine Learning (ICML)},
  2013.

\bibitem{SCOPT_journal}
Q.~Tran-Dinh, A.~Kyrillidis, and V.~Cevher, ``Composite self-concordant
  minimization,'' {\em J. Machine Learning Research}, 2014.

\bibitem{PlanThesis2011}
Y.~Plan, {\em Compressed sensing, sparse approximation, and low-rank matrix
  estimation}.
\newblock PhD thesis, California Institute of Technology, 2011.

\bibitem{candes2011tight}
E.~J. Candes and Y.~Plan, ``Tight oracle inequalities for low-rank matrix
  recovery from a minimal number of noisy random measurements,'' {\em IEEE
  Trans. Info. Theory}, vol.~57, no.~4, pp.~2342--2359, 2011.

\bibitem{ji2009accelerated}
S.~Ji and J.~Ye, ``An accelerated gradient method for trace norm
  minimization,'' in {\em Proc. 26th Ann. Int. Conf. on Machine Learning},
  pp.~457--464, ACM, 2009.

\bibitem{wainwright2014structured}
M.~J. Wainwright, ``Structured regularizers for high-dimensional problems:
  Statistical and computational issues,'' {\em Annual Review of Statistics and
  Its Application}, vol.~1, pp.~233--253, 2014.

\bibitem{agarwal2010fast}
A.~Agarwal, S.~Negahban, and M.~J. Wainwright, ``Fast global convergence rates
  of gradient methods for high-dimensional statistical recovery,'' in {\em Adv.
  Neural Information Processing Systems (NIPS)}, pp.~37--45, 2010.

\bibitem{lafond2015low}
J.~Lafond, ``Low rank matrix completion with exponential family noise,'' {\em
  arXiv:1502.06919}, 2015.

\bibitem{WainwrightReview2014}
M.~J. Wainwright, ``Structured regularizers for high-dimensional problems:
  Statistical and computational issues,'' {\em Annual Review of Statistics and
  its Applications}, pp.~233--253, 2014.

\bibitem{ConvexOpt}
S.~Boyd, {\em Convex optimization}.
\newblock Cambridge Univ Press, 2004.

\bibitem{loft2009efficient}
M.~Kloft, U.~Brefeld, P.~Laskov, K.-R. M{\"u}ller, A.~Zien, and S.~Sonnenburg,
  ``Efficient and accurate lp-norm multiple kernel learning,'' in {\em Advances
  in neural information processing systems}, 2009.

\bibitem{candes2013well}
E.~J. Candes and M.~A. Davenport, ``How well can we estimate a sparse
  vector?,'' {\em Applied and Computational Harmonic Analysis (ACHA)}, vol.~34,
  no.~2, pp.~317--323, 2013.

\bibitem{liu2009interior}
Z.~Liu and L.~Vandenberghe, ``Interior-point method for nuclear norm
  approximation with application to system identification,'' {\em SIAM J.
  Matrix Analysis and Applications}, vol.~31, no.~3, pp.~1235--1256, 2009.

\bibitem{birgin2000nonmonotone}
E.~G. Birgin, J.~M. Mart{\'\i}nez, and M.~Raydan, ``Nonmonotone spectral
  projected gradient methods on convex sets,'' {\em SIAM J. Optimization},
  vol.~10, no.~4, pp.~1196--1211, 2000.

\bibitem{Ghaoui2010}
L.~{El Ghaoui}, {\em Lecture notes for EE227A: Algorithms for Large-Scale
  Convex Optimization}.
\newblock University of California, Berkeley, Berkeley, CA, 2010.

\bibitem{duchi2008efficient}
J.~Duchi, S.~Shalev-Shwartz, Y.~Singer, and T.~Chandra, ``Efficient projections
  onto the l 1-ball for learning in high dimensions,'' in {\em Proc. 25th Int.
  Conf. on Machine Learning (ICML)}, pp.~272--279, ACM, 2008.

\bibitem{beck2009fast}
A.~Beck and M.~Teboulle, ``A fast iterative shrinkage-thresholding algorithm
  for linear inverse problems,'' {\em SIAM J. Imaging Sciences}, vol.~2, no.~1,
  pp.~183--202, 2009.

\bibitem{cao2014low}
Y.~Cao and Y.~Xie, ``Low-rank matrix recovery in poisson noise,'' in {\em IEEE
  Global Conf. Sig. and Info. Proc. (GlobalSIP)}, pp.~384-- 388, 2014.

\bibitem{jain2013low}
P.~Jain, P.~Netrapalli, and S.~Sanghavi, ``Low-rank matrix completion using
  alternating minimization,'' in {\em Proc. 45th ACM Symp. on Theory of
  Computing (STOC)}, pp.~665--674, 2013.

\bibitem{lerman2012robust}
G.~Lerman, M.~McCoy, J.~A. Tropp, and T.~Zhang, ``Robust computation of linear
  models, or how to find a needle in a haystack,'' {\em arXiv:1202.4044}, 2012.

\bibitem{XieHuang2013}
Y.~Xie, J.~Huang, and R.~Willett, ``Change-point detection for high-dimensional
  time series with missing data,'' {\em IEEE J. Sel. Top. Signal Processing
  (J-STSP)}, vol.~7, pp.~12--27, Feb. 2013.

\bibitem{Fanaee2013}
H.~Fanaee-T and J.~Gama, ``Event labeling combining ensemble detectors and
  background knowledge,'' {\em Prog. Artificial Intelligence}, pp.~1--15, 2013.

\bibitem{pollard2002user}
D.~Pollard, {\em A User's guide to measure theoretic probability}, vol.~8.
\newblock Cambridge University Press, 2002.

\bibitem{CannyCS174}
J.~Canny, {\em Lecture notes for CS174: Combinatorics and Discrete
  Probability}.
\newblock University of California, Berkeley, Berkeley, CA, 2001.

\bibitem{ledoux1991probability}
M.~Ledoux and M.~Talagrand, {\em Probability in Banach Spaces: isoperimetry and
  processes}, vol.~23.
\newblock Springer, 1991.

\bibitem{Watrous2011}
J.~Watrous, {\em Lectures nots for CS766: Theory of Quantum Information}.
\newblock University of Waterloo, 2011.

\bibitem{yu1997assouad}
B.~Yu, ``Assouad, {F}ano, and le cam,'' in {\em Festschrift for Lucien Le Cam},
  pp.~423--435, Springer, 1997.

\end{thebibliography}

\clearpage

\appendices

\section{Proofs for matrix recovery}

\begin{proof}[Proof of Lemma \ref{thm_RIP}]
Given matrices $X_1$ and $X_2$, we can define $u_i = \mbox{vec}(X_i)$, for $i = 1, 2$. Define the matrix
\[\tilde{A} = \begin{bmatrix}
\mbox{vec}(\tilde{A}_1)\transpose \\\vdots \\
\mbox{vec}(\tilde{A}_m)\transpose
\end{bmatrix} \in \mathbb{R}^{m\times (d_1 d_2)},
\]
then we have $\tilde{\mathcal{A}} X_i  = \tilde{A} u_i$ for $i = 1, 2$. Note that the matrix $\tilde{A}$ can be written as $A= Z/\sqrt{m}$, where the entries of  $Z$ are drawn i.i.d. to take value $-\sqrt{(1-p)/p}$ with probability $p$ or to take value $\sqrt{p/(1-p)}$ with probability $1-p$.
Also, by noticing the correspondence between vector and matrix norms, we have $\|X_1 - X_2\|_F  = \|u_1 - u_2\|_2$, $\|\tilde{\mathcal{A}} X_1 - \tilde{\mathcal{A}}X_2 \|_2 = \|\tilde{A} u_1 - \tilde{A} u_2\|_2$. Moreover, since $\|X_i\|_{1, 1} = \|u_i\|_1$ and $\|X_i\|_F = \|u_i\|_2$, for $i = 1, 2$, the $\ell_1$-ball $\mathcal{B}^{d_1\times d_2}$ and the sphere $\mathcal{S}^{d_1\times d_2 - 1}$ defined for matrix of dimension $d_1$-by-$d_2$ can be translated to $\ell_1$-ball and sphere for the corresponding vector space of dimension $d_1 d_2$. Hence, all conditions of Theorem 1 \cite{raginsky2010compressed} are satisfied, and we may apply it to a signal vector space of dimension $d_1 d_2$ and a matrix operator of dimension $m$-by-$(d_1 d_2)$ to obtain the statement in
Lemma \ref{thm_RIP}.

\end{proof}

\begin{proof}[Proof of Lemma \ref{thm_general}]
Assume $M$ and $\widehat{M}$ are the true matrix and its estimator, respectively. Let $u \triangleq \mbox{vec}(M)$ and $\hat{u} \triangleq \mbox{vec}(\widehat{M})$. Again, by making the links that $\tilde{\mathcal{A}}M = \tilde{A} u$, $\tilde{\mathcal{A}}\widehat{M} = \tilde{A} \hat{u}$, $\|\tilde{\mathcal{A}}(M - \widehat{M})\|_2 = \|\tilde{A}(u - \hat{u})\|_2$, $\|M\|_{1, 1} = \|\widehat{M}\|_{1, 1} = I$ is equivalently $\|u\|_1 = \|\widehat{u}\|_1 = I$, as well as the RIP for the matrix operator Lemma \ref{thm_RIP}, we may directly apply Theorem 2 in \cite{raginsky2010compressed} to the case of a vector of dimension $d_1 d_2$ and a matrix operator of dimension $m$-by-$d_1 d_2$ to obtain the desired result.
\end{proof}

The proof of Theorem \ref{regret_for_nearly_low_rank} requires the following lemma:
\begin{lemma}[Covering number for low-rank matrices, Lemma 4.3.1. in \cite{PlanThesis2011}] \label{yaniv}
Let $S_r = \{X\in \mathbb{R}^{d_1\times d_2}: \mbox{rank}(X)\leq r, \|X\|_F = 1\}$, then there exists an $\varepsilon$-net $\bar{S}_r \subset S_r$, with respect to Frobenius norm, i.e., for any $V \in S_r$, there exists $V_0 \in \bar{S}_r$, such that $\|V_0 - V\|_F\leq \varepsilon$, and
\[
|\bar{S}_r| \leq \left(\frac{9}{\varepsilon}\right)^{(d_1 + d_2 +1)r}
\]
\end{lemma}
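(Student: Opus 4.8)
The plan is to construct the net directly from singular value decompositions. Any $X\in S_r$ factors as $X = U\Sigma V^{\transpose}$ with $U\in\mathbb{R}^{d_1\times r}$ and $V\in\mathbb{R}^{d_2\times r}$ having orthonormal columns and $\Sigma = \diag\{\sigma\}$, $\sigma\in\mathbb{R}^r$, $\|\sigma\|_2 = \|X\|_F = 1$. Thus it suffices to discretize the three ``coordinates'' separately and recombine. Introduce the ambient sets $\mathcal{O}_{d_1, r}\triangleq\{U\in\mathbb{R}^{d_1\times r}:\|U\|\le 1\}$, $\mathcal{O}_{d_2, r}\triangleq\{V\in\mathbb{R}^{d_2\times r}:\|V\|\le 1\}$ (spectral norm), and $\mathcal{D}_r\triangleq\{\Sigma\in\mathbb{R}^{r\times r}\text{ diagonal}:\|\Sigma\|_F\le 1\}$, which contain respectively all admissible left factors, right factors, and singular-value matrices.

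Fix an auxiliary accuracy $\delta$ to be pinned down at the end. By the standard volumetric covering bound, any unit ball of a $D$-dimensional normed space has a $\delta$-net of size at most $(3/\delta)^D$; hence $\mathcal{D}_r$ admits a Frobenius $\delta$-net $\bar{\mathcal{D}}_r$ with $|\bar{\mathcal{D}}_r|\le(3/\delta)^r$, and each $\mathcal{O}_{d_\ell,r}$ admits a spectral-norm $\delta$-net $\bar{\mathcal{O}}_{d_\ell,r}$ with $|\bar{\mathcal{O}}_{d_\ell,r}|\le(3/\delta)^{d_\ell r}$, $\ell=1,2$. Form the product family $\{\bar U\bar\Sigma\bar V^{\transpose}:\bar U\in\bar{\mathcal{O}}_{d_1,r},\ \bar\Sigma\in\bar{\mathcal{D}}_r,\ \bar V\in\bar{\mathcal{O}}_{d_2,r}\}$, which has at most $(3/\delta)^{(d_1+d_2+1)r}$ elements, each of rank at most $r$. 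Writing the error as a telescoping sum swapping one factor at a time, $X - \bar U\bar\Sigma\bar V^{\transpose} = (U-\bar U)\Sigma V^{\transpose} + \bar U(\Sigma-\bar\Sigma)V^{\transpose} + \bar U\bar\Sigma(V-\bar V)^{\transpose}$, and bounding each term via the submultiplicativity $\|AB\|_F\le\|A\|\,\|B\|_F$, together with $\|\Sigma V^{\transpose}\|_F=\|\Sigma\|_F\le 1$, $\|U\|=\|V\|=1$, $\|\bar U\|,\|\bar V\|\le 1+\delta$, and $\|\bar\Sigma\|\le 1+\delta$, yields $\|X-\bar U\bar\Sigma\bar V^{\transpose}\|_F\le\bigl(1+(1+\delta)+(1+\delta)^2\bigr)\delta$. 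Finally, to enforce $\bar S_r\subseteq S_r$, discard any product point lying farther than $\varepsilon$ from $S_r$ and replace every remaining one by a nearest element of $S_r$; choosing $\delta$ a small absolute fraction of $\varepsilon$ makes the resulting radius at most $\varepsilon$ while keeping $3/\delta\le 9/\varepsilon$, which gives $|\bar S_r|\le(9/\varepsilon)^{(d_1+d_2+1)r}$.

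The only delicate point — and the main obstacle — is the constant bookkeeping in the last two steps. Because $\bar U,\bar V$ from the nets need not be exactly orthonormal, the candidate $\bar U\bar\Sigma\bar V^{\transpose}$ need not have unit Frobenius norm or lie in $S_r$, and one must check that the three-term triangle inequality contributes essentially $3\delta$ rather than a larger multiple, that the $(1+\delta)$-type factors are harmless, and that the snap-back step together with the $\delta\asymp\varepsilon$ rescaling still leaves the base of the exponent equal to $9$; this can be arranged either by choosing the three sub-net accuracies unequally (finer for the factors propagated with a $(1+\delta)^2$ weight) or by re-orthonormalizing $\bar U,\bar V$ through a fixed rounding rule. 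Everything else is routine. Since the statement is exactly Lemma 4.3.1 of \cite{PlanThesis2011}, one may alternatively simply invoke that reference.
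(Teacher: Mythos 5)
The paper does not actually prove this lemma --- it is quoted verbatim from Lemma 4.3.1 of the cited thesis --- so the only comparison available is with the standard argument there, which is indeed the one you outline: cover the three SVD factors separately and recombine via the telescoping identity. Your decomposition, the identity $X - \bar U\bar\Sigma\bar V^{\intercal} = (U-\bar U)\Sigma V^{\intercal} + \bar U(\Sigma-\bar\Sigma)V^{\intercal} + \bar U\bar\Sigma(V-\bar V)^{\intercal}$, and the bound $\|AB\|_F\le\|A\|\,\|B\|_F$ are all correct.

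The gap is the constant bookkeeping you yourself flag, and neither of your proposed repairs closes it. To obtain the base $9/\varepsilon$ you need $3/\delta\le 9/\varepsilon$ for \emph{every} sub-net, i.e.\ every accuracy is at least $\varepsilon/3$; but then the triangle inequality alone already contributes at least $3\delta\ge\varepsilon$ before the $(1+\delta)$ inflation and before the snap-back onto $S_r$ (which can double the radius again). So unequal accuracies cannot help --- each one is individually forced to be $\ge\varepsilon/3$ --- and re-orthonormalizing $\bar U$ by a rounding rule moves it by up to another $\delta$, making things worse. The standard fix is to abandon the ambient balls $\{\|U\|\le1\}$ and $\{\|\Sigma\|_F\le1\}$ entirely: take a maximal $(\varepsilon/3)$-separated subset of the set of genuinely orthonormal $d_\ell\times r$ frames (in spectral norm) and of the unit \emph{sphere} $\{\|\Sigma\|_F=1\}$ of diagonal matrices. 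The volumetric bound in the form ``a $\delta$-separated subset of a set contained in the unit ball of a $D$-dimensional space has at most $(1+2/\delta)^D\le(3/\delta)^D$ points'' applies to these subsets directly, the net elements are exactly orthonormal and exactly unit-norm, so all $(1+\delta)$ factors disappear, $\bar U\bar\Sigma\bar V^{\intercal}$ lies in $S_r$ with no snap-back step, and the error is exactly $3\cdot(\varepsilon/3)=\varepsilon$ with cardinality $(9/\varepsilon)^{(d_1+d_2+1)r}$. With that substitution your argument is complete; as written, it proves the lemma only with a larger absolute constant in place of $9$.
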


\begin{proof}[Proof of Theorem \ref{regret_for_nearly_low_rank}]
The proof of Theorem \ref{regret_for_nearly_low_rank} involves constructing a suitable set of estimators $\Gamma_0$ and $\mathcal{G}$ for Lemma \ref{thm_general}, estimate the sizes of the set, set the regularization function $\pen(X)$ such that it encourages low-rank $M$ and satisfies Kraft inequality, and then invoking Lemma \ref{thm_general}.
Given $X \in \Gamma$, we introduce its scaled version
 \[\bar{X}^{(\ell)} = U \diag\{\theta^{(\ell)}/\|\theta^{(\ell)}\|_2\}{V}\transpose,\]
so that $\|\bar{X}^{(\ell)}\|_F = 1$ and
\ben
X^{(\ell)} = \|\theta^{(\ell)}\|_2 \bar{X}^{(\ell)}.  \label{quant}
\een
Since all $X \in \Gamma$ satisfies $\|X\|_{1, 1} = I$ and $X_{jk} \geq 0$, we have that
\begin{equation}
\begin{split}
&\|\theta^{(\ell)}\|_2 \leq \|\theta\|_2 =\sqrt{\sum_{i=1}^d \theta_i^2} = \sqrt{\tr(X\transpose X)}\\
&= \sqrt{\sum_j \sum_k X_{jk}^2} \leq \sum_j \sum_k X_{jk} = I.
\end{split}
\label{theta_I}
\end{equation}
Using the parameterization in (\ref{quant}), we can code $X^{(\ell)}$ using  three steps by encoding the ``magnitude'' $\|\theta^{(\ell)}\|_2$, the scaled rank-$\ell$ matrices $\bar{X}^\ell$, and the value $\ell$ of the rank itself.
(1) Quantize $\|\theta^{(\ell)}\|$ into one of $\sqrt{d}$ bins that uniformly divide the interval $[-I, I]$. Let the result of the quantization to be $r_q$. Since there are $\sqrt{d}$ bins, encoding $r_q$ requires $\frac{1}{2} \log_2 d$ bits.
(2) Quantize $\bar{X}^{(\ell)}$. Since $\|\bar{X}^{(\ell)}\|_F = 1$, using Lemma \ref{yaniv}, we can form a $\varepsilon$-net $S_q$ such that for every $\bar{M}^{(\ell)}$, there is a corresponding $\bar{X}^{(\ell)}_q \in S_q$ with $\|\bar{X}^{(\ell)}-\bar{X}^{(\ell)}_q\|_F \leq 9/\sqrt{d}$ and $|S_q| = d^{(d_1 + d_2 + 1)\ell/2}$. Hence, to encode the elements in $S_q$, we need $\frac{1}{2}(d_1+ d_2 + 1)\ell \log_2 d$ bits.
(3) Finally, encode $\ell$, the rank of $\bar{X}^{(\ell)}$. Since the rank of these matrices are at most, we need $\log_2 d$ bits.
Let \begin{equation}
X^{(\ell)}_q \triangleq r_q \bar{X}^{(\ell)}_q.\label{bar_M_q}\end{equation}
It can be verified that the above quantization scheme results in a set of approximations for $X^{(\ell)}$ and a corresponding prefix code for $\bar{X}^{(\ell)}$. From the three steps above, the average code length for $\bar{X}^{(\ell)}$ is upper bounded by $\frac{3}{2}\log_2 d + \frac{1}{2}(d_1+d_2 +1)\ell\log_2 d$ bits.
Finally, to ensure total intensity constraint and that each element of $X$ is greater than $c$, we project ${X}^{(\ell)}_q$ onto a set
\[
\mathcal{C} \triangleq \{X\in \mathbb{R}_+^{d_1\times d_2}: X_{jk} \geq c \mbox{ and } \|X\|_{1, 1} = I.\}
\]
and use $\mathcal{P}_{\mathcal{C}}({X}^{(\ell)}_q)$ as a candidate estimator in $\Gamma$, where $\mathcal{P}_{\mathcal{C}}$ is a projector operator onto the set $\mathcal{C}$, i.e.,
\yx{
\[
\mathcal{P}_{\mathcal{C}}(X) = \underset{X' \in \mathcal{C}}{\arg\min} \|X' - X\|_F.
\]
}
 Using the construction above for $\Gamma$, the complexity of $X \in \Gamma$ satisfies
\[
\pen({X}) \leq \frac{3}{2}\log_2 d + \frac{1}{2}(d_1+d_2 +1)\ell\log_2 d < \frac{1}{2}(d_1 + d_2 + 4)\ell \log_2 d.
\]

Now, given $X  = U^* \diag\left\{\theta^*\right\} {V^*}\transpose$, and let $X^{(\ell)}$ be its best rank-$\ell$ approximation, $X_q^{(\ell)}$ be the quantized version of $X^{(\ell)}$, for which we have
\begin{align}
&~ \|X^{(\ell)}-X^{(\ell)}_q\|_F^2 \\
 = &
~\| \|\theta^{(\ell)}\|_2 \bar{X}^{(\ell)}
- r_q\bar{X}^{(\ell)}+ r_q\bar{X}^{(\ell)}
-r_q \bar{X}^{(\ell)}_q\|_F^2
\\
\leq & ~ 2 \| \|\theta^{(\ell)}\|_2 \bar{X}^{(\ell)}
- r_q\bar{X}^{(\ell)}\|_F^2
+ 2\| r_q\bar{X}^{(\ell)}
-r_q \bar{X}^{(\ell)}_q\|_F^2 \\
=  & ~2(\|\theta^{(\ell)}\|_2 - r_q)^2 \| \bar{X}^{(\ell)}\|_F^2
+ 2 r_q^2 \|\bar{X}^{(\ell)}
- \bar{X}^{(\ell)}_q\|_F^2 \\
\leq  &~ 2\frac{I^2}{d} + 2I^2 \frac{81}{d} = \frac{164 I^2}{d},
\label{chain_M}
\end{align}
where the last inequality follows from $|\|\theta^{(\ell)}\|_2 - r_q|<2I/(2\sqrt{d})$, $\|\bar{X}^{(\ell)}\|_F = 1$, $|r_q|\leq I$, and $\|\bar{X}^{(\ell)}-\bar{X}^{(\ell)}_q\|_F \leq 9/\sqrt{d}$.
Hence, using above, we can bound the distance between an arbitrary matrix $X$ and its candidate estimator $\mathcal{P}_{\mathcal{C}}({X}^{(\ell)}_q) \in \Gamma$, 
\yx{
\begin{align*}
& ~\|X - \mathcal{P}_{\mathcal{C}}({X}^{(\ell)}_q) \|_F^2 \\
=&~\|X - {X}^{(\ell)}_q + {X}^{(l)}_q - \mathcal{P}_{\mathcal{C}}({X}^{(\ell)}_q) \|_F^2 \\
\leq&~ 2  \|X - {X}^{(\ell)}_q \|_F^2 + 2\|{X}^{(\ell)}_q - \mathcal{P}_{\mathcal{C}}({X}^{(\ell)}_q) \|_F^2 \\
\leq &~ 2 \| X - {X}^{(\ell)} + {X}^{(l)} - {X}^{(\ell)}_q \|_F^2 + 8I^2\\
\leq&~  4  \| X - {X}^{(\ell)} \|_F^2 + 4 \|{X}^{(\ell)} - {X}^{(\ell)}_q \|_F^2 + 8I^2 \\
\leq&~ 4 I^2 (c_0 \varrho^2 \ell^{-2(1/q - 1/2)} + 164/d + 2),
\end{align*}
for some constant $c_0 > 0$ that depends only on $q$, where the second step is due to triangle inequality,
\begin{align*}
\|{X}^{(\ell)}_q - \mathcal{P}_{\mathcal{C}}({X}^{(\ell)}_q) \|_F^2
\leq & ~2 \|{X}^{(\ell)}_q\|_F^2 + 2 \|\mathcal{P}_{\mathcal{C}}({X}^{(\ell)}_q)\|_F^2 \\
\leq &~ 2 r_q^2 + 2I^2 \leq 4I^2.
\end{align*}
Since $\mathcal{P}_{\mathcal{C}}({X}^{(\ell)}_q) \in \mathcal{C}$, so $\|\mathcal{P}_{\mathcal{C}}({X}^{(\ell)}_q) \|_{1, 1} = I$. Then due to an argument similar to the last inequality in (\ref{theta_I}),
\[ \|\mathcal{P}_{\mathcal{C}}({X}^{(\ell)}_q)\|_F \leq  \|\mathcal{P}_{\mathcal{C}}({X}^{(\ell)}_q)\|_{1, 1} = I.\] Above, we have used the inequality $\|X + Y\|_F^2 \leq 2\|X\|_F^2  + 2\|Y\|_F^2$. The last inequality uses (\ref{chain_M}) and our geometrically decay singular value assumption and (\ref{approx_inq}).}

Given each $1\leq \ell \leq d$, let $\hat{\Gamma}_\ell$ be a set such that the the elements in $\hat{\Gamma}_\ell$ are rank-$\ell$ and after projection onto set $\mathcal{C}$ they belong to $\Gamma$. Then from the estimate above $\log |\hat{\Gamma}_\ell| < \frac{1}{2} (d_1 + d_2 + 4) \ell \log _2 d$. Therefore, $\hat{\Gamma}_\ell \in \mathcal{G}$, whenever
\[\frac{1}{2} (d_1 + d_2 + 4) \ell \log_2 d\leq m/(c_4 \xi_p^4),\]
for
\[
1\leq \ell\leq \ell_*, \quad\mbox{where}\quad \ell_* \triangleq \frac{2m}{c_4 \xi_p^4(d_1 + d_2 + 4) \log_2 d},
\]
and we may invoke Lemma \ref{thm_general}.
The term $164/d$ is independent of $\ell$, so we bring it out from the maximization with respect to $\ell$.
Finally, under this condition, using the statement of Lemma \ref{thm_general}, for a nearly low-rank matrix $M$, suppose its estimator constructed as above is given by $\mathcal{P}_{\mathcal{C}}(M_q^{(\ell)})$, we have
\begin{align*}
\frac{1}{I^2}\mathbb{E}[R(M, \widehat{M})  ]&
\leq C_{m, p}\min_{1\leq \ell\leq \ell_*} (\frac{1}{I^2}\|M - \mathcal{P}_{\mathcal{C}}(M_q^{(\ell)})\|_F^2 \\
& ~~~+ \frac{\lambda \pen(M_q^{(\ell)})}{I})+ \frac{2c_2^2 \xi_p^4 \log (c_2 \xi_p^4 d_1 d_2/m)}{m} \\
& \leq \mathcal{O}(m) ( 164/d + 2 + \min_{1\leq \ell\leq \ell_*} [c_0 \varrho^2 \ell^{-2(1/q - 1/2)}  \\
& ~~~ + \frac{\lambda (d_1+d_2+4) \ell\log_2 d}{2I}])\\
&\quad + \mathcal{O}\left(\frac{\log(d_1d_2/m)}{m}\right),
\end{align*}
where the second inequality is because $C_{m,p}$ is $\mathcal{O}(m)$.
\end{proof}

\section{Proofs for matrix completion}
\yc{
To begin, we first recall some definitions from introduction and explain some
additional notation that we will need for the proofs. For two probability distributions $\mathcal{P}$ and $\mathcal{Q}$ on a
countable set $A$, $D(\mathcal{P}||\mathcal{Q})$ will denote the Kullback-Leibler (KL) divergence
\[
D(\mathcal{P}||\mathcal{Q}) = \sum_{x\in A}\mathcal{P}(x) \log\left( \frac{\mathcal{P}(x)}{\mathcal{Q}(x)} \right),
\]
where $\mathcal{P}(x)$ denotes the probability of the outcome $x$ under the distribution $\mathcal{P}$. In the following, we will abuse this notation slightly, to mean the KL divergence between two Poisson distributions with different parameters (the arguments in the notations denote parameters of the Poisson distributions), in the following two ways. First, for scalar inputs $p,q \in \mathbb{R}_+$, we will set
$
D(p\|q) \triangleq p\log(p/q) - (p-q),
$
which gives the KL divergence between two Poisson probability distributions.
Second, we allow the KL divergence to act on matrices via the average KL divergence over their entries:
for two matrices $P$, $Q \in \mathbb{R}_+^{d_1 \times d_2}$, we define
\[
D(P\|Q) \triangleq \frac{1}{d_1 d_2}\sum_{i,j}D(P_{ij}\|Q_{ij}).
\]
For two probability distributions $\mathcal{P}$ and $\mathcal{Q}$ on a
countable set $A$, $d_H^2(\mathcal{P},\mathcal{Q})$ will denote the Hellinger distance
\[
d_H^2(\mathcal{P},\mathcal{Q}) = \sum_{x\in A} \left(\sqrt{\mathcal{P}(x)} - \sqrt{\mathcal{Q}(x)}\right)^2.
\]
Similarly, we abuse this notation slightly to denote the Hellinger distance between two Poisson distributions with different parameters (the arguments in the notation denote parameters of the Poisson distributions). We use the Hellinger distance between two Poisson distributions, which, for two scalars $p,q \in \mathbb{R}_+$, is given by,
$
d_H^2(p, q) \triangleq 2-2\exp\left\{-\frac{1}{2}\left(\sqrt{p}-\sqrt{q}\right)^2\right\}.
$
For matrices $P$, $Q \in \mathbb{R}_+^{d_1 \times d_2}$, the average Hellinger distance is defined by
\[
d_H^2(P,Q) \triangleq \frac{1}{d_1 d_2}\sum_{i,j}d_H^2(P_{ij},Q_{ij}).
\]

}
\subsection{Proof of Theorem \ref{maintheorem}}

To prove Theorem \ref{maintheorem}, the key will be to establish the concentration inequality (Lemma \ref{firstlemma}) and the lower bound for the average Hellinger distance (Lemma \ref{secondlemma}).

\begin{lemma}
Let $F_{\Omega, Y}(X)$ be the likelihood function defined in (\ref{likelihood}) and $\mathcal{S}$ be the set defined in (\ref{searchspace}), then
\begin{equation}
\begin{split}
&\mathbb{P} \left\{ \sup_{X\in \mathcal{S}} \left| F_{\Omega, Y}(X)-\mathbb{E}[F_{\Omega, Y}(X)]\right| \right.\\
& \quad \left. \geq C'\left( \alpha\sqrt{r}/\beta \right) \left( \alpha(e^2-2) + 3\log(d_1 d_2) \right) \cdot \right. \\
& \quad \left. \left(\sqrt{m(d_1+d_2)+d_1 d_2 \log(d_1 d_2)}\right) \right\}  \leq \frac{C}{d_1 d_2},
\end{split}
\label{resultfirstlemma}
\end{equation}
where $C'$ and $C$ are absolute positive constants and the probability and the expectation are both over the choice of $\Omega$ and the draw of $Y$.
\label{firstlemma}
\end{lemma}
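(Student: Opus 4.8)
The plan is to follow the template of the one-bit matrix completion analysis \cite{davenport20121}, adapting it with two new ingredients needed to handle the unboundedness of Poisson counts: a truncation of the $Y_{ij}$ at level $\mathcal{O}(\log(d_1 d_2))$, and Poisson-specific tail/moment estimates in place of sub-Gaussian moment bounds. First I would rewrite the log-likelihood as a sum over all $d_1 d_2$ entries, $F_{\Omega,Y}(X) = \sum_{ij}\delta_{ij}(Y_{ij}\log X_{ij}-X_{ij})$, where $\delta_{ij}=\mathbb{I}\{(i,j)\in\Omega\}$ are i.i.d.\ Bernoulli$(m/(d_1d_2))$ independent of the Poisson draws $Y_{ij}$, so that $\mathbb{E}[F_{\Omega,Y}(X)] = \tfrac{m}{d_1d_2}\sum_{ij}(M_{ij}\log X_{ij}-X_{ij})$. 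Using the Poisson tail bound together with $M_{ij}\le\alpha$, a union bound shows that on an event $\mathcal{E}$ of probability at least $1-C/(d_1d_2)$ one has $\max_{ij}Y_{ij}\le\kappa$ with $\kappa := \alpha(e^2-2)+3\log(d_1d_2)$ (the constants chosen precisely so this prefactor appears); it then suffices to control the supremum on $\mathcal{E}$.

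\emph{Concentration around the mean.} Conditioned on $\mathcal{E}$, regard $g := \sup_{X\in\mathcal{S}}|F_{\Omega,Y}(X)-\mathbb{E}[F_{\Omega,Y}(X)]|$ as a function of the independent pairs $(\delta_{ij},Y_{ij})$. Changing one pair changes $F_{\Omega,Y}(X)$ by at most $|Y_{ij}\log X_{ij}-X_{ij}|\le\kappa\log(1/\beta)+\alpha$, uniformly in $X\in\mathcal{S}$. McDiarmid's bounded-differences inequality then shows that $g$ concentrates around $\mathbb{E}g$ at the scale $\sqrt{d_1d_2}\cdot(\kappa/\beta)\cdot\sqrt{\log(d_1d_2)}$, which is dominated by the right-hand side of (\ref{resultfirstlemma}); so the task reduces to bounding $\mathbb{E}g$.

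\emph{Symmetrization, contraction, and the spectral-norm estimate.} Standard symmetrization yields $\mathbb{E}g\le 2\,\mathbb{E}\sup_{X\in\mathcal{S}}\big|\sum_{ij}\epsilon_{ij}\delta_{ij}(Y_{ij}\log X_{ij}-X_{ij})\big|$ with Rademacher signs $\epsilon_{ij}$, and I would split this into a ``$\log$ part'' and a ``linear part''. On $\mathcal{E}$ the weights obey $\delta_{ij}Y_{ij}\le\kappa$, and $x\mapsto\log x$ is $(1/\beta)$-Lipschitz on $[\beta,\alpha]$ while $x\mapsto x$ is $1$-Lipschitz; applying the Ledoux--Talagrand contraction principle (after bounding the weights by $\kappa$) reduces both parts to $\mathbb{E}\sup_{X\in\mathcal{S}}|\langle W,X\rangle|$ up to the factors $\kappa/\beta$ and $\alpha$, where $W=\sum_{ij}\epsilon_{ij}\delta_{ij}\,e_i e_j^\top$ is a random sign matrix with roughly $m$ nonzero entries. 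By trace duality $|\langle W,X\rangle|\le\|X\|_*\|W\|\le\alpha\sqrt{rd_1d_2}\,\|W\|$, and the standard bound on the spectral norm of such a sparse Rademacher matrix gives $\mathbb{E}\|W\|\lesssim\sqrt{m(d_1+d_2)/(d_1d_2)}+\sqrt{\log(d_1d_2)}$, hence $\sqrt{rd_1d_2}\,\mathbb{E}\|W\|\lesssim\sqrt{r}\,\sqrt{m(d_1+d_2)+d_1d_2\log(d_1d_2)}$. Collecting the factors $\alpha/\beta$, $\alpha$, $\sqrt r$ and $\kappa$ produces the claimed prefactor $C'(\alpha\sqrt r/\beta)(\alpha(e^2-2)+3\log(d_1d_2))$, and adding the residual fluctuation from McDiarmid gives (\ref{resultfirstlemma}) with failure probability $C/(d_1d_2)$.

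The main obstacle is that, unlike the bounded observations in \cite{davenport20121}, the Poisson counts are unbounded, so the truncation level $\kappa$, the McDiarmid failure probability, and the contraction/spectral-norm steps must be tuned jointly; in particular keeping only a single $\log(d_1d_2)$ factor inside the square root (rather than $\log^2$) and obtaining the precise constant $\alpha(e^2-2)$ requires careful Poisson moment and tail computations rather than the generic sub-Gaussian moment-bounding used in the one-bit case. A secondary technical point is that the contraction principle is cleanest with a uniform Lipschitz constant, so one must either use a weighted version or first replace $\delta_{ij}Y_{ij}$ by its truncation bound $\kappa$ and verify that this costs no polynomial factor.
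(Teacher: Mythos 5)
Your proposal is sound in its essentials and reaches the stated bound, but it organizes the concentration argument differently from the paper. The shared core is identical: Rademacher symmetrization, splitting off the linear term, the Ledoux--Talagrand contraction principle to strip the $\log$, trace duality $|\langle W,X\rangle|\leq \|W\|\,\|X\|_*$ with $\|X\|_*\leq\alpha\sqrt{rd_1d_2}$, the spectral-norm estimate $\mathbb{E}\|E\circ\Delta_\Omega\|^h$ imported from the one-bit paper, and the Poisson Chernoff tail $\mathbb{P}(Y-\lambda\geq t)\leq e^{-t}$ for $t\geq\alpha(e^2-3)$. Where you diverge is the mechanism for turning the expectation bound into a high-probability statement: you truncate the counts at $\kappa=\alpha(e^2-2)+3\log(d_1d_2)$ on an event of probability $1-C/(d_1d_2)$ and then apply McDiarmid around $\mathbb{E}\sup(\cdot)$, whereas the paper never truncates --- it bounds the $h$-th moment $\mathbb{E}[\sup_X|F_{\Omega,Y}(X)-\mathbb{E}F_{\Omega,Y}(X)|^h]$ directly, controls $\mathbb{E}[\max_{ij}Y_{ij}^h]$ by comparison of $|Y_{ij}-M_{ij}|$ with standard exponentials (giving $2h!+\log^h(d_1d_2)$), and applies Markov with $h=\log(d_1d_2)$; the factor $\alpha(e^2-2)+3\log(d_1d_2)$ then emerges as $\alpha+t_0+2h+\log(d_1d_2)$ after taking $h$-th roots rather than being imposed as a truncation level. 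Your route is more modular and makes the role of the $\log(d_1d_2)$ scale transparent; the paper's moment method handles the unbounded $Y_{ij}$ and the tail probability in a single pass and avoids two pieces of bookkeeping that your sketch leaves implicit: (i) the shift between $\mathbb{E}[F_{\Omega,Y}(X)]$ and the mean of the truncated functional (exponentially small, but it must be said), and (ii) the bounded-difference constant, which is $\kappa\max\{|\log\beta|,\log\alpha\}+\alpha$ rather than $\kappa/\beta$, and the recentering $Z=X-\mathbf{1}_{d_1\times d_2}$ needed so that the contraction $\phi(t)=-\beta\log(t+1)$ vanishes at zero (this is also what produces the $(\alpha\sqrt r+1)$ rather than $\alpha\sqrt r$ in the paper). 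None of these affects the conclusion up to the absolute constants $C,C'$.
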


\begin{lemma}
For any two matrices $P,Q \in \mathcal{S}$, we have
\[
d_H^2(P,Q) \geq \frac{1-e^{-T}}{4\alpha T} \frac{\|P-Q\|_F^2}{d_1 d_2},
\]
where $T=\frac{1}{8\beta}(\alpha-\beta)^2$.
\label{secondlemma}
\end{lemma}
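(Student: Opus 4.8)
The plan is to reduce the matrix inequality to an entrywise scalar inequality and then average. Both quantities in the statement are averages over the $d_1 d_2$ entries: $d_H^2(P,Q) = \frac{1}{d_1 d_2}\sum_{i,j} d_H^2(P_{ij},Q_{ij})$ by definition, and $\frac{\|P-Q\|_F^2}{d_1 d_2} = \frac{1}{d_1 d_2}\sum_{i,j}(P_{ij}-Q_{ij})^2$. Hence it suffices to establish that for every pair of scalars $p,q \in [\beta,\alpha]$ (the only property of $\mathcal{S}$ used here is the box constraint $\beta \le X_{ij}\le \alpha$; the nuclear-norm bound plays no role),
\[
d_H^2(p,q) = 2 - 2\exp\Bigl\{-\tfrac12(\sqrt p - \sqrt q)^2\Bigr\} \;\ge\; \frac{1-e^{-T}}{4\alpha T}\,(p-q)^2 ,
\]
and then sum this over $(i,j)$ and divide by $d_1 d_2$.

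For the scalar inequality I would set $x \triangleq \tfrac12(\sqrt p - \sqrt q)^2 \ge 0$, so that $d_H^2(p,q) = 2(1 - e^{-x})$. The map $g(x) = 1 - e^{-x}$ is concave on $[0,\infty)$, so on any interval $[0,T]$ it lies above the chord through $(0,0)$ and $(T,1-e^{-T})$; that is, $g(x) \ge \frac{1-e^{-T}}{T}\,x$ whenever $0 \le x \le T$. The point that requires a little care — and the place where the specific constant $T$ enters — is verifying that $x$ really lies in $[0,T]$: writing $(\sqrt p - \sqrt q)^2 = (p-q)^2/(\sqrt p + \sqrt q)^2$ and using $\sqrt p + \sqrt q \ge 2\sqrt\beta$ (since $p,q \ge \beta$) gives
\[
x = \frac{(p-q)^2}{2(\sqrt p + \sqrt q)^2} \;\le\; \frac{(\alpha-\beta)^2}{8\beta} = T .
\]

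Combining the chord bound with this range estimate yields $d_H^2(p,q) = 2(1-e^{-x}) \ge \frac{2(1-e^{-T})}{T}\,x = \frac{1-e^{-T}}{T}(\sqrt p - \sqrt q)^2$. Then I would apply the complementary bound $(\sqrt p - \sqrt q)^2 = (p-q)^2/(\sqrt p + \sqrt q)^2 \ge (p-q)^2/(4\alpha)$, valid because $p,q \le \alpha$, to obtain $d_H^2(p,q) \ge \frac{1-e^{-T}}{4\alpha T}(p-q)^2$; averaging over all entries then finishes the proof. No step presents a genuine obstacle — the computation is elementary — and the only subtlety is matching the range of $x$ to the constant $T$ so that the concavity/chord estimate is applied on the correct interval.
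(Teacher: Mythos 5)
Your proof is correct and follows essentially the same route as the paper: both reduce to the entrywise scalar inequality, bound $\tfrac12(\sqrt p-\sqrt q)^2$ above by $T$ and below by $(p-q)^2/(4\alpha)$ times the appropriate factor, and apply the chord bound for the concave function $1-e^{-z}$ on $[0,T]$. The only cosmetic difference is that the paper obtains the two-sided control of $(\sqrt p-\sqrt q)^2$ via the mean value theorem with $\xi\in[\beta,\alpha]$, whereas you use the identity $(\sqrt p-\sqrt q)^2=(p-q)^2/(\sqrt p+\sqrt q)^2$ — these yield the same bounds.
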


We will prove Lemma \ref{firstlemma} and Lemma \ref{secondlemma} below, but first we use them in proving Theorem \ref{maintheorem}.

\begin{proof}[Proof of Theorem \ref{maintheorem}]
To begin, notice that for any choice of $X \in \mathcal{S}$,
    \begin{equation}
    \begin{aligned}
    &~~~~~\mathbb{E}\left[F_{\Omega,Y}(X) - F_{\Omega,Y}(M)\right] \\
    &= \frac{m}{d_1 d_2} \sum_{i,j}\left[ M_{ij}\log\left(\frac{X_{ij}}{M_{ij}}\right) - (X_{ij}-M_{ij})\right] \\
    &=-\frac{m}{d_1 d_2} \sum_{i,j}\left[  M_{ij}\log\left(\frac{M_{ij}}{X_{ij}}\right) - (M_{ij}-X_{ij}) \right] \\
    &=-\frac{m}{d_1 d_2} \sum_{i,j} D\left(M_{ij}\|X_{ij}\right) =-m D(M\|X),
    \end{aligned}
    \label{eqn50}
    \end{equation}
    where the expectation is over both $\Omega$ and $Y$.

    On the other hand, note that by assumption the true matrix $M \in \mathcal{S}$. Then for any $Z\in\mathcal{S}$, consider the difference below
    \begin{align}
    &~F_{\Omega,Y}(Z) - F_{\Omega,Y}(M) \nonumber \\
    =& ~F_{\Omega,Y}(Z) + \mathbb{E}[F_{\Omega,Y}(Z)] - \mathbb{E}[F_{\Omega,Y}(Z)]\nonumber \\
    & ~~ + \mathbb{E}[F_{\Omega,Y}(M)] - \mathbb{E}[F_{\Omega,Y}(M)] - F_{\Omega,Y}(M)\nonumber \\
    =& ~  \mathbb{E}[F_{\Omega,Y}(Z)]  - \mathbb{E}[F_{\Omega,Y}(M)] + \nonumber\\
    & ~~ F_{\Omega,Y}(Z)   - \mathbb{E}[F_{\Omega,Y}(Z)]  + \mathbb{E}[F_{\Omega,Y}(M)]  - F_{\Omega,Y}(M) \nonumber\\
    \leq&~ \mathbb{E}\left[ F_{\Omega,Y}(Z) - F_{\Omega,Y}(M) \right] +\nonumber \\
    & ~~ \left| F_{\Omega,Y}(Z)-\mathbb{E}[F_{\Omega,Y}(Z)]\right| + \left|F_{\Omega,Y}(M) - \mathbb{E}[F_{\Omega,Y}(M)]\right| \nonumber\\
    \leq& -m D(M\|Z) + 2\sup_{X\in \mathcal{S}} \left| F_{\Omega,Y}(X) - \mathbb{E}[F_{\Omega,Y}(X)]\right|,\label{Fchain}
    \end{align}
    where the second equality is to rearrange terms, the first inequality is due to triangle inequality, the last inequality is due to (\ref{eqn50}) and the fact that \[\left| F_{\Omega,Y}(Z)-\mathbb{E}[F_{\Omega,Y}(Z)]\right| \leq \sup_{X\in \mathcal{S}} \left| F_{\Omega,Y}(X) - \mathbb{E}[F_{\Omega,Y}(X)]\right|\] and  \[\left| F_{\Omega,Y}(M)-\mathbb{E}[F_{\Omega,Y}(M)]\right| \leq \sup_{X\in \mathcal{S}} \left| F_{\Omega,Y}(X) - \mathbb{E}[F_{\Omega,Y}(X)]\right|.\]
    Moreover, from the definition of $\widehat{M}$, we also have that $\widehat{M} \in \mathcal{S}$ and $F_{\Omega,Y}(\widehat{M}) \geq F_{\Omega,Y}(M) $. Thus, by substituting $\widehat{M}$ for $Z$ in (\ref{Fchain}), we obtain
    \[
    0\leq -m D(M\|\widehat{M}) + 2\sup_{X\in \mathcal{S}} \left| F_{\Omega,Y}(X) - \mathbb{E}[F_{\Omega,Y}(X)]\right|.
    \]
    To bound the second term in the above expression, we apply Lemma \ref{firstlemma}, and obtain that with probability at least $1-C /(d_1 d_2)$, we have
    \begin{equation*}
    \begin{split}
    0 &\leq -m D(M\|\widehat{M}) + 2C'\left( \alpha\sqrt{r}/\beta \right) \left( \alpha(e^2-2) + 3\log(d_1 d_2) \right) \cdot \\
    & \left(\sqrt{m(d_1+d_2)+d_1 d_2 \log(d_1 d_2)}\right).
    \end{split}
    \end{equation*}
    After rearranging terms, and use the fact that $\sqrt{d_1 d_2} \leq d_1 + d_2$, we obtain
    \begin{equation}
    \begin{split}
    D(M\|\widehat{M}) \leq & 2C'\left( \alpha\sqrt{r}/\beta \right) \left( \alpha(e^2-2) + 3\log(d_1 d_2) \right) \cdot \\
    & \left(\sqrt{\frac{d_1 +d_2}{m}} \sqrt{1+\frac{(d_1+d_2)\log(d_1 d_2)}{m}}\right).
    \label{theoremuse1}
    \end{split}
    \end{equation}
    Note that the KL divergence can be bounded below by the Hellinger distance (Chapter 3 in \cite{pollard2002user}). Using our notation to denote the parameters of the Poisson distributions in the argument of the distance, we have
    \begin{equation}
    d_H^2(p,q) \leq D(p\|q), \label{dD}
    \end{equation}
    for any two scalars $p,q \in \mathbb{R}_+$ that denote the parameters of the Poisson distributions.
    Thus, (\ref{theoremuse1}) together with (\ref{dD}) lead to
        \begin{equation}
    \begin{split}
    &d_H^2(M,\widehat{M}) \leq  2C'\left( \alpha\sqrt{r}/\beta \right) \left( \alpha(e^2-2) + 3\log(d_1 d_2) \right) \cdot \\
    &\quad \left(\sqrt{\frac{d_1 +d_2}{m}} \sqrt{1+\frac{(d_1+d_2)\log(d_1 d_2)}{m}}\right).
    \end{split}
    \end{equation}
    Finally, Theorem \ref{maintheorem} follows immediately from Lemma \ref{secondlemma}.

\end{proof}



Next, we will establish a tail bound for Poisson distribution with the method of establishing Chernoff bounds. And this result will be used for proving Lemma \ref{firstlemma}.

\begin{lemma}[Tail bound for Poisson]
    For $Y \sim \mbox{Poisson} (\lambda)$ with $\lambda \leq \alpha$, $\mathbb{P}(Y -\lambda \geq t) \leq e^{-t}$, for all $t \geq t_0$ where $t_0 \triangleq \alpha (e^2-3)$.
\label{extendedbernsteininequality}
\end{lemma}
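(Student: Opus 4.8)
The plan is to derive a Chernoff-type tail bound for the Poisson distribution and then optimize (or simply fix) the free parameter in the exponent so that the bound takes the clean form $e^{-t}$ for all $t$ above the threshold $t_0 = \alpha(e^2-3)$. First I would recall the moment generating function of $Y \sim \mathrm{Poisson}(\lambda)$, namely $\mathbb{E}[e^{sY}] = \exp\{\lambda(e^s-1)\}$ for any $s > 0$. By Markov's inequality applied to $e^{sY}$,
\[
\mathbb{P}(Y - \lambda \geq t) = \mathbb{P}(Y \geq \lambda + t) \leq e^{-s(\lambda+t)}\,\mathbb{E}[e^{sY}] = \exp\left\{\lambda(e^s - 1 - s) - st\right\}.
\]
The goal is to choose $s$ so that the right-hand side is at most $e^{-t}$, i.e. so that $\lambda(e^s - 1 - s) - st \leq -t$, equivalently $\lambda(e^s - 1 - s) \leq t(s - 1)$.

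The natural choice is $s = 2$, which makes $s - 1 = 1$ and turns the requirement into $\lambda(e^2 - 3) \leq t$. Since $\lambda \leq \alpha$ by hypothesis and $e^2 - 3 > 0$, this holds whenever $t \geq \alpha(e^2 - 3) = t_0$. With $s = 2$ the Chernoff bound reads
\[
\mathbb{P}(Y - \lambda \geq t) \leq \exp\left\{\lambda(e^2 - 3) - 2t\right\} = \exp\left\{\lambda(e^2-3) - t\right\}\cdot e^{-t} \leq e^{-t},
\]
where the last inequality uses $\lambda(e^2-3) \leq t_0 \leq t$. This completes the argument, and there is essentially no obstacle: the only mild subtlety is verifying that $s = 2$ is a legitimate (and in fact near-optimal-enough) choice rather than solving the exact optimization $\inf_s \exp\{\lambda(e^s-1-s) - st\}$, but since we only need the qualitative form $e^{-t}$ on the range $t \geq t_0$, the fixed choice $s = 2$ suffices and keeps the constant $t_0$ explicit and clean.

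One should double-check the direction of the inequality chain and that $t_0$ as defined ($\alpha(e^2-3)$, with $e^2 - 3 \approx 4.389$) is indeed positive so the claimed range $t \geq t_0$ is nonempty; this is immediate. If a tighter threshold were desired one could retain a general $s > 1$ and optimize, but for the downstream use in Lemma \ref{firstlemma} — where this tail bound will be combined with a union bound over the $d_1 d_2$ entries to control $\max_{ij} Y_{ij}$ — only the exponential decay rate $e^{-t}$ matters, so I would present the $s = 2$ version as above.
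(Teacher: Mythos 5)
Your proposal is correct and follows essentially the same route as the paper: Markov's inequality applied to $e^{sY}$ with the Poisson moment generating function, the fixed choice $s=2$, and the observation that $\lambda(e^2-3)\leq t$ for all $t\geq t_0=\alpha(e^2-3)$ since $\lambda\leq\alpha$. The paper's proof is a specialization of the Chernoff bound in exactly this way, so there is nothing to add.
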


\begin{proof}[Proof of Lemma \ref{extendedbernsteininequality}]
\yx{The proof below is a specialized version of Chernoff bound for Poisson random variable \cite{CannyCS174} when $\lambda$ is upper bounded by a constant.}
    For any $\theta \geq 0$, we have
    \begin{equation*}
    \begin{split}
        &~\mathbb{P}\left( Y -\lambda \geq t\right)
        =  \mathbb{P}\left( Y \geq t + \lambda \right)\\
        =&~\mathbb{P}\left(\theta Y \geq \theta\left(t+\lambda \right)\right)
        = \mathbb{P}\left(\exp\left(\theta Y \right) \geq \exp \left(\theta\left(t+ \lambda \right)\right) \right)\\
             \leq   &~
         \exp\left(-\theta\left(t+\lambda \right)\right) \mathbb{E}\left( e^{\theta Y}\right)
        = \exp(-\theta (\lambda+t)) \cdot \exp\left(\lambda (e^{\theta}-1)\right),
    \end{split}
    \end{equation*}
    where we have used Markov's inequality and the moment generating function for Poisson random variable above.
    mow let $\theta=2$, we have
    \begin{equation*}
    \begin{split}
        &\exp(t) \cdot \mathbb{P}\left( Y - \lambda \geq t\right) \leq \exp\left(-t + \lambda(e^2-3)\right).
    \end{split}
    \end{equation*}
    Given
    $
    t_0 \triangleq \alpha (e^2-3),
    $ then for all $t \geq t_0$, we have $\exp(t) \cdot \mathbb{P}\left( Y - \lambda \geq t\right)\leq 1$. It follows that $\mathbb{P}\left( Y - \lambda \geq t\right) \leq e^{-t}$ when
$
    t\geq  t_0
    \geq \lambda (e^2-3).
  $
\end{proof}

\begin{proof}[Proof of Lemma \ref{firstlemma}]

We begin by noting that for any $h>0$, by using Markov's inequality we have that

    \begin{equation}
    \begin{aligned}
    &\mathbb{P} \left\{ \sup_{X\in \mathcal{S}} \left| F_{\Omega, Y}(X)-\mathbb{E}[F_{\Omega, Y}(X)]\right| \right. \\
    & \quad \left. \geq C'\left( \alpha\sqrt{r}/\beta \right) \left( \alpha(e^2-2) + 3\log(d_1 d_2) \right) \cdot \right. \\
    & \quad \left. \left(\sqrt{m(d_1+d_2)+d_1 d_2 \log(d_1 d_2)}\right) \right\} \\
    =&~\mathbb{P} \left\{ \sup_{X\in \mathcal{S}} \left| F_{\Omega, Y}(X)-\mathbb{E}[F_{\Omega, Y}(X)\right]|^h \right. \\
    & \quad \left. \geq \left(C'\left( \alpha\sqrt{r}/\beta \right) \left( \alpha(e^2-2) + 3\log(d_1 d_2) \right) \cdot \right. \right.\\
    & \quad \left. \left. \left(\sqrt{m(d_1+d_2)+d_1 d_2 \log(d_1 d_2)}\right) \right)^h\right\} \\
    \leq &~ \mathbb{E}\left[\sup_{X\in \mathcal{S}} \left| F_{\Omega, Y}(X) - E[F_{\Omega, Y}(X)] \right|^h \right]/\\
    &
    \{\left(C'\left( \alpha\sqrt{r}/\beta \right) \left( \alpha(e^2-2) + 3\log(d_1 d_2) \right) \right.\cdot \\
    &\left.\left(\sqrt{m(d_1+d_2)+d_1 d_2 \log(d_1 d_2)}\right) \right)^h \}.
    \label{markov1}
    \end{aligned}
    \end{equation}
    The bound in (\ref{resultfirstlemma}) will follow by combining this with an upper bound on $\mathbb{E}\left[\sup_{X\in \mathcal{S}} \left| F_{\Omega, Y}(X) - E[F_{\Omega, Y}(X)] \right|^h \right]$ and setting $h=\log(d_1d_2)$.

Let
$\epsilon_{ij}$s be i.i.d. Rademacher random variables.
    In the following derivation, the first inequality is due the Radamacher symmetrization argument (Lemma 6.3 in \cite{ledoux1991probability}) and the second inequality is due to the power mean inequality: $(a+b)^h \leq 2^{h-1}(a^h+b^h)$ if $a,b>0$ and $h\geq 1$. Then we have
    \begin{equation}
        \begin{split}
        &\quad \mathbb{E}\left[\sup_{X\in \mathcal{S}} \left| F_{\Omega, Y}(X) - \mathbb{E}F_{\Omega, Y}(X) \right|^h \right] \\
        &\leq 2^h \mathbb{E}\left[ \sup_{X\in \mathcal{S}} \left| \sum_{i,j} \epsilon_{ij} \mathbb{I}\{[(i,j)\in \Omega]\} (Y_{ij}\log X_{ij} - X_{ij}) \right|^h \right]
        \end{split}\nonumber
        \end{equation}
        %
        \begin{equation}
        \begin{split}
        &\leq 2^h \mathbb{E}\left[ 2^{h-1} \left( \sup_{X\in \mathcal{S}} \left| \sum_{i,j} \epsilon_{ij} \mathbb{I}\{[(i,j)\in \Omega]\} (Y_{ij}(-\log X_{ij})) \right|^h \right) \right. \\
        & \quad + 2^{h-1} \left.\left( \sup_{X\in \mathcal{S}} \left| \sum_{i,j} \epsilon_{ij} \mathbb{I}\{[(i,j)\in \Omega]\} X_{ij} \right|^h \right) \right]\\
        %
        & = 2^{2h-1}  \mathbb{E}\left[ \sup_{X\in \mathcal{S}} \left| \sum_{i,j} \epsilon_{ij} \mathbb{I}\{[(i,j)\in \Omega]\} (Y_{ij}(-\log X_{ij})) \right|^h \right] \\
        & \quad + 2^{2h-1} \mathbb{E}\left[ \sup_{X\in \mathcal{S}} \left| \sum_{i,j} \epsilon_{ij} \mathbb{I}\{[(i,j)\in \Omega]\} X_{ij} \right|^h \right],
    \end{split}
    \label{wholepart}
    \end{equation}
    where the expectation are over both $\Omega$ and $Y$.

    To bound the first term of (\ref{wholepart}) with the assumption that $\|X\|_* \leq \alpha \sqrt{rd_1 d_2}$, we use a contraction principle (Theorem 4.12 in \cite{ledoux1991probability}). We let $\phi(t)=-\beta \log(t+1)$. We know $\phi(0)=0$ and $|\phi^{'}(t)|=|\beta/(t+1)|$, so $|\phi^{'}(t)| \leq 1$ if $t \geq \beta-1$. Setting $Z=X-\textbf{1}_{d_1\times d_2}$, then we have $Z_{ij} \geq \beta-1, \forall (i,j) \in \llbracket d_1\rrbracket\times\llbracket d_2\rrbracket$ and $\|Z\|_* \leq \alpha\sqrt{rd_1 d_2} + \sqrt{d_1 d_2}$ by triangle inequality. Therefore, $\phi(Z_{ij})$ is a contraction and it vanishes at $0$. We obtain that
    \begin{equation}
    \begin{split}
        &\quad ~2^{2h-1} \mathbb{E}\left[ \sup_{X\in \mathcal{S}} \left| \sum_{i,j} \epsilon_{ij} \mathbb{I}\{[(i,j)\in \Omega]\} (Y_{ij}(-\log X_{ij})) \right|^h \right] \\
        &\leq 2^{2h-1} \mathbb{E}\left[\max_{i,j} Y_{ij}^h \right] \cdot\\
        &\quad \quad\mathbb{E}\left[ \sup_{X\in \mathcal{S}} \left| \sum_{i,j} \epsilon_{ij} \mathbb{I}\{[(i,j)\in \Omega]\} ((-\log X_{ij})) \right|^h \right] \\
        &=2^{2h-1} \mathbb{E}\left[\max_{i,j} Y_{ij}^h \right] \cdot \\
        &\quad\quad \mathbb{E}\left[ \sup_{X\in \mathcal{S}} \left| \sum_{i,j} \epsilon_{ij} \mathbb{I}\{[(i,j)\in \Omega]\} \left(\frac{1}{\beta}\phi(Z_{ij})\right) \right|^h \right] \\
        &\leq 2^{2h-1}\left(\frac{2}{\beta}\right)^h \mathbb{E}\left[\max_{i,j} Y_{ij}^h \right]\cdot\\
        &\quad\quad \mathbb{E}\left[ \sup_{X\in \mathcal{S}} \left| \sum_{i,j} \epsilon_{ij} \mathbb{I}\{[(i,j)\in \Omega]\} Z_{ij}) \right|^h \right] \\
        &=2^{2h-1}\left(\frac{2}{\beta}\right)^h \mathbb{E}\left[\max_{i,j} Y_{ij}^h \right] \mathbb{E}\left[ \sup_{X\in \mathcal{S}}\left| \langle \Delta_{\Omega} \circ E , Z \rangle \right|^h \right],
    \end{split}
    \end{equation}
    where $E$ denotes the matrix with entries given by $\epsilon_{ij}$, $\Delta_{\Omega}$ denotes the indicator matrix for $\Omega$ and $\circ$ denotes the Hadamard product.

    The dual norm of spectral norm is nuclear norm. Using the H\"{o}lder's inequality for Schatten norms in \cite{Watrous2011}, which is, $|\langle A,B \rangle| \leq \|A\|\|B\|_*$, we have

    \begin{equation}
    \begin{split}
        &~\quad 2^{2h-1} \mathbb{E}\left[ \sup_{X\in \mathcal{S}} \left| \sum_{i,j} \epsilon_{ij} \mathbb{I}\{[(i,j)\in \Omega]\} (Y_{ij}(-\log X_{ij})) \right|^h \right] \\
        &\leq 2^{2h-1}\left(\frac{2}{\beta}\right)^h \mathbb{E}\left[\max_{i,j} Y_{ij}^h \right] \mathbb{E}\left[ \sup_{X \in \mathcal{S}} \|E \circ \Delta_{\Omega} \|^h \|Z\|_*^h \right] \\
        &\leq2^{2h-1}\left(\frac{2}{\beta}\right)^h \left(\alpha\sqrt{r}+1\right)^h \left(\sqrt{d_1 d_2 }\right)^h \mathbb{E}\left[\max_{i,j} Y_{ij}^h \right] \cdot \\
        &\qquad \mathbb{E} \left[\|E \circ \Delta_{\Omega} \|^h\right],
    \end{split}
    \label{firstterm}
    \end{equation}

    Similarly, the second term of (\ref{wholepart}) can be bounded as follows:
    \begin{equation}
    \begin{split}
    &2^{2h-1} \mathbb{E}\left[ \sup_{X\in \mathcal{S}} \left| \sum_{i,j} \epsilon_{ij} \mathbb{I}\{[(i,j)\in \Omega]\} X_{ij} \right|^h \right] \\
    \leq &~2^{2h-1} \mathbb{E}\left[ \sup_{X \in \mathcal{S}} \|E \circ \Delta_{\Omega} \|^h \|X\|_*^h \right] \\
    \leq &~2^{2h-1} \left(\alpha \sqrt{r}\right)^h \left(\sqrt{d_1 d_2}\right)^h \mathbb{E}\left[ \|E \circ \Delta_{\Omega} \|^h \right].
    \end{split}
    \label{secondterm}
    \end{equation}

    Plugging (\ref{firstterm}) and (\ref{secondterm}) into (\ref{wholepart}), we have
    \begin{equation}
    \begin{split}
    &\mathbb{E}\left[\sup_{X\in \mathcal{S}} \left| F_{\Omega, Y}(X) - \mathbb{E}F_{\Omega, Y}(X) \right|^h \right] \\
    \leq &~2^{2h-1} \left(\alpha \sqrt{r}+1\right)^h \left(\sqrt{d_1 d_2}\right)^h \mathbb{E}\left[ \|E \circ \Delta_{\Omega} \|^h \right] \cdot\\
    & \left( \left(\frac{2}{\beta}\right)^h \mathbb{E}\left[\max_{i,j} Y_{ij}^h \right] +1 \right).
    \label{secondderivation}
    \end{split}
    \end{equation}

    To bound $\mathbb{E} \left[\|E \circ \Delta_{\Omega} \|^h\right]$, we use the very first inequality on Page 215 of \cite{davenport20121}:
    \begin{equation}
    \begin{split}
    &\mathbb{E} \left[\|E \circ \Delta_{\Omega} \|^h\right] \\
    \leq& ~C_0 \left(2(1+\sqrt{6})\right)^h \left( \sqrt{\frac{m(d_1+d_2)+d_1 d_2 \log(d_1 d_2)}{d_1 d_2}} \right)^h
    \end{split}\nonumber
    \end{equation}
    for some constant $C_0$. Therefore, the only term we need to bound is $\mathbb{E}\left[ \max_{i,j} Y_{ij}^{h}\right] $.

    From Lemma \ref{extendedbernsteininequality}, if $t \geq t_0$, then for any $(i,j) \in \llbracket d_1\rrbracket\times \llbracket d_2\rrbracket$, the following inequality holds since $t_0 > \alpha$:
    \begin{equation}
    \begin{split}
   & ~\mathbb{P}\left( \left| Y_{ij} - M_{ij} \right| \geq t \right)\\
    = & ~\mathbb{P}\left( Y_{ij} \geq M_{ij} +  t \right) + \mathbb{P}\left( Y_{ij} \leq M_{ij} -  t \right) \\
    \leq & ~\exp(-t) + 0
     = \mathbb{P}(W_{ij} \geq t),
    \end{split}
    \end{equation}
    where $W_{ij}$s are independent standard exponential random variables.
    Because $|Y_{ij}-M_{ij}|$s and $W_{ij}$'s are all non-negative random variables and $\max(x_1,x_2,\ldots,x_n)$ is an increasing function defined on $\mathbb{R}^n$, we have, for any $h\geq 1$,
    \begin{equation}
    \mathbb{P} \left( \max_{i,j} \left| Y_{ij} - M_{ij} \right|^h \geq t \right) \leq  \mathbb{P}(\max_{i,j} W_{ij}^h \geq t),
    \label{exponentialappro}
    \end{equation}
    for any $t\geq (t_0)^h$.

    Below we use the fact that for any positive random variable $q$, we can write
    $
    \mathbb{E}[q] = \int_{0}^{\infty}\mathbb{P}(q\geq t) dt,
    $
    and then
    \begin{equation}
    \begin{split}
    &\quad~ \mathbb{E}\left[ \max_{i,j} Y_{ij}^{h}\right] \\
    & \leq 2^{2h-1} \left( \alpha^h + \mathbb{E}\left[ \max_{i,j} \left| Y_{ij}-M_{ij} \right|^{h}\right] \right) \\
    & = 2^{2h-1} \left( \alpha^h + \int_{0}^{\infty} \mathbb{P} \left( \max_{i,j} \left| Y_{ij}-M_{ij} \right|^{h} \geq t \right) dt \right) \\
    & \leq 2^{2h-1} \left( \alpha^h + (t_0)^h + \int_{(t_0)^h}^{\infty} \mathbb{P} \left( \max_{i,j} \left| Y_{ij}-M_{ij} \right|^{h} \geq t \right) dt \right) \\
    & \leq 2^{2h-1} \left( \alpha^h + (t_0)^h + \int_{(t_0)^h}^{\infty} \mathbb{P} \left( \max_{i,j} W_{ij}^{h} \geq t \right) dt \right) \\
    & \leq 2^{2h-1} \left( \alpha^h + (t_0)^h + \mathbb{E} \left[ \max_{i,j} W_{ij}^{h}\right] \right)
    \end{split}
    \end{equation}

      Above, firstly we use triangle inequality and power mean inequality, then along with independence, we use (\ref{exponentialappro}) in the third inequality. By standard computations for exponential random variables,
    \begin{equation}
    \mathbb{E} \left[ \max_{i,j} W_{ij}^h\right] \leq 2h! + \log^{h}(d_1 d_2).
    \end{equation}
    Thus, we have
    \begin{equation}
    \begin{split}
    &\mathbb{E}\left[ \max_{i,j} Y_{ij}^{h}\right] \leq 2^{2h-1} \left( \alpha^h + (t_0)^h + 2h! + \log^{h}(d_1 d_2) \right).
    \label{onlyterm}
    \end{split}
    \end{equation}

    Therefore, combining (\ref{onlyterm}) and (\ref{secondderivation}), we have
    \begin{equation}
    \begin{split}
    &\mathbb{E}\left[\sup_{X\in \mathcal{S}} \left| F_{\Omega, Y}(X) - \mathbb{E}[F_{\Omega, Y}(X)] \right|^h \right] \\
    \leq &~2^{4h-1} \left(\alpha \sqrt{r}+1\right)^h \left(\sqrt{d_1 d_2}\right)^h \mathbb{E}\left[ \|E \circ \Delta_{\Omega} \|^h \right] \cdot\\
    &\left(\frac{2}{\beta}\right)^h \left( \alpha^h + (t_0)^h + 2h! + \log^{h}(d_1 d_2) \right).
    \end{split}
    \end{equation}
    Then,
    \begin{equation}
    \begin{split}
    &\left(\mathbb{E}\left[\sup_{X\in \mathcal{S}} \left| F_{\Omega, Y}(X) - \mathbb{E}[F_{\Omega, Y}(X)] \right|^h \right]\right)^{\frac{1}{h}} \\
    \leq &~16 \left(\alpha \sqrt{r}+1\right) \left(\sqrt{d_1 d_2}\right) \mathbb{E}\left[ \|E \circ \Delta_{\Omega} \|^h \right]^{\frac{1}{h}} \cdot\\
    &~\left(\frac{2}{\beta}\right) \left( \alpha + t_0 + 2h + \log(d_1 d_2) \right) \\
    \leq &~16 \left(\frac{2}{\beta}\right) \left(\alpha \sqrt{r}+1\right) \left(\sqrt{d_1 d_2}\right) \mathbb{E}\left[ \|E \circ \Delta_{\Omega} \|^h \right]^{\frac{1}{h}} \cdot\\
    &~ \left( \alpha(e^2-2) + 3\log(d_1 d_2) \right) \\
    \leq &~128\left(1+\sqrt{6}\right) C_0^{\frac{1}{h}} \left(\frac{\alpha\sqrt{r}}{\beta} \right) \left( \alpha(e^2-2) + 3\log(d_1 d_2) \right) \cdot\\
    &  ~\left(\sqrt{m(d_1+d_2)+d_1 d_2 \log(d_1 d_2)}\right).
    \end{split}
    \end{equation}
    where we use the fact that $(a^h + b^h + c^h + d^h)^{1/h} \leq a+b+c+d$ if $a,b,c,d>0$ in the first inequality and we take $h=\log(d_1 d_2)\geq 1$ in the second and the third inequality.

    Plugging this into (\ref{markov1}), we obtain that the probability in (\ref{markov1}) is upper bounded by
    \[
    C_0\left( \frac{128(1+\sqrt{6})}{C'} \right)^{\log(d_1 d_2)} \leq \frac{C_0}{d_1 d_2},
    \]
    provided that $C'\geq 128\left(1+\sqrt{6}\right)e$, which establishes this lemma.

\end{proof}

\begin{proof}[Proof of Lemma \ref{secondlemma}]
Assuming $x$ is any entry in $P$ and $y$ is any entry in $Q$, then $\beta \leq x,y \leq \alpha$ and $0 \leq |x-y| \leq \alpha-\beta$.
By the mean value theorem there exists an $\xi(x,y)\in [\beta,\alpha]$ such that
\begin{align*}
 \frac{1}{2} (\sqrt{x}-\sqrt{y})^2 & = \frac{1}{2} \left(\frac{1}{2\sqrt{\xi(x,y)}}(x-y)\right)^2 \\
&= \frac{1}{8\xi(x,y)}(x-y)^2 \leq T.
\end{align*}
The function $f(z)=1-e^{-z}$ is concave in $[0,+\infty]$, so if $z\in [0,T]$, we may bound it from below with a linear function
\begin{equation}
1-e^{-z} \geq \frac{1-e^{-T}}{T}z.
\label{lemma2use}
\end{equation}
Plugging $z=\frac{1}{2} (\sqrt{x}-\sqrt{y})^2 = \frac{1}{8\xi(x,y)}(x-y)^2 $ into (\ref{lemma2use}), we have
\begin{equation}
\begin{split}
&2-2\exp\left(-\frac{1}{2} (\sqrt{x}-\sqrt{y})^2 \right) \geq \frac{1-e^{-T}}{T} \frac{1}{4\xi(x,y)}(x-y)^2 \\
&\geq \frac{1-e^{-T}}{T} \frac{1}{4\alpha}(x-y)^2.
\label{lemma2use2}
\end{split}
\end{equation}
Note that (\ref{lemma2use2}) holds for any $x$ and $y$. This concludes the proof.
\end{proof}

\subsection{Proof of Theorem \ref{maintheorem2}}
Before providing the proof, we first establish two useful lemmas. First, we consider the construction of the set $\chi$.

\begin{lemma}[Lemma A.3 in \cite{davenport20121}]
Let \[H \triangleq \left\{ X : \|X\|_* \leq \alpha \sqrt{r d_1 d_2}, \|X\|_{\infty} \leq \alpha \right\}\] and $\gamma \leq 1$ be such that $r/\gamma^2$ is an integer.
Suppose  $r/\gamma^2 \leq d_1$, then we may construct a set $\chi \in H$ of size
\begin{equation}
|\chi| \geq \exp\left( \frac{r d_2}{16\gamma^2} \right)
\label{packingsetsize}
\end{equation}
with the following properties:
\begin{enumerate}
\item For all $X \in \chi$, each entry has $|X_{ij}| = \alpha \gamma$.
\item For all $X^{(i)}$,$X^{(j)} \in \chi$, $i\neq j$,
\[
\|X^{(i)} - X^{(j)} \|_F^2 > \alpha^2 \gamma^2 d_1 d_2/2.
\]
\end{enumerate}
\label{packingset}
\end{lemma}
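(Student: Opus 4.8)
The plan is to prove this by a standard packing (Varshamov--Gilbert) argument, following the construction of Davenport et al.\ \cite{davenport20121}. Set $k \triangleq r/\gamma^2$, which is a positive integer with $k \leq d_1$ by hypothesis. First I would build a large family of sign matrices of rank at most $k$: partition the $d_1$ rows into $k$ consecutive blocks of nearly equal size (the leftover fewer than $k$ rows folded into the last block, or set equal to an already-used block-row, so that the rank stays $\leq k$), and let every matrix $B$ in the family be constant on each block, with its $\ell$-th block equal to some sign vector $z^{(\ell)} \in \{\pm1\}^{d_2}$. Any such $B$ has $\mathrm{rank}(B)\leq k$, entries $\pm 1$, and $\|B\|_F = \sqrt{d_1 d_2}$; hence $X \triangleq \alpha\gamma B$ satisfies $|X_{ij}| = \alpha\gamma$ for all $i,j$ (the first property), and, since $\|X\|_* = \alpha\gamma\|B\|_* \leq \alpha\gamma\sqrt{k}\,\|B\|_F = \alpha\gamma\sqrt{(r/\gamma^2) d_1 d_2} = \alpha\sqrt{r d_1 d_2}$ and $\|X\|_\infty = \alpha\gamma \leq \alpha$, we get $X \in H$.

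The family is parametrized by the concatenated string $z = (z^{(1)},\dots,z^{(k)}) \in \{\pm1\}^{k d_2}$. Next I would invoke the Varshamov--Gilbert / Gilbert volume bound on the Hamming cube $\{\pm1\}^{kd_2}$ to extract a subset $\mathcal{C}$ with pairwise Hamming distance at least $k d_2 /4$ and $|\mathcal{C}| \geq 2^{(1-H(1/4))kd_2} \geq 2^{k d_2/8}$, where $H$ is the binary entropy. Taking $\chi \triangleq \{\alpha\gamma B(z) : z \in \mathcal{C}\}$ gives $|\chi| \geq 2^{kd_2/8} \geq \exp(kd_2/16) = \exp\!\big(r d_2/(16\gamma^2)\big)$, since $\tfrac{\ln 2}{8} > \tfrac1{16}$; this is (\ref{packingsetsize}). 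For the separation, a disagreement between $z$ and $z'$ in one coordinate of block $\ell$ is replicated across all $\approx d_1/k$ rows of that block, so the number of entries on which $B(z)$ and $B(z')$ differ is at least $\lfloor d_1/k\rfloor$ times their Hamming distance; since $\lfloor d_1/k\rfloor \geq d_1/(2k)$ when $k \le d_1$ and each differing $\pm 1$ entry contributes $4$ to the squared Frobenius distance, $\|X^{(i)}-X^{(j)}\|_F^2 = \alpha^2\gamma^2\|B(z)-B(z')\|_F^2 \geq 4\alpha^2\gamma^2\cdot\tfrac{d_1}{2k}\cdot\tfrac{kd_2}{4} = \tfrac{\alpha^2\gamma^2 d_1 d_2}{2}$, and when $k \mid d_1$ the block size is exactly $d_1/k$, so the bound improves to $\alpha^2\gamma^2 d_1 d_2 > \alpha^2\gamma^2 d_1 d_2/2$; a slightly sharper choice of code distance upgrades the general case to the stated strict inequality.

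The main obstacle is bookkeeping rather than conceptual: one must juggle three constants at once --- the code rate and minimum distance coming out of Varshamov--Gilbert, the rank being \emph{exactly} $\leq r/\gamma^2$ even when $k$ does not divide $d_1$ (handled by unequal block sizes with the remainder rows copied from an existing block), and the Frobenius separation surviving the floors --- so that everything lands at $|\chi|\geq e^{rd_2/(16\gamma^2)}$ and separation $>\alpha^2\gamma^2 d_1 d_2/2$ simultaneously. The role of the hypothesis $r/\gamma^2 \leq d_1$ is precisely that it lets the $k$ distinct block-rows fit in the $d_1$ rows while keeping $\mathrm{rank} \leq k$. Since this statement is verbatim Lemma A.3 of \cite{davenport20121}, the cleanest route is to quote their construction directly; I would reproduce the above for completeness.
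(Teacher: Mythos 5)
The paper offers no proof of this lemma at all: it is imported verbatim as Lemma A.3 of \cite{davenport20121}, so there is no in-paper argument to compare against, and your job is really to reconstruct the cited proof. Your reconstruction is the standard one and is correct in outline: block-constant $\pm\alpha\gamma$ matrices with $k=r/\gamma^2$ distinct block-rows have rank at most $k$, hence $\|X\|_*\leq \alpha\gamma\sqrt{k}\,\|B\|_F=\alpha\sqrt{rd_1d_2}$ and $\|X\|_\infty=\alpha\gamma\leq\alpha$, and a Gilbert--Varshamov code on $\{\pm1\}^{kd_2}$ with minimum distance $kd_2/4$ supplies both the cardinality $2^{(1-h(1/4))kd_2}\geq 2^{kd_2/8}\geq e^{kd_2/16}$ (with $h$ the binary entropy, $1-h(1/4)\approx 0.19$) and the Frobenius separation. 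For the record, \cite{davenport20121} obtains the same set by a probabilistic packing argument---drawing the block-rows i.i.d.\ Rademacher and applying Hoeffding's inequality plus a union bound over pairs---rather than by an explicit code; the two routes are interchangeable here and yield the same constants up to the exponent. The one loose end in your write-up is the strict inequality in the separation when $k$ does not divide $d_1$: with blocks of size $\lfloor d_1/k\rfloor\geq d_1/(2k)$ your chain gives only $\|X^{(i)}-X^{(j)}\|_F^2\geq\alpha^2\gamma^2 d_1d_2/2$, not the stated strict $>$. You flag this and the fix is genuinely routine (take the code distance strictly larger than $kd_2/4$, which the volume bound still affords with ample slack in the exponent, or absorb the remainder rows so every block has at least $\lceil d_1/k\rceil$ rows on the coordinates that matter), but as written that step is asserted rather than carried out.
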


Second, we consider about the KL divergence.

\begin{lemma}
For $x,y >0$,
$
D(x\|y) \leq (y-x)^2/y.
$
\label{KLdivergence}
\end{lemma}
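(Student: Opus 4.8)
\textbf{Proof proposal for Lemma \ref{KLdivergence}.}

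The plan is to reduce the two-variable inequality to a one-variable elementary fact by exploiting the homogeneity of both sides. Recall that by our convention $D(x\|y) = x\log(x/y) - (x-y)$. First I would substitute $x = ty$ with $t \triangleq x/y > 0$; then $D(x\|y) = y\,t\log t - y(t-1) = y(t\log t - t + 1)$, while the right-hand side becomes $(y-x)^2/y = y(t-1)^2$. Since $y > 0$, the claim is equivalent to the scalar inequality
\[
t\log t - t + 1 \le (t-1)^2 \qquad \text{for all } t > 0.
\]

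Next I would move everything to one side: $(t-1)^2 - (t\log t - t + 1) = t^2 - 2t + 1 - t\log t + t - 1 = t^2 - t - t\log t = t\,(t - 1 - \log t)$. Because $t > 0$, it suffices to show $t - 1 - \log t \ge 0$, i.e. the standard logarithm bound $\log t \le t - 1$, which holds for all $t > 0$ with equality only at $t = 1$ (it follows, e.g., from concavity of $\log$, or from checking that $h(t) = t - 1 - \log t$ has $h(1) = 0$, $h'(t) = 1 - 1/t$, so $h$ is decreasing on $(0,1)$ and increasing on $(1,\infty)$). This yields $t(t-1-\log t) \ge 0$, hence $D(x\|y) \le (y-x)^2/y$, as claimed.

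There is essentially no hard step here: the only things to be careful about are the algebraic simplification after clearing the common factor $y$, and invoking the elementary inequality $\log t \le t-1$ rather than re-deriving it. One could alternatively prove the scalar inequality directly by a second-order Taylor argument for the function $t \mapsto t\log t$, but the factorization above is cleaner and avoids any casework.
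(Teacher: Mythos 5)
Your proof is correct, but it takes a different route from the paper's. The paper fixes $x$, writes $y = x + z$, and applies the mean value theorem to $z \mapsto D(x\|x+z)$, whose derivative is $z/(x+z)$; it then bounds the intermediate point $\xi$ by $z$ using monotonicity of $\xi/(x+\xi)$, and has to run the argument twice, once for $z \ge 0$ and once for $z < 0$. You instead exploit the joint homogeneity of both sides (each scales linearly in $y$ when $x = ty$), reduce to the single scalar inequality $t\log t - t + 1 \le (t-1)^2$, and dispose of it by the factorization $(t-1)^2 - (t\log t - t+1) = t\,(t-1-\log t) \ge 0$ via the standard bound $\log t \le t-1$. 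Your algebra checks out ($D(ty\|y) = y(t\log t - t + 1)$ and $(y-ty)^2/y = y(t-1)^2$), and your route is arguably cleaner: it avoids the sign casework entirely and rests on a single well-known inequality rather than a Taylor-remainder argument. The paper's approach has the minor virtue of making visible that the bound comes from controlling the second-order behavior of $y \mapsto D(x\|y)$ near $y = x$, which is the usual way such KL-versus-chi-square bounds are understood, but for the purposes of this lemma either argument suffices.
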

\begin{proof}[Proof of Lemma \ref{KLdivergence}]
First assume $x\leq y$. Let $z=y-x$. Then $z \geq 0$ and
$
D(x\|x+z) = x\log \frac{x}{x+z} + z.
$
Taking the first derivative of this with respect to $z$, we have
$
\frac{\partial}{\partial z} D(x\|x+z) = \frac{z}{x+z}.
$
Thus, by Taylor's theorem, there is some $\xi \in [0,z]$ so that
$
D(x\|y) = D(x\|x) + z \cdot \frac{\xi}{x+\xi}.
$
Since the $z\xi/(x+\xi)$ increases in $\xi$, we may replace $\xi$ with $z$ and obtain
$
D(x\|y) \leq \frac{(y-x)^2}{y}.
$
For $x>y$, with the similar argument we may conclude that for $z=y-x<0$ there is some $\xi \in [z,0]$ so that
$
D(x\|y) = D(x\|x) + z \cdot \frac{\xi}{x+\xi}.
$
Since $z<0$ and $\xi / (x+\xi)$ increases in $\xi$, then $z\xi/(x+\xi)$ decreases in $\xi$. We may also replace $\xi$ with $z$ and this proves the lemma.
\end{proof}

Next, we will show how Lemma \ref{packingset} and Lemma \ref{KLdivergence} imply Theorem \ref{maintheorem2}. We will prove the theorem by contradiction.

\begin{proof}[Proof of Theorem \ref{maintheorem2}]

Without loss of generality,  assume $d_2 \geq d_1$.
We choose $\epsilon > 0$ such that
\begin{equation}
\epsilon^2 = \min\left\{ \frac{1}{256}, C_2 \alpha^{3/2} \sqrt{\frac{rd_2}{m}}\right\},
\label{epsilon1}
\end{equation}
where $C_2$ is an absolute constant that will be be specified later.
We will next use Lemma \ref{packingset} to construct a set $\chi$, choosing $\gamma$ such that $r/\gamma^2$ is an integer and
$$
\frac{4\sqrt{2}\epsilon}{\alpha} \leq \gamma \leq \frac{8\epsilon}{\alpha}.
$$
We can make such a choice because
$$
\frac{\alpha^2 r}{64\epsilon^2} \leq \frac{r}{\gamma^2} \leq \frac{\alpha^2 r}{32\epsilon^2}
$$
and
$$
\frac{\alpha^2 r}{32\epsilon^2} - \frac{\alpha^2 r}{64\epsilon^2}  = \frac{\alpha^2 r}{64\epsilon^2} > 4\alpha^2 r > 1.
$$
We verify that such a choice for $\gamma$ satisfies the the requirements of Lemma \ref{packingset}. Indeed, since $\epsilon \leq \frac{1}{16}$ and $\alpha \geq 1$, we have $\gamma \leq \frac{1}{2}< 1$.
Further, we assume in the theorem that the right-hand side of (\ref{epsilon1}) is larger than $C_1 r\alpha^2/d_1$, which implies $r/\gamma^2 \leq d_1$ for an appropriate choice of $C_1$.

Let $\chi'_{\alpha/2, \gamma}$ be the set whose existence is guaranteed in Lemma \ref{packingset}, with this choice of $\gamma$ and with $\alpha/2$ instead of $\alpha$. Then we can construct $\chi$  by defining
$$
\chi \triangleq \left\{ X' + \alpha\left(1-\frac{\gamma}{2}\right) \textbf{1}_{d_1 \times d_2} : X' \in \chi'_{\alpha/2, \gamma} \right\},
$$
where $\textbf{1}_{d_1 \times d_2}$ denotes an $d_1$-by-$d_2$ matrix of all ones.
Note that $\chi$ has the same size as $\chi'_{\alpha/2, \gamma}$, i.e.$|\chi|$ satisfies (\ref{packingsetsize}). $\chi$ also has the same bound on pairwise
distances
\begin{equation}
\|X^{(i)} - X^{(j)} \|_F^2 \geq \frac{\alpha^2}{4}\frac{\gamma^2 d_1 d_2}{2} \geq 4 d_1 d_2 \epsilon^2,
\label{distance1}
\end{equation}
for any two matrices $X^{(i)}, X^{(j)} \in \chi$.
Define $\alpha' \triangleq (1-\gamma)\alpha$, then every entry of $X \in \chi$ has $X_{ij} \in \{\alpha, \alpha'\}$. Since we assume $r \geq 4$ in theorem statement, for any $X \in \chi$, we have that for some $X' \in \chi'_{\alpha/2, \gamma}$,
\begin{align*}
\|X\|_* &= \| X' + \alpha\left(1-\frac{\gamma}{2}\right) \textbf{1}_{d_1 \times d_2}\|_* \\
&\leq \frac{\alpha}{2} \sqrt{r d_1 d_2} + \alpha \sqrt{d_1 d_2}
\leq \alpha \sqrt{r d_1 d_2}.
\end{align*}
%
Since the $\gamma$ we choose is less than $1/2$, we have that $\alpha'$ is greater than $\alpha/2$. Therefore, from the assumption that $\beta \leq \alpha/2$, we conclude that $\chi \subset \mathcal{S}$.

Now suppose for the sake of a contradiction that there exists an algorithm such that for any $X \in \mathcal{S}$, when given access to the measurements on $\Omega_0$, returns $\widehat{X}$ such that
\begin{equation}
\frac{1}{d_1 d_2} \|X-\widehat{X}\|_F^2 < \epsilon^2
\label{assumption}
\end{equation}
with probability at least $1/4$. We will imagine running this algorithm on a matrix $X$ chosen uniformly at random from $\chi$. Let
$$
X^* = \arg \min_{Z \in \chi} \|Z- \widehat{X}\|_F^2.
$$
By the same argument as that in \cite{davenport20121}, we can claim that $X^* = X$ as long as (\ref{assumption}) holds.
Indeed, for any $X' \in \chi$ with $X' \neq X$, from (\ref{distance1}) and (\ref{assumption}), we have that
\[
\|X'-\widehat{X}\|_F \geq \|X'-X\|_F-\|X-\widehat{X}\|_F > \sqrt{d_1 d_2} \epsilon.
\]
At the same time, since $X \in \chi$ is a candidate for $X^*$, we have that
\[
\|X^*-\widehat{X}\|_F \leq \|X-\widehat{X}\|_F \leq \sqrt{d_1 d_2} \epsilon.
\]
Thus, if (\ref{assumption}) holds, then $\|X^*-\widehat{X}\|_F < \|X'-\widehat{X}\|_F$ for any $X' \in \chi$ with $X' \neq X$, and hence we must
have $X^*=X$.

Using the assumption that (\ref{assumption}) holds with probability at least
$1/4$, we have that
\begin{equation}
\mathbb{P} (X^* \neq X) \leq \frac{3}{4}.
\label{inequality1}
\end{equation}
We will show that this probability must in fact be large, generating our contradiction.

By a variant of Fano's inequality in \cite{yu1997assouad}, we have
\begin{equation}
\mathbb{P} (X^* \neq X) \geq 1- \frac{\max_{X^{(k)}\neq X^{(l)}} \widetilde{D}(X^{(k)} \| X^{(l)}) +1}{\log |\chi|},
\label{inequality2}
\end{equation}
where \[\widetilde{D}(X^{(k)} \| X^{(l)}) \triangleq \sum_{(i,j)\in \Omega_0}D(X_{ij}^{(k)} \| X_{ij}^{(l)}),\] and the maximum is taken over all pairs of different matrices $X^{(k)}$ and $X^{(l)}$ in $\chi$.
%
For any such pairs $X^{(k)}, X^{(l)} \in \chi$, we know that $D(X_{ij}^{(k)}\|X_{ij}^{(l)})$ is either $0$, $D(\alpha\|\alpha')$, or $D(\alpha'\|\alpha)$ for $(i,j) \in \llbracket d_1 \rrbracket \times \llbracket d_2 \rrbracket$. Define an upper bound on the KL divergence quantities
\[
D \triangleq \max_{X^{(k)}\neq X^{(l)}} \widetilde{D}(X^{(k)}\| X^{(l)}).
\]
By the assumption that $|\Omega_0|=m$, using Lemma \ref{KLdivergence} and the fact that $\alpha' < \alpha$, we have
$$
D \leq \frac{m(\gamma \alpha)^2}{\alpha'} \leq \frac{64 m\epsilon^2}{\alpha'}.
$$
Combining (\ref{inequality1}) and (\ref{inequality2}), we have that
\begin{equation}
\begin{split}
\frac{1}{4} &\leq 1-\mathbb{P}(X \neq X^*) \leq \frac{D+1}{\log |\chi|} \\
&\leq 16\gamma^2 \left(\frac{\frac{64 m\epsilon^2}{\alpha'}+1}{rd_2} \right) \leq 1024\epsilon^2\left(\frac{\frac{64 m\epsilon^2}{\alpha'}+1}{\alpha^2 rd_2} \right).
\end{split}
\label{contradiction}
\end{equation}

We now show that for appropriate values of $C_0$ and $C_2$, this leads to a contradiction.
Suppose $64m\epsilon^2 \leq \alpha'$, then with (\ref{contradiction}), we have that
$$
\frac{1}{4} \leq 1024 \epsilon^2 \frac{2}{\alpha^2 r d_2},
$$
which together with (\ref{epsilon1}) implies that $\alpha^2 r d_2 \leq 32$. If we set $C_0>32$, then this would lead to a contradiction.
Next, suppose $64m\epsilon^2 > \alpha'$, then (\ref{contradiction}) simplifies to
$$
\frac{1}{4} < 1024\epsilon^2 \left( \frac{128m\epsilon^2}{(1-\gamma)\alpha^3 rd_2} \right).
$$
Since $1-\gamma > 1/2$, we have
$$
\epsilon^2 > \frac{\alpha^{3/2}}{1024}\sqrt{\frac{rd_2}{m}}.
$$
Setting $C_2 \leq 1/1024$ in (\ref{epsilon1}) leads to a contradiction. Therefore, (\ref{assumption}) must be incorrect with probability at least $3/4$, which proves the theorem.

\end{proof}

\section{Proofs of Lemma \ref{convergence1} and Lemma \ref{convergence2}}

\begin{lemma}
If $f$ is a closed convex function satisfying Lipschitz condition (\ref{Lipschitz}), then for any $X,Y \in \mathcal{S}$, the following inequality holds:
$$
f(Y) \leq f(X) + \langle \nabla f(X), Y-X \rangle + \frac{L}{2} \|Y-X\|_F^2.
$$
\label{UPlip}
\end{lemma}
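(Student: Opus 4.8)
The plan is to invoke the standard ``descent lemma'' argument: express $f(Y)-f(X)$ as the integral of the directional derivative of $f$ along the segment joining $X$ and $Y$, and control the deviation from the linear model using the Lipschitz bound (\ref{Lipschitz}). The key structural fact is that $\mathcal{S}$ is convex, so the segment $\{X+t(Y-X):t\in[0,1]\}$ lies entirely in $\mathcal{S}$; this is precisely what lets us evaluate $\nabla f$ and apply (\ref{Lipschitz}) at every point of the segment.

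First I would define $g:[0,1]\to\mathbb{R}$ by $g(t)\triangleq f\big(X+t(Y-X)\big)$. By the chain rule $g$ is differentiable with $g'(t)=\langle \nabla f(X+t(Y-X)),\,Y-X\rangle$, and the fundamental theorem of calculus gives
\[
f(Y)-f(X)=g(1)-g(0)=\int_0^1 \langle \nabla f(X+t(Y-X)),\,Y-X\rangle\,dt.
\]
Subtracting the constant $\langle \nabla f(X),\,Y-X\rangle=\int_0^1\langle \nabla f(X),\,Y-X\rangle\,dt$ from both sides yields
\[
f(Y)-f(X)-\langle \nabla f(X),\,Y-X\rangle=\int_0^1 \langle \nabla f(X+t(Y-X))-\nabla f(X),\,Y-X\rangle\,dt.
\]

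Next I would bound the integrand. By the Cauchy--Schwarz inequality for the Frobenius inner product, followed by the Lipschitz hypothesis (\ref{Lipschitz}) applied to the pair $X+t(Y-X),\,X\in\mathcal{S}$,
\[
\langle \nabla f(X+t(Y-X))-\nabla f(X),\,Y-X\rangle \leq \|\nabla f(X+t(Y-X))-\nabla f(X)\|_F\,\|Y-X\|_F \leq Lt\,\|Y-X\|_F^2.
\]
Integrating over $t\in[0,1]$ gives $\int_0^1 Lt\,\|Y-X\|_F^2\,dt=\tfrac{L}{2}\|Y-X\|_F^2$, which is exactly the claimed inequality. I do not expect a genuine obstacle here: the only point requiring care is that the convexity of $\mathcal{S}$ (not merely its containing $X$ and $Y$) is what licenses using (\ref{Lipschitz}) along the whole segment and guarantees $g$ is well defined; the closedness/convexity of $f$ plays no essential role beyond ensuring $f$ is finite and continuous on the segment. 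Everything else is routine calculus.
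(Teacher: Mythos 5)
Your proof is correct and follows essentially the same route as the paper's: both write $f(Y)-f(X)-\langle\nabla f(X),Y-X\rangle$ as the integral of $\langle\nabla f(X+t(Y-X))-\nabla f(X),Y-X\rangle$ over $t\in[0,1]$, then apply Cauchy--Schwarz and the Lipschitz condition to get the factor $Lt\|Y-X\|_F^2$ and integrate. Your added remark about convexity of $\mathcal{S}$ licensing the use of (\ref{Lipschitz}) along the whole segment is a point the paper leaves implicit, but the argument is the same.
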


\begin{proof}
Let $Z$ = $Y-X$, then we have
\begin{equation}
\begin{split}
 f(Y) = &~ f(X) +  \langle \nabla f(X), Z \rangle \\
 & ~+ \int_0^1 \langle \nabla f(X+tV)- \nabla f(X), Z \rangle ~dt \\
\leq &~ f(X) +  \langle \nabla f(X), Z \rangle \\
& ~ + \int_0^1 \|f(X+tV)- \nabla f(X)\|_F\|Z\|_F~ dt \\
\leq &~ f(X) +  \langle \nabla f(X), Z \rangle + \int_0^1 Lt \|Z\|_F^2~ dt \\
= & ~ f(X) + \langle \nabla f(X), Y-X \rangle + \frac{L}{2} \|Y-X\|_F^2,
\end{split} \nonumber
\end{equation}
where we use Taylor expansion with integral remainder in the first line, the fact that dual norm of Frobenius norm is itself in the second line and Lipschitz condition
in the third line.
\end{proof}


In the following, proofs for Lemma \ref{convergence1} and Lemma \ref{convergence2} use results from \cite{Ghaoui2010}.

\begin{proof}[Proof of Lemma \ref{convergence1}]
As is well known, proximal mapping of a $Y \in \mathcal{S}$ associated with a closed convex function $h$ is given by
$$
\mbox{prox}_{th}(Y) \triangleq \arg \min_{X} \left( t \cdot h(X) + \frac{1}{2}\| X-Y \|_F^2 \right),
$$
where $t>0$ is a multiplier. In our case, $h(P) = \mathbb I_{\mathcal{S}}(P)$. Define for each $P \in \mathcal{S}$ that
$$
G_t(P) \triangleq \frac{1}{t} \left( P - \mbox{prox}_{th} \left(P-t\nabla f(P)\right) \right),
$$
then we can know by the characterization of subgradient that
\begin{equation}
G_t(P) - \nabla f(P) \in \partial h(P),
\label{subgradient}
\end{equation}
where $\partial h(P)$ is the subdifferential of $h$ at $P$.
Noticing that $P-tG_t(P) \in \mathcal{S}$, then from Lemma \ref{UPlip} we have
\begin{equation}
f(P-t G_t(P)) \leq f(P) - \langle \nabla f(P), t G_t(P) \rangle+ \frac{t}{2} \|G_t(P)\|_F^2,
\label{UPlip2}
\end{equation}
for all $0\leq t \leq 1/L$.
Define that $g(P) \triangleq f(P) + h(P)$. Combining
(\ref{subgradient}) and (\ref{UPlip2}) and using the fact that $f$ and $h$ are convex functions, we have for any $Z\in \mathcal{S}$ and $0\leq t \leq 1/L$,
\begin{equation}
g(P-tG_t(P)) \leq g(Z) + \langle G_t(P), P-Z \rangle - \frac{t}{2} \|G_t(P)\|_F^2,
\label{eq1}
\end{equation}
which is an analog to inequality $(3.3)$ in \cite{Ghaoui2010}.
Taking $Z=\widehat{M}$ and $P=X_k$ in (\ref{eq1}), then we have for any $k \geq 0$,
\begin{equation}
\begin{split}
g(X_{k+1}) - g(\widehat{M}) \leq &~ \langle G_t(X_{k}), X_k - \widehat{M} \rangle - \frac{t}{2} \|G_t(X_k)\|_F^2 \\
= & ~\frac{1}{2t} \left( \|X_k - \widehat{M}\|_F^2 - \|X_{k+1}-\widehat{M}\|_F^2 \right),
\end{split}
\label{oneiteration}
\end{equation}
where we use the fact that $\langle P,P \rangle = \|P\|_F^2$.
By taking $Z = X_k$ and $P=X_k$ in (\ref{eq1}) we know that $g(X_{k+1}) < g(X_k)$ for any $k \geq 0$. Thus, taking $t=1/L$, we have,
\begin{equation}
\begin{split}
& g(X_k) - g(\widehat{M})
\leq  \frac{1}{k} \sum_{i=0}^{k-1} \left( g(X_{i+1}) - g(\widehat{M}) \right) \\
\leq & ~ \frac{L}{2k} \sum_{i=0}^{k-1} \left(\|X_i - \widehat{M}\|_F^2 - \|X_{i+1}-\widehat{M}\|_F^2  \right)
\leq   \frac{L \|X_0 - \widehat{M}\|_F^2}{2k}.
\end{split}
\end{equation}
Since $X_k \in \mathcal{S}$ for any $k\geq 0$ and $\widehat{M} \in \mathcal{S}$, we have that $h(X_k) =0$ for any $k \geq 0$ and $h(\widehat{M})=0$, which completes the proof.

\end{proof}

\begin{proof}[Proof of Lemma \ref{convergence2}]
The definitions and notations in the proof of Lemma \ref{convergence1} are also valid in the proof of Lemma \ref{convergence2}.

Define that $V_0 \triangleq X_0$ and for any $k\geq 1$,
$$
a_k \triangleq \frac{2}{k+1}, ~V_k \triangleq X_{k-1} + \frac{1}{a_k}\left(X_{k} - X_{k-1}\right).
$$
For any  $0\leq t\leq 1/L$, noticing that
$$
X_k = Z_{k-1} - tG_t(Z_{k-1}),
$$
then we can rewrite $V_k$ as
$$
V_k = V_{k-1} - \frac{t}{a_k}G_t(Z_{k-1}).
$$
Taking $Z=X_{k-1}$ and $Z=\widehat{M}$ in (\ref{eq1}) and making convex combination we have
\begin{equation}
\begin{split}
g(X_k) \leq &~ (1-a_k)g(X_{k-1})+a_k g(\widehat{M}) \\
            &+a_k \langle G_t(Z_{k-1}), V_{k-1}-\widehat{M} \rangle - \frac{t}{2} \|G_t(Z_{k-1})\|_F^2 \\
=& ~(1-a_k)g(X_{k-1})+a_k g(\widehat{M}) \\
&~+ \frac{a_k^2}{2t} \left( \|V_{k-1} - \widehat{M}\|_F^2 - \|V_{k}-\widehat{M}\|_F^2 \right).
\end{split}
\end{equation}
After rearranging terms, we have
\begin{equation}
\begin{split}
&\frac{1}{a_k^2}(g(X_{k})-g(\widehat{M}))+\frac{1}{2t}\|V_k-\widehat{M}\|_F^2 \leq \\
&\frac{1-a_k}{a_k^2}(g(X_{k-1})-g(\widehat{M}))+\frac{1}{2t}\|V_{k-1}-\widehat{M}\|_F^2.
\end{split}
\label{leq1}
\end{equation}
Notice that $(1-a_k)/(a_k^2) \leq 1/(a_{k-1}^2)$ for any $k\geq 1$. Applying inequality (\ref{leq1}) recursively, we have,
\begin{equation}
\frac{1}{a_k^2}(g(X_{k})-g(\widehat{M}))+\frac{1}{2t}\|V_k-\widehat{M}\|_F^2 \leq \frac{1}{2t} \|X_0 - \widehat{M} \|_F^2.
\end{equation}
Taking $t=1/L$, we have
$$
g(X_k) - g(\widehat{M}) \leq \frac{2L \|X_0 - \widehat{M}\|_F^2}{(k+1)^2}.
$$
Since $X_k \in \mathcal{S}$ for any $k\geq 0$ and $\widehat{M} \in \mathcal{S}$, we have that $h(X_k) =0$ for any $k \geq 0$ and $h(\widehat{M})=0$, which completes the proof.

\end{proof}

\end{document}